\documentclass{article}
\usepackage{PRIMEarxiv}
\usepackage[utf8]{inputenc}
\usepackage[T1]{fontenc}
\usepackage{natbib}
\setcitestyle{authoryear,round,citesep={;},aysep={,},yysep={;}}

\date{}

\usepackage{amsmath,amsfonts,bm}

\def\1{\bm{1}}

\DeclareMathAlphabet{\mathsfit}{\encodingdefault}{\sfdefault}{m}{sl}
\SetMathAlphabet{\mathsfit}{bold}{\encodingdefault}{\sfdefault}{bx}{n}

\def\sP{{\mathbb{P}}}

\DeclareMathOperator*{\argmin}{arg\,min}

\usepackage{subcaption}
\DeclareCaptionLabelFormat{arialpanel}{%
  \includegraphics{figures/panels/labels/#2.pdf}%
}
\usepackage{capt-of}
\usepackage{needspace}
\usepackage{wrapfig}
\usepackage{enumitem}
\usepackage{amsthm,amssymb}
\usepackage{booktabs}
\usepackage[section]{placeins}

\usepackage{amsmath,amssymb,amsthm,array,booktabs,graphicx,microtype,multirow,url,xcolor}
\usepackage{colortbl, tcolorbox}
\usepackage{algorithm,algorithmic}
\usepackage{wrapfig}
\usepackage{tikz}
\usetikzlibrary{arrows.meta,backgrounds,fit,positioning}
\usepackage{caption}
\DeclareCaptionFont{tfmcaption}{\fontsize{9.75pt}{10.75pt}\selectfont}
\usepackage[hidelinks]{hyperref}

\newtheorem{theorem}{Theorem}
\newtheorem{proposition}[theorem]{Proposition}
\newtheorem{lemma}[theorem]{Lemma}

\title{Architecture Alignment With Sparse Priors in Tabular Foundation Models}

\author{%
  Tianqi Zhao\textsuperscript{*}\\
  Renmin University of China
  \And
  Tianyi Zhuang\textsuperscript{*}\\
  National University of Singapore
  \And
  Shuo Duan\textsuperscript{*}\\
  National University of Singapore
  \AND
  Guanyang Wang\\
  Rutgers University
  \And
  Yan Shuo Tan\textsuperscript{\textdagger}\\
  National University of Singapore
  \And
  Qiong Zhang\textsuperscript{\textdagger}\\
  Renmin University of China
}

\definecolor{rowtokenblue}{RGB}{75,99,126}
\definecolor{alternatingteal}{RGB}{0,134,125}
\definecolor{coefficientpurple}{HTML}{6F5A9A}
\definecolor{interceptgray}{HTML}{C5C8CC}
\definecolor{residualgray}{HTML}{7D8289}
\definecolor{scopecontext}{HTML}{0072B2}
\definecolor{scopequery}{HTML}{D55E00}
\DeclareRobustCommand{\contextscopeicon}{%
  \tikz[baseline=-0.5ex]{\draw[scopecontext,line width=1.05pt] (0,0) -- (1.5em,0);}%
}
\DeclareRobustCommand{\queryscopeicon}{%
  \tikz[baseline=-0.5ex]{\draw[scopequery,line width=1.05pt] (0,0) -- (1.5em,0);}%
}
\DeclareRobustCommand{\activesourceicon}{%
  \tikz[baseline=-0.5ex]{\draw[black!65,line width=1.05pt] (0,0) -- (1.5em,0);}%
}
\DeclareRobustCommand{\inactivesourceicon}{%
  \tikz[baseline=-0.5ex]{\draw[black!65,line width=1.05pt,dash pattern=on 3.885pt off 1.68pt] (0,0) -- (1.5em,0);}%
}
\DeclareRobustCommand{\dtenicon}{%
  \tikz[baseline=-0.5ex]{%
    \draw[rowtokenblue,dash pattern=on 1.5pt off 1pt,line width=0.5pt] (0,0.18em) -- (1.2em,0.18em);
    \node[circle,draw=rowtokenblue,fill=white,line width=0.5pt,inner sep=1.1pt] at (0.6em,0.18em) {};
    \draw[alternatingteal,dash pattern=on 1.5pt off 1pt,line width=0.5pt] (0,-0.18em) -- (1.2em,-0.18em);
    \node[circle,draw=alternatingteal,fill=white,line width=0.5pt,inner sep=1.1pt] at (0.6em,-0.18em) {};
  }%
}
\DeclareRobustCommand{\dtwentyicon}{%
  \tikz[baseline=-0.5ex]{%
    \draw[rowtokenblue,solid,line width=0.5pt] (0,0.18em) -- (1.2em,0.18em);
    \node[rectangle,draw=rowtokenblue,fill=rowtokenblue,line width=0.5pt,minimum size=3.2pt,inner sep=0pt] at (0.6em,0.18em) {};
    \draw[alternatingteal,solid,line width=0.5pt] (0,-0.18em) -- (1.2em,-0.18em);
    \node[rectangle,draw=alternatingteal,fill=alternatingteal,line width=0.5pt,minimum size=3.2pt,inner sep=0pt] at (0.6em,-0.18em) {};
  }%
}

\begin{document}

\maketitle
\begingroup
\renewcommand{\thefootnote}{\fnsymbol{footnote}}
\footnotetext[1]{Equal contribution.}
\footnotetext[2]{\raggedright Equal supervision. Correspondence to: Yan Shuo Tan (\href{mailto:yanshuo@nus.edu.sg}{\texttt{yanshuo@nus.edu.sg}}) and Qiong Zhang (\href{mailto:qiong.zhang@ruc.edu.cn}{\texttt{qiong.zhang@ruc.edu.cn}}).}
\endgroup

\begin{abstract}
Tabular foundation models (TFMs) are increasingly popular because they deliver strong predictions on new datasets through in-context learning, without task-specific training or extensive tuning.
Yet released TFMs differ simultaneously in their pretraining priors, architectures, and objectives, obscuring their respective inductive biases.
We therefore examine one concrete capability: irrelevant-feature suppression.
Across synthetic tasks and real-world datasets, adding null features causes substantially greater predictive degradation in the row-token model TabDPT, whereas the cell-token alternating-axis model TabPFN v2 and other TFMs remain comparatively stable. 
This gap motivates us to ask whether architecture contributes to irrelevant-feature suppression. 
Because released TFMs remain confounded by other design choices, we train streamlined row-token and alternating-axis transformers under identical sparse-to-dense linear priors.
Exact Bayes analysis shows that sparse prediction requires context-dependent feature gating, whereas the dense endpoint requires only uniform feature weighting.
Consistent with this distinction, the alternating-axis model is substantially closer to the Bayesian optimal predictor on sparse tasks, while the architecture gap becomes negligible on dense tasks; almost all of the sparse gap arises from linear coefficient-estimation error.
Finally, in both the controlled model and frozen TabPFN~v2, we examine the effect of interventions on the feature-attention outputs on the linear coefficients, finding evidence of task-dependent selective routing of computation through feature-indexed pathways.
Together, these results support architecture--prior alignment: preserving an addressable feature axis provides an inductive bias for task-adaptive relevance inference.
Code is available \href{https://github.com/Tianqi-Zhao/ArchitecturePriorTFMs}{here}.
\end{abstract}

\section{Introduction}
\label{sec:introduction}

Tabular foundation models (TFMs) are pretrained predictors that condition on a labeled table and predict labels for new rows in context without updating their parameters.
On small- and medium-sized tabular tasks, recent TFMs achieve state-of-the-art or competitive accuracy, often without task-specific hyperparameter tuning \citep{hollmann2025tabpfn,erickson2025tabarena}.
By replacing dataset-specific fitting and tuning with in-context adaptation, TFMs greatly reduce deployment costs on new datasets, driving rapid development of increasingly capable systems.

Each TFM couples two central design choices: a distribution of pretraining tasks, often called its \emph{data prior}, and an architecture for learning from the sampled context.
TabPFN v2 marked a major change in both choices \citep{hollmann2025tabpfn}.
Whereas TabPFN v1---and later TabDPT---principally represented entire observations as tokens \citep{mueller2022tabpfn,ma2025tabdpt}, TabPFN v2 adopted a cell-based design with alternating attention along the observation and feature axes.
It paired this alternating-axis architecture with a richer pretraining pipeline and delivered a large improvement in benchmark performance.
Subsequent systems have continued to modify both architecture and the data prior \citep{qu2025tabicl,qu2026tabiclv2,grinsztajn2025tabpfn25,priorlabs2026tabpfn3,google2026tabfm,ma2025tabdpt,hosseinzadeh2026tabdptturbo}.

This joint evolution makes architectural progress difficult to interpret.
First, released models typically change the pretraining prior and architecture together, so performance differences cannot be cleanly attributed to either ingredient.
Second, aggregate benchmark scores reveal which model predicts better on average but not which predictive capabilities or inductive biases have changed.
Recent work has begun to isolate the role of the prior by comparing several priors under a fixed architecture and training protocol \citep{zhang2025mitra,turkmen2026priors}.
By contrast, studies of TFM architecture have primarily analyzed released checkpoints through layerwise probing, representation analysis, attention interventions, and ablations \citep{rezaeibalef2026onelayer,bilos2026mechanistic,gupta2026wherecomputation}.
These studies illuminate how trained TFMs compute, but they do not isolate architecture as an experimental variable.
The resulting open question is not simply which architecture performs best, but whether architecture itself affects a specific predictive capability under a matched task prior.
In this paper, we study that question through the following hypothesis:
\begin{center}
\begin{tcolorbox}[colframe=black!50!white, 
    colback=gray!5!white, 
    boxrule=0.5mm, 
    arc=5pt,width=14cm]
\centering
\textit{Cell-token architectures with alternating feature- and observation-axis attention provide a stronger inductive bias toward irrelevant-feature suppression than row-token architectures.}
\end{tcolorbox}
\end{center}
Two observations motivate this hypothesis.
TabPFN v2 is robust to injected irrelevant features \citep{hu2026noiseimmunity} and can sometimes outperform LASSO on sparse linear prediction tasks \citep{zhang2025onemodel}.
We first broaden this evidence by comparing irrelevant-feature robustness across released TFMs.
Across the tested systems, TabPFN v2+ and TabICL v2 remain close to their clean-task performance as null features are added, whereas TabDPT is substantially more sensitive.
The sensitivity decreases across TabDPT releases v1.0--v1.3 but remains well above that of TabPFN v2 in every release (Section~\ref{sec:part-i-production-models} and App.~\ref{app:part-i-tabdpt-versions}).
Because these systems also differ in their pretraining priors and procedures, this comparison establishes a capability gap but cannot attribute it to architecture.

Motivated by this gap, we isolate architecture in a controlled setting.
We construct a matched family of orthogonal linear-regression priors that varies only the number of active features and admits exact posterior predictive means.
At the dense endpoint, the Bayes optimal predictor reduces to a fixed isotropic linear kernel over context observations.
Under every sparse prior, however, each feature--response statistic is modulated by a context-dependent posterior inclusion probability, producing a diagonal feature gate that depends on the full labeled context.
We show that predictors affine in the context responses attain the dense Bayes optimal predictor but incur a strict approximation gap under sparsity, whereas an idealized alternating-axis network can directly approximate the sparse computation.
This contrast identifies a computational alignment rather than an impossibility result for general deep row-token transformers: preserving a feature axis provides an explicit route for inferring and applying task-specific relevance.

We test the resulting prediction---that the architectural advantage should track the need for context-dependent feature gating---by training streamlined row-token and cell-token alternating-axis transformers under identical priors and training protocols.
Against the exact Bayes targets, the alternating-axis model has substantially lower excess prediction error than the row-token model on sparse tasks, while the architecture gap becomes negligible at the dense endpoint.
An exact functional decomposition attributes almost all of the sparse gap to errors in the estimated linear coefficients rather than to intercept or nonlinear query effects, tying the advantage to context-dependent allocation of predictive mass across coordinates.

Finally, towards explaining this sparse gap, we investigate how the alternating-axis architecture may enable computation that is more suited to sparse priors.
Specifically, we use interventions on the feature-attention outputs, stratified by layer, feature, and row-type, and examine its effect on individual linear coefficients.
Across the controlled model and frozen TabPFN v2, we observe evidence of selective computational routing through the feature-indexed pathways provided by feature-attention, that this routing is task dependent, and that computation is heterogeneous across layers.

Together, the observational, controlled, and interventional evidence supports architecture--prior alignment rather than a universal ranking of tabular transformers.
Preserving an explicit feature axis offers a direct route to task-adaptive coordinate gating when the task determines which features matter, while conferring little advantage when the optimal rule weights all coordinates uniformly.

\section{Released TFMs respond differently to irrelevant features}
\label{sec:part-i-production-models}

We first ask how strongly irrelevant features affect released TFMs.
We distinguish two questions: whether adding irrelevant features reduces predictive performance, and whether the fitted predictor remains functionally sensitive to those features.
Our primary contrast is between the row-token model TabDPT and the classical alternating-axis model TabPFN v2; TabICL v2 and TabPFN v3 broaden the comparison beyond these two models.
We consider models released before August 1, 2026.
The appendix adds TabPFN v2.5 (App.~\ref{app:part-i1-null-dose}) and repeats the analysis across TabDPT releases (App.~\ref{app:part-i-tabdpt-versions}).

\phantomsection
\textbf{Paired design and metrics.}
We evaluate each TFM on paired versions of synthetic and real-world \emph{regression} datasets.
Each pair contains a clean dataset with only the raw features and an augmented dataset with additional features constructed to carry no information about the response; we call these added features \emph{null features}.
We vary the null-to-original feature ratio $\rho$ from zero to seven.
For the synthetic datasets, we add independent null features to randomly generated Friedman datasets~\citep{friedman1991mars}, whose nonlinear response depends on five known relevant features and therefore provides ground-truth relevance.
For the real-world evaluation, we add row-permuted copies of raw columns to 42 OpenML regression datasets~\citep{vanschoren2014openml}, preserving their empirical distributions while breaking their association with the target.
We evaluate TabDPT v1.2 without retrieval~\citep{hosseinzadeh2026tabdptturbo}, TabPFN v2/v3~\citep{hollmann2025tabpfn,priorlabs2026tabpfn3}, and TabICL v2~\citep{qu2026tabiclv2}.
Across both settings, the \emph{$R^2$ drop} relative to the paired clean task measures predictive degradation.
To measure functional dependence, we permute one null query feature while holding the context and all other query features fixed.
We normalize the resulting prediction change by the standard deviation of the query targets and call this measure the \emph{null-feature shift}.
An \emph{ideal relevance-adaptive predictor} should incur little $R^2$ drop when null features are added and little prediction shift when a null coordinate is permuted.
App.~\ref{app:part-i-production-models} gives the data construction, evaluated values of $\rho$, metrics, and inference settings.

\begin{figure}[!tbp]
\centering
    \centering
    \includegraphics[width=0.40\linewidth]{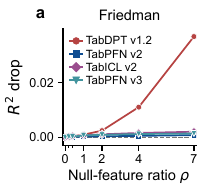}
    \includegraphics[width=0.40\linewidth]{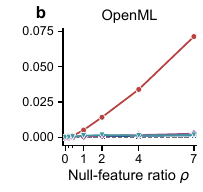}
    \caption{\textbf{Adding null features degrades TabDPT v1.2 more than the other released TFMs}.
    Median query-set $R^2$ drop on (a) synthetic and (b) OpenML datasets as the ratio $\rho$ increases.}
    \label{fig:part-i-performance}
\end{figure}
\phantomsection
\label{sec:part-i1-null-dose}
\textbf{Predictive degradation under null features.}
All models perform similarly on the clean tasks; their robustness separates only as null features are added (App. Fig.~\ref{fig:part-i1-raw-r2}).
At $\rho=7$, TabDPT's median $R^2$ drop is $0.0370$ on Friedman and $0.0713$ on OpenML, compared with $0.0008$ and $0.0014$, respectively, for TabPFN v2.
Thus, TabDPT's median degradation exceeds TabPFN v2's by factors of more than 46 on Friedman and more than 50 on OpenML, while TabICL v2 and TabPFN v3 also remain close to their clean-task performance (Fig.~\ref{fig:part-i-performance}).
A performance gap alone does not show that the fitted predictor depends on the added coordinates.
We therefore turn to the null-feature shift, which directly measures the prediction change caused by perturbing a null query feature.

\begin{figure}[!tbp]
\centering
\centering
\includegraphics[width=0.40\linewidth]{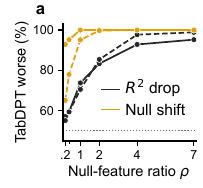}
\includegraphics[width=0.40\linewidth]{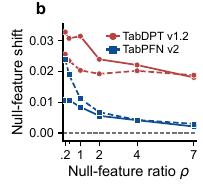}
\caption{\textbf{Null features affect TabDPT more often and more strongly than TabPFN.}
(a) Fraction of paired tasks on which TabDPT v1.2 has a larger $R^2$ drop or null-feature shift than TabPFN v2.
(b) Median null-feature shift for both models across values of $\rho$.
Solid: OpenML; dashed: Friedman.}
\label{fig:part-i-functional}
\end{figure}
\phantomsection
\label{sec:part-i2-functional-sensitivity}
\textbf{Functional dependence on null features.}
The paired comparison shows that the performance gap is widespread rather than driven by a few tasks.
At $\rho=7$, TabDPT's $R^2$ drop exceeds TabPFN v2's on $98.9\%$ of the 350 Friedman task settings and $95.2\%$ of the 42 OpenML datasets (Fig.~\ref{fig:part-i-functional}(a)).
Both models respond much more strongly to known relevant Friedman features or raw OpenML covariates than to the added nulls, providing positive controls for the sensitivity measure (App. Fig.~\ref{fig:part-i2-permutation}).
Nevertheless, TabDPT is consistently more sensitive to the null coordinates (Fig.~\ref{fig:part-i-functional}(a,~b)).
At $\rho=7$ on OpenML, its median null-feature shift is more than 8.5 times that of TabPFN v2, and it exceeds TabPFN v2's on all 42 datasets.
An additional partial-dependence analysis likewise shows that TabDPT depends more strongly on the added null features than TabPFN v2 (App. Fig.~\ref{fig:part-i2-pdp}).
Together, the predictive and functional results show that this difference is large and widespread.

\section{Isolating the architectural contrast with controlled priors}
\label{sec:controlled-priors}

The preceding released-model comparison shows that TabDPT is much more sensitive to irrelevant features than TabPFN v2, but differences in their pretraining data and procedures prevent us from attributing this gap to architecture.
To isolate the architectural contrast, we train streamlined row-token and alternating-axis models under \emph{identical} task priors and training protocols and evaluate them against exact Bayes targets.

\subsection{Sparse and dense priors require different computations}
To make the sparse--dense comparison precise, we seek a task family spanning sparse settings and a dense endpoint for which the Bayes-optimal predictor is available in closed form.
We construct such a family by varying the number of active features in an orthogonal fixed design regression model.

\textbf{Matched priors.}
Fix $n\ge d\ge2$, $v^2,\sigma^2>0$, and consider fixed design regression tasks of the form
\begin{equation}
    y=X\beta+\varepsilon,
    \qquad \varepsilon\sim\mathcal{N}(0,\sigma^2I_n),
    \qquad X^\top X=nI_d.
    \label{eq:orthogonal-context}
\end{equation}
Write $[d]=\{1,\ldots,d\}$.
For $k \in [d]$, let $\mathcal{S}_k=\{A\subseteq[d]:|A|=k\}$ and define a prior $P_k$ by drawing
\begin{equation}
    S\sim\operatorname{Unif}(\mathcal{S}_k),
    \qquad \beta_S| S\sim\mathcal{N}(0,(v^2/k)I_k),
    \qquad \beta_{S^c}=0.
    \label{eq:sparse-prior}
\end{equation}
Here $k$ is the number of active features.
Throughout the controlled analysis, \emph{active} and \emph{inactive} refer specifically to membership in the realized support $S$, whereas \emph{relevant} and \emph{irrelevant} are reserved for broader statements about predictive dependence.
The prior is sparse when $k<d$ and dense when $k=d$.
The scaling by $1/k$ gives $\mathbb{E}[\beta]=0$, $\mathbb{E}\|\beta\|_2^2=v^2$, and $\mathbb{E}[\beta\beta^\top]=(v^2/d)I_d$ for every $k$.
Thus, changing $k$ changes the support structure but not the coefficient covariance or expected signal energy.
We therefore call $\{P_k\}$ a matched prior family.
Together with~\eqref{eq:orthogonal-context}, each $P_k$ specifies a hierarchical Bayesian linear regression model.

\textbf{Posterior predictive means.}
Given observed data $D=(X,y)$ from~\eqref{eq:orthogonal-context}, consider prediction at a fixed covariate vector $x\in\mathbb{R}^d$.
For $y_q=x^\top\beta+\varepsilon_q$, where $\varepsilon_q\sim\mathcal N(0,\sigma^2)$ is independent of the observed data, the Bayes-optimal predictor under squared loss is the posterior predictive mean $f_k^*(D,x)=\mathbb{E}[y_q| D,x]=x^\top\mathbb{E}[\beta| D]$.
Orthogonality makes $c=c(D):=X^\top y/n$ sufficient for $\beta$ and yields the closed-form posterior coefficient mean
\begin{equation}
   \eta_k^*(c):=\mathbb{E}[\beta| D]=\mathbb{E}[\beta| c]
   = \lambda_kq_k(c)\odot c,
    \label{eq:adaptive-bayes-kernel}
\end{equation}
where $q_{k,j}(c)=\sP(j\in S| c)$ is the $j$th element of the posterior inclusion vector $q_k(c)$, $\lambda_k=nv^2/(nv^2+k\sigma^2)$ is a shrinkage factor, and $\odot$ denotes coordinatewise multiplication.
See App.~\ref{app:bayes-posterior} for details.

\textbf{Computational difference between dense and sparse priors.}
The posterior predictive mean in~(\ref{eq:adaptive-bayes-kernel}) exposes a computational difference between the dense and sparse priors.
\begin{itemize}[leftmargin=*,topsep=0pt,parsep=0pt,itemsep=0pt]
    \item Under the dense prior $P_d$, $q_d(c)=\mathbf 1_d$ and $\eta_d^*(c)=\lambda_dc$, so the Bayes rule applies the same shrinkage factor to every feature.
    Its prediction $f_d^*(D,x)$ can be implemented as the linear-kernel regression $(\lambda_d/n)\sum_i\langle x,x_i\rangle y_i$.
    \item Under a sparse prior $P_k$ with $k<d$, the posterior predictive mean instead uses the context-dependent feature gate $q_k(c)$.
    Although $c=X^\top y/n$ is linear in $y$, the exponential normalization in~\eqref{eq:support-posterior}--\eqref{eq:posterior-inclusion} makes $q_k(c)$ nonlinear in $c$.
\end{itemize}

\subsection{Attention architectures and alignment with posterior mean computation}
\label{subsec:architectures}

To test whether keeping features separate provides a more direct route to the feature weights required by sparse prediction, we compare controlled row-token and alternating-axis models.
Fig.~\ref{fig:arch-and-construction}(a,b) summarizes the streamlined row-token and cell-token alternating-axis architectures used in the controlled comparison.
\begin{figure}[!htbp]
    \centering
    \includegraphics[width=\linewidth]{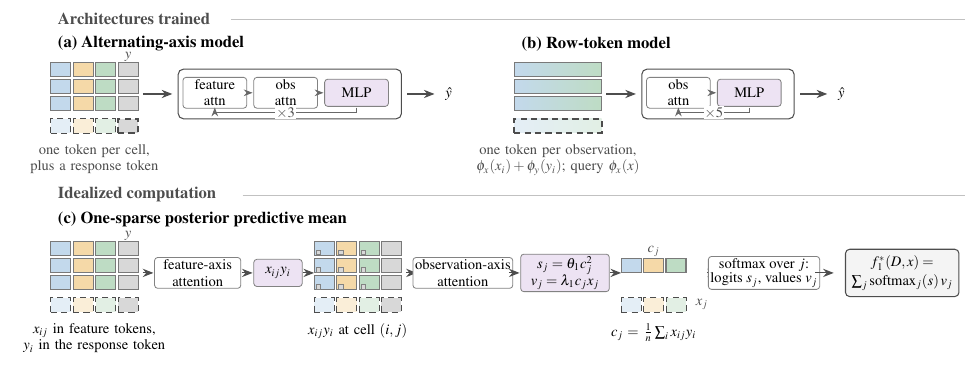}
    \caption{\textbf{Architectures and an idealized one-sparse Bayes posterior predictive mean.}
    \textbf{(a--b)} The alternating-axis and row-token models used in the controlled comparison.
    \textbf{(c)} For $k=1$, response routing and within-feature aggregation form $c_j$, and feature-axis softmax combines $\lambda_1c_jx_j$ using logits proportional to $c_j^2$.
    Panel (c) is a mathematical construction, not an identified circuit in the trained models (App.~\ref{app:feature-construction}).
    Colors denote features; dashed and heavy outlines mark query and readout tokens, respectively; shading denotes pointwise maps.}
    \label{fig:arch-and-construction}
    \vspace{-\baselineskip}
\end{figure}
Details about the architectures are given in App.~\ref{app:network-architecture}.
Both models use the same embedding width and similar parameter counts, providing representative implementations of the two attention layouts at a comparable model scale.
Differences in depth, token count, and computational cost remain inherent to the comparison.
App.~\ref{app:iv-protocol} gives the full configurations and parameter counts.

\textbf{Why row-token architecture can handle dense priors but may struggle with sparse ones.}
We show below that any fixed-kernel regression cannot express the posterior predictive mean under a sparse prior.
\begin{theorem}[Response-affine barrier]
\label{thm:fixed-kernel-gap}
Fix $X^\top X=nI_d$ and a nonzero prediction point $x$. Let
\begin{equation}
    \mathcal F_{\mathrm{aff}}(x)
    =\{f(D,x)=a(X,x)+b(X,x)^\top y:
       a(X,x)\in\mathbb R,\ b(X,x)\in\mathbb R^n\},
    \label{eq:fixed-kernel-class}
\end{equation}
where $a$ and $b$ may depend arbitrarily on $X$, $x$, and known prior
parameters, but not on $y$. Under every $P_k$, the unique minimizer of squared
prediction risk in $\mathcal F_{\mathrm{aff}}(x)$ is
$f_{\mathrm{aff}}^*(D,x)=\lambda_dx^\top c$. It equals $f_d^*(D,x)$, whereas
for every $k<d$,
\begin{equation}
    \inf_{f\in\mathcal F_{\mathrm{aff}}(x)}
    \mathbb E_D\!\left[\{f(D,x)-f_k^*(D,x)\}^2\right]>0.
    \label{eq:fixed-kernel-gap}
\end{equation}
\end{theorem}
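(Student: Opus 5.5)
Since $X$ and $x$ are fixed, the only randomness comes from $y$. The class $\mathcal F_{\mathrm{aff}}(x)$ is therefore the set of affine functions of $y$, equivalently $a+b^\top y$ with $a\in\mathbb R$ and $b\in\mathbb R^n$ constants. The plan is to treat the risk minimization as a best linear predictor computation in $L^2$, which needs only first and second moments, and then to argue that the sparse Bayes rule is not affine in $y$.

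\textbf{Step 1: the minimizer depends only on second moments.} Under every $P_k$ we have $\mathbb E[\beta]=0$ and $\mathbb E[\beta\beta^\top]=(v^2/d)I_d$, so $\mathbb E[y]=0$ and $\operatorname{Cov}(y)=(v^2/d)XX^\top+\sigma^2 I_n$. Also $\operatorname{Cov}(y,y_q)=(v^2/d)Xx$. The squared prediction risk $\mathbb E[(y_q-a-b^\top y)^2]$ is strictly convex in $(a,b)$, because $\operatorname{Cov}(y)\succ 0$. Hence the minimizer is unique: $a=0$ and $b=\operatorname{Cov}(y)^{-1}(v^2/d)Xx$.

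\textbf{Step 2: evaluate $b$ using orthogonality.} Write $XX^\top=nP$, where $P=XX^\top/n$ is the projection onto the column space of $X$. Then $\operatorname{Cov}(y)=(nv^2/d+\sigma^2)P+\sigma^2(I-P)$. Since $Xx$ lies in the range of $P$,
\[
b=\frac{v^2/d}{nv^2/d+\sigma^2}Xx=\frac{\lambda_d}{n}Xx .
\]
This gives $f_{\mathrm{aff}}^*=\lambda_d x^\top X^\top y/n=\lambda_d x^\top c$. These moments do not depend on $k$, so the same minimizer holds under every $P_k$. Under $P_d$ the prior is Gaussian, and the posterior mean $f_d^*=\lambda_d x^\top c$ is already affine. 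It therefore coincides with $f_{\mathrm{aff}}^*$.

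\textbf{Step 3: the strict gap for $k<d$.} By the $L^2$ projection identity,
\[
\mathbb E[(y_q-f)^2]=\mathbb E[(y_q-f_k^*)^2]+\mathbb E[(f_k^*-f)^2]
\]
for every $f$. So the infimum in \eqref{eq:fixed-kernel-gap} equals $\mathbb E[(f_k^*-\lambda_dx^\top c)^2]$, and this infimum is attained. It remains to show $f_k^*\neq\lambda_d x^\top c$ on a set of positive probability.

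Both sides are continuous functions of $c$, and $c$ has a density on all of $\mathbb R^d$. The reason is that $c=\beta+X^\top\varepsilon/n$ and $X^\top\varepsilon/n\sim\mathcal N(0,(\sigma^2/n)I_d)$. It therefore suffices to find a single point $c$ where $\lambda_kq_k(c)^\top(x\odot c)\neq\lambda_d x^\top c$.

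Pick a coordinate $j$ with $x_j\neq0$ and take $c=te_j$. As $t\to\infty$, the explicit support posterior of App.~\ref{app:bayes-posterior} gives $q_{k,j}(te_j)\to1$. The prediction is then approximately $\lambda_k x_j t$, which differs from $\lambda_d x_j t$ because $\lambda_k>\lambda_d$ when $k<d$.

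If one wants to avoid the asymptotics, an alternative is to compare Taylor expansions at $c=0$. There $q_k=(k/d)\mathbf 1+O(\|c\|^2)$ by symmetry, and the slopes at the origin also fail to match $\lambda_d$ in general. The asymptotic route is the cleaner one, though. I expect the main obstacle to be verifying $q_{k,j}(te_j)\to1$ rigorously from the explicit formula \eqref{eq:posterior-inclusion}. This requires checking that supports containing $j$ carry a likelihood factor $\exp\{\tfrac{n\lambda_k}{2\sigma^2}t^2\}$, which dominates the bounded contributions from supports that do not contain $j$.
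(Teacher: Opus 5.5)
Your proposal is correct and follows essentially the same route as the paper: the matched first two moments give the unique best affine predictor $\lambda_d x^\top c$ through the eigenvector (projection) structure of $\operatorname{Cov}(y)$, and the conditional-mean identity reduces the infimum to $\mathbb E_D[\{x^\top(\lambda_d c-\eta_k^*(c))\}^2]$. Positivity then follows exactly as you describe, by evaluating at $c=te_j$, where $q_{k,j}(te_j)\to1$ and $\lambda_k>\lambda_d$, together with continuity and the everywhere-positive density of $c$; the paper checks the limit via $q_{k,j}=e^{\theta_kc_j^2}U_j/(e^{\theta_kc_j^2}U_j+V_j)$, and your exponent $n\lambda_k t^2/(2\sigma^2)$ is precisely $\theta_k t^2$.
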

See proof details in App.~\ref{app:fixed-kernel-gap}.
The theorem shows that when the number of active features is strictly smaller than the feature dimension, no function in the affine class can approximate $f_k^*(D,x)$ arbitrarily well.
To connect this barrier to row-token architectures, note that linear-kernel regression corresponds to an idealized one-layer row-token attention computation without softmax~\citep{vonoswald2023icl}. 
Since linear-kernel regression is an instance of the fixed-kernel class covered by Thm.~\ref{thm:fixed-kernel-gap}, this correspondence suggests that the same barrier applies to that idealized row-token model. Although nonlinearities and multiple layers can improve expressivity, the expressivity of linear-kernel regression remains a useful baseline heuristic for row-token architectures.

\phantomsection
\label{sec:feature-indexed-construction}
\textbf{Why alternating-axis architecture is more aligned with sparse priors.}
For the sparse prior with $k=1$, an alternating-axis circuit implements the Bayes predictor as shown in Fig.~\ref{fig:arch-and-construction}(c).

\textit{Step 1.} Response routing makes each context response $y_i$ available to each feature. 

\textit{Step 2.} Observation-axis aggregation forms $c_j$. 

\textit{Step 3.} Feature-axis softmax combines values $\lambda_1c_jx_j$ using logits proportional to $c_j^2$.
Because feed-forward networks can approximate the required continuous pointwise maps on compact domains, an idealized alternating-axis network can approximate this one-sparse construction arbitrarily well (Prop.~\ref{prop:one-sparse-alternating-realization}, App.~\ref{app:feature-construction}).
The appendix extends the construction to every $k$ using three routing and aggregation stages and $O(k)$ pooled statistics.



\subsection{Controlled experiments to test the architectural contrast}
\label{sec:controlled-experiments}
\textbf{Setup.}
For each $(d,k)$ pair, with $d\in\{10,20\}$ and $k\in\{1,d/2,d\}$, we train separate row-token and alternating-axis models on tasks drawn from $P_k$.
We set $v^2=1$ and $\sigma=1.5$.
This choice keeps the prior-averaged signal-to-noise ratio fixed across values of $k$ and $d$.
We evaluate both architectures on the same $T=32{,}000$ tasks per condition.
Each task has an orthogonal context with 50--100 observations and independent queries $x\sim\mathcal N(0,I_d)$.
See App.~\ref{app:iv-protocol} for pretraining details.

\begin{figure}[!tbp]
\centering
    \centering
    \includegraphics[width=.45\linewidth]{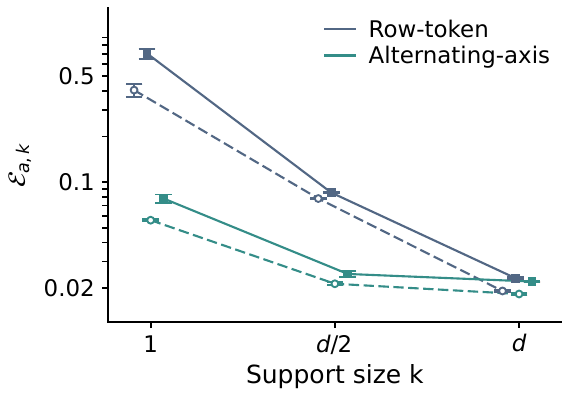}
    \caption{\textbf{The sparse advantage contracts at the dense endpoint.}
    Task-normalized Bayes approximation error (log scale) for the row-token and alternating-axis models.
    \dtenicon\ denotes $d=10$; \dtwentyicon\ denotes $d=20$.
    Markers and error bars show the mean $\pm$ SD across three training seeds, each evaluated on 32,000 shared test tasks.}
    \label{fig:iv-theory-performance}
\end{figure}
\textbf{Error relative to Bayes target.}
\label{sec:iv1-architecture-prior}
Let $\widehat f_{a,k}$ denote the predictor with architecture $a$ trained under $P_k$.
We now describe how to define an aggregate measure of excess prediction error of $\widehat f_{a,k}$ relative to the posterior predictive mean $f_k^*$.
For each test context $D$, we evaluate its mean squared error to $f_k^*$ (where the mean is taken with respect to an independent query $x\sim\mathcal N(0,I_d)$), normalized by the energy of $f_k^*$, and then take a further expectation over contexts:
\begin{equation}
\label{eq:iv-relative-error}
\mathcal E_{a,k}
:=\mathbb E_D\!\left[ \frac{\mathbb E_x\!\left[\|\widehat f_{a,k}(D,x)-f_k^*(D,x)\|^2\right]}{\mathbb E_x\!\left[\|f_k^*(D,x)\|^2\right]} \right]
.
\end{equation}
App.~\ref{app:iv-metrics} gives the empirical estimator of this quantity.

The empirical error values, averaged over 32,000 test tasks, are shown in Fig.~\ref{fig:iv-theory-performance}.
We see that the alternating-axis model has substantially lower excess error under sparse priors, with a gap of $\mathcal E_{\operatorname{row},k} -  \mathcal E_{\operatorname{alternating},k} \approx 0.34$ (respectively $\approx 0.627)$ at $k=1$ and $d=10$ (respectively $d=20$).
The gap is positive for all three training seeds, and also holds conditioned on $D$ across nearly all test tasks (App.~\ref{app:iv-performance}).
However, for dense priors, this gap becomes essentially negligible.

\textbf{Error gap is due to linear coefficient estimation.}
\label{sec:iv2-functional-accounting}
Since $f_k^*(D, x)$ is a linear function of $x$, it is insightful to decompose the difference $\widehat f_{a,k}(D,x)-f_k^*(D,x)$ into a linear term and a nonlinear residual.
This allows us to examine how much inaccuracy is due to failing to estimate the coefficients correctly, and how much is due to lying outside the linear model class.
Aggregating this analysis across architectures then gives further insight into what contributes to their performance gap over sparse priors.
We first state a general decomposition.

\begin{lemma}[Prediction error decomposition]
\label{lem:coefficient-accounting}
Let $f$ be square-integrable, and let $x\sim\mathcal N(0,I_d)$ be independent of $D$.
For any context $D$, define the best linear predictor coefficients
\begin{equation}
\label{eq:blp_coefs}
    (\zeta_f(D),\eta_f(D))
    :=\argmin_{\zeta\in\mathbb{R},\,\eta\in\mathbb{R}^d}
    \mathbb{E}_x\!\left[(f(D,x)-\zeta-x^\top\eta)^2\right],
\end{equation}
and the nonlinear residual $r_f(D,x)=f(D,x)-\zeta_f(D)-x^\top\eta_f(D)$.
Then we have
\begin{equation}
\label{eq:bayes-error-decomposition}
\mathbb E_x[(f(D,x)-f_k^*(D,x))^2] =
        \underbrace{\|\eta_f(D)-\eta_k^*(c)\|_2^2}_{\text{coefficient error}}
        +\underbrace{\zeta_f(D)^2}_{\text{intercept error}}
        +\underbrace{\mathbb E_x[r_f(D,x)^2]}_{\text{nonlinearity error}}.
\end{equation}
\end{lemma}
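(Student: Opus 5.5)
This is an $L^2(\mathcal N(0,I_d))$ orthogonal-projection argument carried out with $D$ held fixed. Condition on $D$ throughout, so that $f(\cdot):=f(D,\cdot)$ is a square-integrable function of $x$ alone. Let $\mathcal L=\operatorname{span}\{1,x_1,\ldots,x_d\}$. Because $x\sim\mathcal N(0,I_d)$, these $d+1$ functions are orthonormal in $L^2$: $\mathbb E_x[1]=1$, $\mathbb E_x[x_j]=0$ and $\mathbb E_x[x_jx_l]=\delta_{jl}$.

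\textbf{Step 1: identify the projection.} The minimization problem~\eqref{eq:blp_coefs} is the orthogonal projection of $f$ onto $\mathcal L$. By orthonormality its solution is explicit: $\zeta_f(D)=\mathbb E_x[f(D,x)]$ and $\eta_f(D)=\mathbb E_x[x\,f(D,x)]$. This also shows the argmin is unique and well defined, since square-integrability makes both expectations finite. The residual $r_f$ is orthogonal to $\mathcal L$, so $\mathbb E_x[r_f]=0$ and $\mathbb E_x[x\,r_f]=0$.

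\textbf{Step 2: split the error.} The Bayes target $f_k^*(D,x)=x^\top\eta_k^*(c)$ lies in $\mathcal L$, with zero intercept, by~\eqref{eq:adaptive-bayes-kernel}. Hence
\[
f-f_k^*=\zeta_f+x^\top(\eta_f-\eta_k^*)+r_f .
\]
The first two terms lie in $\mathcal L$ and the third is orthogonal to $\mathcal L$.

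\textbf{Step 3: expand the square.} Squaring and taking $\mathbb E_x$, every cross term vanishes. The term $\zeta_f\,x^\top(\eta_f-\eta_k^*)$ has mean zero because $\mathbb E_x[x]=0$. The terms involving $r_f$ vanish by the orthogonality from Step 1. The surviving diagonal terms are $\zeta_f^2$, $\mathbb E_x[r_f^2]$, and
\[
\mathbb E_x[(x^\top u)^2]=u^\top I_d u=\|u\|_2^2 \quad\text{with } u=\eta_f-\eta_k^*(c).
\]
Together these give~\eqref{eq:bayes-error-decomposition}.

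\textbf{Main obstacle.} There is no substantive obstacle. The only points needing care are that square-integrability of $f(D,\cdot)$ holds for each fixed $D$, which justifies the finiteness of all terms and the existence of the projection, and that the identity uses the isotropic covariance $I_d$. Under a general query covariance the coefficient term would instead be a weighted norm, which is why the lemma specifies $x\sim\mathcal N(0,I_d)$.
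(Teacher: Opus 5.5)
Your proposal is correct and follows the paper's proof essentially step for step. Both arguments use the orthonormality of $\{1,x_1,\ldots,x_d\}$ under $\mathcal N(0,I_d)$ to obtain $\zeta_f(D)=\mathbb E_x[f(D,x)]$, $\eta_f(D)=\mathbb E_x[xf(D,x)]$ and the residual orthogonality, and then expand the mutually orthogonal three-term decomposition of $f-f_k^*$ to get the identity.
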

The proof and its connection to excess prediction risk appear in App.~\ref{app:coefficient-accounting}.
We estimate the three terms for each context and normalize them as in~\eqref{eq:iv-relative-error}.
App.~\ref{app:iv-projection} gives the estimation details.

\begin{figure}[!htbp]
\centering
\includegraphics[width=0.25\linewidth]{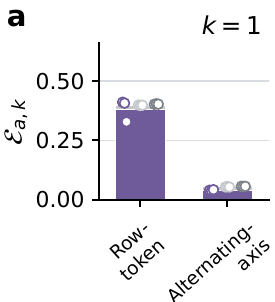}
\includegraphics[width=0.25\linewidth]{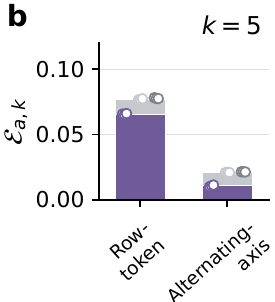}
\includegraphics[width=0.25\linewidth]{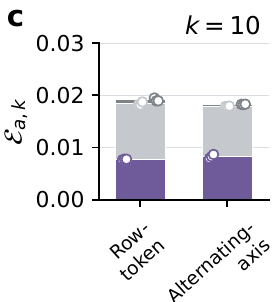}
\vspace{-0.3\baselineskip}
\caption{\textbf{Task-normalized error components at $\boldsymbol{d=10}$.}
\textcolor{coefficientpurple}{\rule{0.8em}{0.8em}} coefficient;
\textcolor{interceptgray}{\rule{0.8em}{0.8em}} squared intercept;
\textcolor{residualgray}{\rule{0.8em}{0.8em}} residual.
Bars stack these components for one architecture.
Each component is averaged equally over 32,000 shared test datasets per seed.
Bars show three-seed means, and points show seed averages.
The total bar height equals the sum of the three components.}
\label{fig:iv-affine-levels-d10}
\end{figure}

\textbf{Sparse priors.}
At $d=10$, the between-architecture coefficient-error difference accounts for 98.4\% of the mean prediction-error gap at $k=1$ and 97.5\% at $k=5$ (Fig.~\ref{fig:iv-affine-levels-d10}; App. Fig.~\ref{fig:app-iv-affine-accounting}).
App.~\ref{app:iv-support} further separates coefficient error over active and inactive features.
The sparse architecture gap therefore comes almost entirely from the row-token architecture's much larger coefficient error.


\textbf{Dense prior.}
At $k=d$, the Bayes coefficient map is the uniform rule $\eta_d^*(c)=\lambda_dc$.
The row-token model has slightly lower mean coefficient error than the alternating-axis model at both $d=10$ and $d=20$.
Its larger intercept and residual errors offset this coefficient advantage.
The alternating-axis model therefore retains a small overall prediction advantage (App.~\ref{app:iv-component-results}).

In summary, the alternating-axis advantage under sparse priors lies primarily in estimating the context-dependent coefficients, rather than in representing the prediction’s linear dependence on the query. At the dense endpoint, where the Bayesian coefficient map reduces to uniform shrinkage, the coefficient advantage disappears. We next try to understand why there is such an accuracy gap in coefficient estimation.

\section{Tracing attention-mediated computation of linear coefficients}
\label{sec:cross-scale-gating}
Section~\ref{sec:controlled-experiments} shows that alternating-axis models approximate the sparse Bayesian predictor more accurately, primarily through better estimation of its context-dependent coefficients.
We now investigate how feature attention contributes to this computation.
We hypothesize that \emph{feature-axis attention supports task-dependent, selectively routed computation of linear coefficients}: attenuating messages associated with a feature should preferentially affect its own coefficient, with stronger effects for active features.
We further examine where this control is concentrated across layers and between context and query messages.
We test this in the simplified alternating-axis model and then apply the same analysis to pretrained TabPFN~v2.

\subsection{Feature-message interventions and coefficient responses}
To examine the hypothesis, we define interventions on internal quantities of the computation graph and observe their effect on the fitted linear coefficients, as defined in \eqref{eq:blp_coefs}.
We perform this analysis for both the simplified alternating-axis architecture from Section~\ref{subsec:architectures} as well as frozen TabPFN~v2.

\textbf{Intervention.}
For one feature-attention head within a row, let $\alpha_{pg}$ be the attention weight from source token $g$ to receiver $p$, and let $v_g$ be its value vector.
Attenuating source feature $j$ replaces the weighted sum by
\[
 \sum_g\alpha_{pg}v_g
 \quad\longmapsto\quad
 \sum_{g\ne j}\alpha_{pg}v_g+(1-\delta)\alpha_{pj}v_j.
\]
We apply this change to every receiver and head at one selected layer $\ell$, using $\delta=0.1$ unless stated otherwise.
The \emph{context-only} intervention changes context rows only; the \emph{query-only} intervention changes query rows only.
Attention weights are unchanged and not renormalized, residual connections remain intact, and downstream computation is rerun.
App.~\ref{app:cross-scale-protocol} gives the equivalent implementation after the output projection.
The operation is illustrated in Fig.~\ref{fig:message-intervention}.

\begin{figure}[!htb]
\centering
\includegraphics[width=\linewidth]{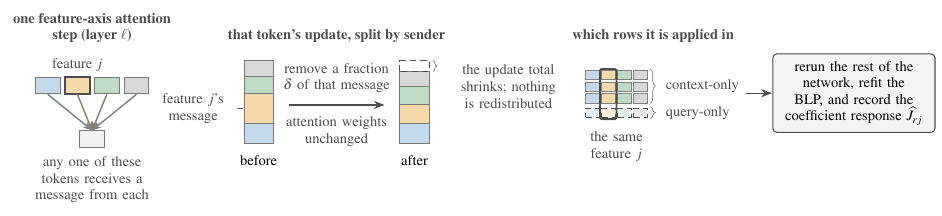}
\caption{\textbf{Feature-message attenuation in the simplified alternating-axis model.}
Source feature $j$'s contribution is scaled by $1-\delta$ at every receiver
and head, either in context rows or in query rows.
Other sources' contributions at that operation and the residual pathway are unchanged.
Bar sizes are schematic; App.~\ref{app:cross-scale-protocol} gives implementation details.}
\label{fig:message-intervention}
\end{figure}


\textbf{Coefficient response.}
Because the fitted model is almost perfectly linear, as shown in Fig.~\ref{fig:iv-affine-levels-d10}, 
we investigate the effect of the intervention on the estimated linear coefficients.
Let $a\in\{\mathrm{context},\mathrm{query}\}$ denote the intervened row type.
Let $\widehat\eta_{0,r}(D)$ and $\widehat\eta_{\delta,\ell, j,r}^{\,a}(D)$ be the baseline and intervened coefficients for query feature $r$, estimated on the same queries, and
define the coefficient response per unit attenuation by
\begin{equation}
 \widehat J_{rj}^{(\ell,a)}
 =\{\widehat\eta_{0,r}(D)-\widehat\eta_{\delta,\ell, j,r}^{\,a}(D)\}/\delta.
 \label{eq:nano-route-matrix-estimate}
\end{equation}
We call $j$ the source feature index and $r$ the target feature index.


\textbf{An illustrative coefficient-response matrix.}
Fig.~\ref{fig:message-response-matrices} shows query-only interventions for one task, selected by proximity to the median Bayes coefficient norm.
Column $j$ corresponds to the attenuated feature and row $r$ to the affected coefficient: diagonal entries indicate own-feature effects, while off-diagonal entries indicate effects on other features.
In this example, the largest final-layer response lies on the active source's own coefficient.
The separate color scales show localization within each layer, not relative response magnitudes across layers.
Context-only matrices are reported in App.~\ref{app:cross-scale-reconstruction}.
\begin{figure}[!ht]
\centering
\includegraphics[width=0.32\linewidth]{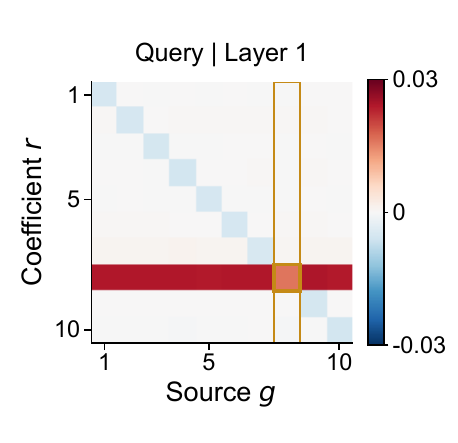}
\includegraphics[width=0.32\linewidth]{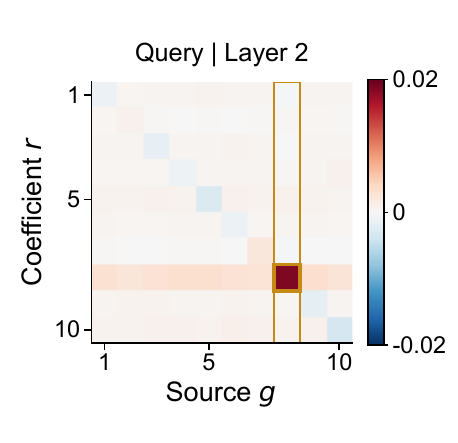}
\includegraphics[width=0.32\linewidth]{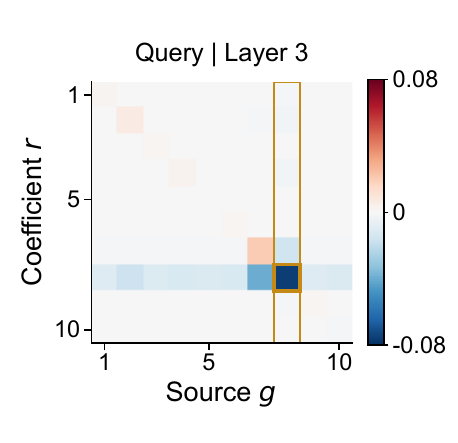}
\vspace{-0.5\baselineskip}
\caption{\textbf{Coefficient responses to query-message attenuation.}
Alternating-axis model, $d=10$, $k=1$, $\delta=0.1$.
The task is nearest the median Bayes coefficient norm among the 128 tasks.
Each matrix column attenuates one source on query rows; matrix rows index affected coefficients.
Color shows signed $\widehat J_{rj}^{(\ell)}$ with a symmetric scale per layer.
Gold outlines mark the active source and its own coefficient.}
\vspace{-\baselineskip}
\label{fig:message-response-matrices}
\end{figure}


\subsection{Evaluating and interpreting selective routing}

To determine whether these patterns persist across tasks, we quantify where the coefficient response occurs and how strong it is, computing each metric separately for context and query interventions and suppressing $a$ below:
\begin{enumerate}[leftmargin=*,topsep=0pt,parsep=0pt,itemsep=0pt]
    \item \textbf{Specificity}, $p_j^{(\ell)}=|\widehat J_{jj}^{(\ell)}|/\sum_r|\widehat J_{rj}^{(\ell)}|$, is the fraction of the total absolute coefficient response occurring on feature $j$ itself. A high value indicates that attenuating source $j$ mainly changes the prediction's linear dependence on that same feature, rather than other features.
    \item \textbf{Sensitivity}, $w_j^{(\ell)}=|\widehat J_{jj}^{(\ell)}|/|c_j|$, is the magnitude of the own-feature coefficient response per unit attenuation, normalized by the observed feature--response statistic $|c_j|$. A high value indicates stronger control over that coefficient relative to this statistic.
\end{enumerate}
For $d\in\{10,20\}$, we compute specificity and sensitivity values on a collection same 128 datasets with sparsity $k=1$.
We summarize effects separately over active and inactive features.

App.~\ref{app:cross-scale-protocol} gives the estimators, denominator handling, and aggregation.

\textbf{Extension to TabPFN v2.}
Each source token in TabPFN~v2 represents a native group of two features.
We therefore attenuate a source group and aggregate its own-feature responses over both constituent coefficients.
A group is active if it contains an active coordinate.
The grouped definitions are given in App.~\ref{app:cross-scale-protocol}; below, a source denotes an individual feature in the simplified model or a feature group in TabPFN~v2.

\begin{figure}[!ht]
 \centering
 \input{figures/layouts/section4_scope_coefficients}
 \caption{\textbf{Support-selective message effects in context and query rows.}
 Panels (a,b): alternating-axis model; (d,e): frozen TabPFN~v2.
 $d=10$, $k=1$, $\delta=0.1$; the same 128 tasks are used throughout.
 \contextscopeicon~denotes context-row interventions; \queryscopeicon~denotes query-row interventions.
 \activesourceicon\ and \inactivesourceicon\ denote active and inactive sources, respectively.
 For each model, panels show mean specificity $P_R$ and median sensitivity $w$, after averaging sources within each role and task.
 Prediction-error panels (c,f) appear in Fig.~\ref{fig:scope-prediction-effects-d10}.
 Bands are pointwise 95\% intervals from 2,000 generator-batch bootstrap resamples.}
 \label{fig:cross-scale-signed}
 \label{fig:cross-scale-allocation}
 \label{fig:cross-scale-v2}
 \label{fig:message-specificity}
\end{figure}

The results in Fig.~\ref{fig:cross-scale-signed} show similar patterns in the simplified alternating-axis model and TabPFN~v2:
\begin{itemize}[leftmargin=*,topsep=0pt,parsep=0pt,itemsep=0pt]
    \item \textbf{Selective computational routing.} Active sources have high specificity, particularly for final-layer query messages (panels~\subref{fig:section4_scope_effects:a},~\subref{fig:section4_scope_effects:d}): attenuating a feature's messages predominantly changes its own coefficient. This supports selective computational routing through feature-indexed pathways.
    \item \textbf{Routing is task dependent.} Active sources have higher specificity than inactive sources at every tested layer under both interventions. Their final-layer query sensitivities are also $5.1\times$ and $2.3\times$ the inactive-source values in the simplified model and TabPFN~v2, respectively (panels~\subref{fig:section4_scope_effects:b},~\subref{fig:section4_scope_effects:e}); both contrasts persist at $d=20$ (App.~\ref{app:cross-scale-row-scope}).
    \item \textbf{Computation is heterogeneous across layers.} Own-feature sensitivity peaks in final-layer query messages; for active sources, the query median is $15.5\times$ and $8.9\times$ the context median in the two models, respectively. Earlier messages exert weaker own-feature control, although this does not imply that those layers are dispensable.
\end{itemize}

The signs of $\widehat J_{jj}$ and $c_j$ do not always agree (App.~\ref{app:cross-scale-direction}), suggesting that the computation is more complex than simply transmitting a positively weighted feature contribution.
Coefficient changes nevertheless closely reconstruct the centered prediction response for final-layer active query interventions (median fidelity $0.976$ and $0.969$; App.~\ref{app:cross-scale-fidelity}).
Predictive consequences and robustness across attenuation strengths and support sizes are reported in Apps.~\ref{app:cross-scale-row-scope}--\ref{app:cross-scale-support-size}.
Selective computational routing is a candidate mechanism underlying the alternating-axis advantage, but these experiments do not establish where relevance is inferred or whether reduced inactive-source influence reflects suppression within attention itself.


\section{Conclusion}
\label{sec:conclusion}

Irrelevant features expose differences among tabular foundation models that are less apparent on clean tasks. Controlled pretraining under matched sparse and dense linear priors shows that architecture contributes to this contrast: the alternating-axis model more accurately approximates the sparse Bayesian predictor, while its advantage contracts sharply at the dense endpoint. Almost all of the sparse prediction gap is attributable to coefficient error rather than intercept or nonlinear query effects. The analytical targets motivate this prior-dependent advantage: sparse prediction requires context-dependent feature weighting, whereas dense prediction requires only uniform shrinkage.

Feature-message interventions provide evidence of selective computational routing in both the simplified alternating-axis model and pretrained TabPFN v2. Messages associated with active features exert more feature-specific coefficient control, with strong own-feature effects concentrated in final-layer query messages. These findings identify pathways that regulate predictive dependence on individual features, without establishing where relevance is inferred or fully explaining the architecture gap. Together, the results support studying architectural inductive biases through controlled task families and explicit computational targets, rather than seeking a universal ranking of tabular architectures.

\section*{Acknowledgments}
Tianqi Zhao and Qiong Zhang are supported by the National Key R\&D Program of China Grant 2024YFA1015800.
Y.S. Tan was supported by NUS Startup Grant A-8000448-00-00 and the Singapore Ministry of Education (MOE) AcRF Tier 1 Grants A-8002498-00-00 and A-8004458-00-00.




\bibliography{references}
\bibliographystyle{references}
\appendix
\renewcommand{\floatpagefraction}{0.75}
\renewcommand{\bottomfraction}{0.9}
\setcounter{topnumber}{4}
\setcounter{bottomnumber}{3}
\setcounter{totalnumber}{6}
\raggedbottom
\makeatletter
\setlength{\@fptop}{0pt}
\setlength{\@fpsep}{18pt plus 2pt minus 2pt}
\makeatother
\section{Related work}
\label{sec:related-work}

\textbf{Tabular foundation models and architectural axes.}
Prior-data fitted networks learn to approximate Bayesian prediction under a
pretraining distribution and apply the learned procedure to new datasets in
context \citep{mueller2022tabpfn,hollmann2025tabpfn}. Modern TFMs differ not
only in scale and pretraining data but also in how they represent a table.
TabPFN v2 combines attention over observations with attention over randomized
feature groups \citep{hollmann2025tabpfn}; TabICL and TabICL v2 use dedicated
column embedding and compression stages before in-context processing
\citep{qu2025tabicl,qu2026tabiclv2}. TabDPT emphasizes row tokens, learned
dataset preprocessing, real-data pretraining, and retrieval for scaling the
context \citep{ma2025tabdpt}. These systems confound architecture with prior,
scale, and preprocessing, motivating our paired production stress test followed
by common-prior training of streamlined architectures.

Feature processing also determines how models scale to wide tables.
TabPFN-Wide uses continued prior-informed pretraining to extend TabPFN to
extreme feature counts and reports improved robustness to feature noise
\citep{kolberg2026tabpfnwide}. This demonstrates that robustness depends on
pretraining exposure as well as architecture. Our objective is complementary:
we hold the prior fixed when comparing architectures, and then vary only the
support structure of that common prior.

\textbf{In-context learning as kernel aggregation or learned optimization.}
Theory for transformer in-context learning has connected attention to linear
regression and gradient-based learning algorithms
\citep{vonoswald2023icl}. For nonlinear targets, transformers can implement
functional gradient descent, yielding similarity-weighted or kernel-like
updates over context examples \citep{cheng2024functional}. Such accounts make
row attention a natural mechanism for aggregating labels from similar context
observations. They do not by themselves explain how the similarity should
adapt when most coordinates are irrelevant. In our controlled family, the
dense Bayes rule is exactly a fixed row kernel, while the sparse Bayes rule
requires its coordinate metric to depend on support evidence from the current
task. This places feature suppression inside the kernel rather than treating a
generic kernel interpretation as either sufficient or deficient.

\textbf{Robustness to noisy and irrelevant features.}
The closest empirical study evaluates TabPFN under injected random and
correlated features, label noise, and varying sample size, finding stable
accuracy and increasingly concentrated feature attention
\citep{hu2026noiseimmunity}. Our production experiments broaden the comparison
across five released TFMs and use paired regression tasks, while the controlled
experiments separate architecture from pretraining and evaluate against known
Bayes functions. Most importantly, attention concentration is not treated as
mechanistic evidence: our source-route intervention tests whether a
route has a feature-aligned effect on the fitted predictor. Work on feature
shift benchmarks studies robustness to changing feature spaces more generally
\citep{cheng2025tabfsbench}; we focus specifically on task-irrelevant
coordinates whose addition should leave the oracle prediction rule unchanged.

\textbf{Mechanistic interpretation of TFMs.}
Recent studies show that similar benchmark accuracy can conceal different
in-context algorithms. \citet{bilos2026mechanistic} find evidence for
attention-weighted row voting in TabPFN v2 and a prototype-like representation
in TabICL v2. Layerwise probing and structural interventions further suggest
that TFM inference proceeds through iterative refinement, with substantial but
non-interchangeable redundancy across depth
\citep{rezaeibalef2026onelayer}. At a finer scale,
\citet{gupta2026wherecomputation} use activation patching, head ablation, and
attention entropy to localize task-dependent computation in TabPFN v2.5.
These works ask where predictions or readout algorithms emerge. We instead
start from a statistically defined computation---context-dependent coordinate
gating---and measure how interventions change the fitted coefficient map.
Because later representations may repair or redistribute an intervention, we
use small dose responses and propagate every perturbation through the complete
remaining network rather than equating a large one-shot ablation with a
specific mechanism.

\textbf{Feature attribution.}
Attribution and mechanism are related but distinct. ExplainerPFN predicts
Shapley-style feature importance from the data distribution without access to
the target model \citep{fonseca2026explainerpfn}, while conventional post-hoc
methods explain a fixed model through perturbations, gradients, or surrogate
functions. Raw attention weights describe routing mass but omit the routed
values, residual pathways, and downstream transformations; they therefore need
not equal feature contributions \citep{jain2019attention,wiegreffe2019attention}.
Our route-to-coefficient matrix is an intervention diagnostic, not a proposed
replacement for SHAP. In full TabPFN v2 it is reported at token-group resolution
because the tokenizer does not expose a one-to-one raw-feature route.

\section{Discussion and limitations}
\label{sec:discussion}

\textbf{Architecture--prior alignment, not a universal winner.}
The paper's three stages support a conditional architectural claim.
The released-model comparison identifies a practically relevant difference in irrelevant-feature suppression, the exact Bayes rules explain why sparse and dense priors demand different computations, and the controlled experiments test whether the two architectures learn those computations equally well.
The alternating-axis advantage under sparsity, together with its contraction at the dense endpoint, is consistent with feature-indexed computation being aligned with context-dependent support inference.
The coefficient decomposition further shows that the sparse prediction gap arises primarily from the learned coefficient map rather than from intercept or nonlinear query effects.

This evidence does not make two-dimensional attention either necessary or sufficient for irrelevant-feature suppression.
Deep row-token models can construct context-dependent metrics, and the fixed-kernel theorem does not cover that unrestricted class.
Conversely, TabPFN v2, TabPFN v2.5, TabICL v2, and TabPFN v3 differ substantially in their tokenization and feature-processing pathways even though all are robust in the null-column experiment.
The supported conclusion is narrower: a coordinate-indexed computational route directly represents the sufficient statistics and posterior gates required for sparse task adaptation, and this alignment can improve learning under a matched training protocol.

\textbf{Functional suppression and internal mechanism.}
We define suppression by the fitted predictor's behavior rather than by exact support recovery.
A null feature is functionally suppressed when changing its query value has little effect on the prediction and when adding many such features produces little loss in predictive performance.
Under the sparse prior, the realized support is hidden, so even a generated inactive coordinate can have nonzero posterior inclusion probability and a nonzero posterior-mean coefficient.
For this reason, the primary coefficient analysis compares the learned map with the posterior mean, while the active--inactive split serves as a secondary diagnostic for locating approximation error.

The input--output diagnostics establish functional dependence but do not locate its internal implementation.
Raw attention weights are also insufficient because a large weight may carry a small value vector, be canceled downstream, or be bypassed by a residual stream.
The route-to-coefficient intervention instead asks how attenuating a specific feature message changes the fitted coefficient map after the perturbation propagates through the remaining network.
Specificity locates the coefficient response, normalized sensitivity measures its magnitude, and $\Delta E$ tests whether the message improves Bayes approximation.
Under the one-sparse prior, active sources have higher specificity throughout the network and greater final-layer sensitivity in both context and query processing. At intermediate sparsity, the active-source advantage persists in specificity and prediction-error changes.
Across support sizes, feature-specific responses persist even at the dense endpoint, whereas active--inactive contrasts identify support selectivity only in sparse settings (App.~\ref{app:cross-scale-support-size}).
These are local effects in fixed networks rather than a complete circuit identification, and norm-matched perturbation controls and replication across alternating-axis training seeds remain outside the present experiments.

\textbf{Tokenization limits raw-feature conclusions.}
The controlled alternating-axis model assigns one address to each raw feature, whereas TabPFN v2 may place multiple raw features in the same token.
A token-level intervention can therefore identify the causal role of a group route but cannot determine which member generated that role.
Padding, singleton passes, and post-hoc splitting could create apparent member-level scores, but each option introduces an additional assumption or changes the model input.
We consequently retain group-level claims for the frozen checkpoint under the evaluated single-estimator configuration and do not extend the intervention result to its full production ensemble.
Any use of these interventions for raw-feature attribution would require a separate identification and validation argument.

\textbf{Statistical and experimental scope.}
The exact theory assumes Gaussian queries, orthogonal context columns, a known support size, and a linear response.
These assumptions remove feature correlation, support-size uncertainty, and nonlinear approximation so that the posterior inclusion calculation is explicit and learned predictors can be compared with the full Bayes function.
The Friedman and OpenML experiments show that the motivating phenomenon extends beyond the orthogonal model, but they do not validate every step of the proposed mechanism outside that model.
With correlated features, relevance can be shared among substitutes and the diagonal metric is no longer the complete statistical object.

The released-model comparison is also observational with respect to architecture because the systems differ in pretraining data, preprocessing, scale, feature limits, and ensemble procedures.
We disable TabDPT retrieval to avoid a separate full-space retrieval failure, while TabPFN v2 uses its native feature subsampling beyond the declared width limit.
The controlled protocol removes many of these confounders by fixing the task prior and evaluation distribution, but differences in depth, token count, and computation remain inherent to the architectures.
A fuller optimization-level comparison would additionally require explicit control of model capacity, training compute, seed variability, feature order, and checkpoint selection.
Sensitivity to these choices and to intermediate support sizes determines how broadly the observed architectural advantage generalizes.

\textbf{Implications.}
The sparse--dense comparison suggests that TFM evaluation should vary the structure of the task prior rather than report only average benchmark accuracy.
It also suggests a design principle: when relevance changes across tasks, preserve an addressable feature axis long enough to accumulate evidence and modulate feature contributions.
This primitive need not be implemented through dense feature attention, because shared feature gates, structured sparse attention, or other permutation-equivariant coordinate modules may provide cheaper routes to the same computation.
The theory identifies the required statistical map, not a single mandatory architecture.

\section{Details for the pretrained production-model experiments}
\label{app:part-i-production-models}

\subsection{Shared paired data construction}
\label{app:part-i-shared-data}

\textbf{Synthetic tasks.}
We use seven unique Friedman configurations.
A sample-size sweep sets $N\in\{250,500,1000,2000\}$ at noise standard deviation $\sigma=1$, while a noise sweep sets $\sigma\in\{0.5,1,2,4\}$ at $N=1000$; the shared $(N,\sigma)=(1000,1)$ configuration is evaluated once.
Each configuration contains 50 independently generated tasks with an 80/20 context/query split and five relevant features sampled from $\operatorname{Unif}(0,1)$.
Both benchmarks use the null-to-original feature ratios $\rho\in\{0,0.2,0.4,1,2,4,7\}$.
At the largest dose, we generate 35 null features drawn i.i.d.\ from $\operatorname{Unif}(0,1)$, the same marginal distribution as the relevant features, so a null column differs from a relevant one only in being independent of the response; context and query rows use separate random streams.
Smaller doses take nested prefixes of the same feature bank, so the relevant-feature values, targets, response noise, split, and previously introduced null columns are identical within a paired comparison.

\textbf{Real tasks and preprocessing.}
We use the 42 OpenML regression datasets listed in Table~\ref{tab:part-i1-openml}.  A deterministic permutation selects at most 1,000 rows, followed by an approximately 80/20 context/query split; smaller tables use all available rows without resampling.  Feature preprocessing is fit only on the context partition.  Constant columns are removed, numeric missing values are median-imputed, and categorical features are ordinal-encoded, with a finite extra code for categories observed only in the query partition.  This experiment-level preprocessing retains at most one numeric column per original feature.  After the null-feature construction below, the resulting matrix is passed directly to each model, whose native inference pipeline may apply additional data-dependent preprocessing or feature expansion internally.

For a real table with $d$ retained features, we add $r=\lceil\rho d\rceil$ null columns.  At fractional doses, a seeded ordering selects $r$ original columns; at integer doses, we concatenate complete copies of the table.  Each copy is permuted by whole rows, preserving dependence among its columns, with independent permutations for context and query.  Smaller doses are nested within larger ones, and a final seeded column permutation removes positional distinctions between original and null features.  Because $r$ is integer-valued, the realized $r/d$ can slightly exceed the requested $\rho$ on narrow tables; cross-dataset summaries use the requested $\rho$.

\textbf{Construction repeats.}
Every Friedman task and every OpenML dataset is augmented five times independently, and these five construction repeats are used in every analysis in this section.  A repeat redraws only the augmentation: the synthetic null columns, and, on real tables, the column selection and the whole-row permutations.  The underlying task is held fixed, so the relevant features, targets, response noise, and context/query split are identical across the five repeats of a given task or dataset.  All randomness is derived deterministically from a single run seed together with the source, the dataset identifier, and the repeat index, so one repeat presents identical inputs to every model and nests across doses; the five repeats are therefore paired across models and mutually independent.  Unless stated otherwise, every summary below averages the five repeats within a task or dataset before aggregating across tasks or datasets.

\begin{table}[!htbp]
\centering
\caption{OpenML regression datasets used in the irrelevant-feature experiments of Section~\ref{sec:part-i-production-models}.  Numbers are OpenML dataset IDs.}
\label{tab:part-i1-openml}
\scriptsize
\resizebox{\linewidth}{!}{%
\begin{tabular}{r l r l r l}
\hline
41021 & Moneyball & 44971 & White Wine & 44989 & King County \\
44956 & Abalone & 44972 & Red Wine & 44990 & Brazilian Houses \\
44957 & Airfoil Self Noise & 44973 & Grid Stability & 44992 & FPS Benchmark \\
44958 & Auction Verification & 44974 & Video Transcoding & 44993 & Health Insurance \\
44959 & Concrete Strength & 44975 & Wave Energy & 44994 & Cars \\
44960 & Energy Efficiency & 44976 & SARCOs & 45012 & FIFA \\
44962 & Forest Fires & 44977 & California Housing & 45402 & Space GA \\
44963 & Physicochemical Protein & 44978 & CPU Activity & 196 & Auto MPG \\
44964 & Superconductivity & 44979 & Diamonds & 230 & Machine CPU \\
44965 & Geographical Origin of Music & 44980 & Kin8nm & 560 & Body Fat \\
44966 & Solar Flare & 44981 & PumaDyn32NH & 42370 & Yacht Hydrodynamics \\
44967 & Student Performance & 44983 & Miami Housing & 44146 & Medical Charges \\
44969 & Naval Propulsion & 44984 & CPS88 Wages & 46286 & Communities and Crime \\
44970 & QSAR Fish Toxicity & 44987 & Socmob & 46292 & Servo \\
\hline
\end{tabular}}
\end{table}

\subsection{Predictive degradation under null features}
\label{app:part-i1-null-dose}

\textbf{Released checkpoints and inference.}
We use the official released regression checkpoints in Table~\ref{tab:part-i1-inference} and the default inference settings of the package versions listed there, including native preprocessing and target handling.  We make two exceptions.  First, we disable TabDPT retrieval while retaining the full context.  Second, we set \texttt{ignore\_pretraining\_limits=true} for TabPFN v2 because its declared 500-feature limit is below the maximum experimental width of 976; its native pipeline then performs estimator-level feature subsampling in these wide conditions.  TabPFN v2.5 and v3 have 2,000-feature limits and require no override.  This subsampling therefore affects only TabPFN v2 on the widest OpenML conditions: the largest Friedman width is 40 features, and TabPFN v2.5 and v3 reach comparable OpenML robustness without any override.  TabPFN v2's stability is thus not an artifact of discarding the added columns.

\begin{table}[!htbp]
\centering
\caption{Official released regression checkpoints and departures from their package-default inference settings.}
\label{tab:part-i1-inference}
\scriptsize
\resizebox{\linewidth}{!}{%
\begin{tabular}{l l l l}
\hline
System & Package & Official checkpoint & Departure from default \\
\hline
TabDPT (retrieval off) & \texttt{tabdpt 1.2.0} &
\texttt{tabdpt1\_2.safetensors} & retrieval disabled \\
TabPFN v2 & \texttt{tabpfn 8.0.8} &
\texttt{tabpfn-v2-regressor.ckpt} & width-limit override \\
TabPFN v2.5 & \texttt{tabpfn 8.0.8} &
\texttt{tabpfn-v2.5-regressor-v2.5\_default.ckpt} & none \\
TabICL v2 & \texttt{tabicl 2.1.1} &
\texttt{tabicl-regressor-v2-20260212.ckpt} & none \\
TabPFN v3 & \texttt{tabpfn 8.0.8} &
\texttt{tabpfn-v3-regressor-v3\_default.ckpt} & none \\
\hline
\end{tabular}}
\end{table}

All reported inference used PyTorch 2.7.1 (CUDA 12.8 build) and NVIDIA GeForce RTX 5090 GPUs.

\textbf{Metrics and aggregation.}
We use the query-set $R^2$ drop $\mathrm{Drop}_{a,t}(\rho)=R^2_{a,t}(0)-R^2_{a,t}(\rho)$.  We first average the five construction repeats within each task or dataset and then report medians; the appendix figures below add the corresponding IQRs, which the main-text figure omits because they describe the spread of the benchmark collection rather than the model comparison.  For the primary two-model comparison, we subtract these repeat-averaged drops within each task: $\Delta_t(\rho)=\mathrm{Drop}_{\mathrm{TabDPT},t}(\rho)-\mathrm{Drop}_{\mathrm{TabPFN\ v2},t}(\rho)$.  Each Friedman configuration contributes 50 tasks, so the pooled synthetic summary weights the seven configurations equally.

\begin{table}[!htbp]
\centering
\caption{Median paired $R^2$ drop at the maximum dose $\rho=7$.  Friedman pools seven configurations with 50 tasks each; OpenML contains 42 datasets.  Brackets contain the 25th and 75th percentiles.}
\label{tab:part-i1-all-models-rho7}
\small
\begin{tabular}{l c c}
\hline
System & Friedman & OpenML \\
\hline
TabDPT (retrieval off) & 0.0370 [0.0246, 0.0562] & 0.0713 [0.0291, 0.1584] \\
TabPFN v2 & 0.0008 [$-$0.00003, 0.0018] & 0.0014 [0.0001, 0.0096] \\
TabPFN v2.5 & $-$0.0002 [$-$0.0012, 0.0002] & 0.0011 [$-$0.0010, 0.0063] \\
TabICL v2 & 0.0021 [0.0011, 0.0055] & 0.0026 [0.0000, 0.0084] \\
TabPFN v3 & 0.0014 [0.0005, 0.0034] & 0.0017 [$-$0.0001, 0.0057] \\
\hline
\end{tabular}
\end{table}

\begin{table}[!htbp]
\centering
\caption{Paired $R^2$-drop difference $\Delta_t(\rho)$ between TabDPT (retrieval off) and TabPFN v2.  Entries are medians [25th, 75th percentiles] across 350 Friedman task-settings or 42 OpenML datasets, after averaging five construction repeats within each task.  Positive values mean greater degradation for TabDPT.}
\label{tab:part-i1-paired-contrast}
\small
\begin{tabular}{c c c}
\hline
$\rho$ & Friedman & OpenML \\
\hline
0.2 & 0.0001 [$-$0.0004, 0.0007] & 0.0003 [$-$0.0011, 0.0040] \\
0.4 & 0.0002 [$-$0.0005, 0.0011] & 0.0005 [$-$0.0015, 0.0083] \\
1 & 0.0006 [$-$0.0002, 0.0020] & 0.0022 [$-$0.0004, 0.0214] \\
2 & 0.0020 [0.0008, 0.0051] & 0.0117 [0.0020, 0.0476] \\
4 & 0.0107 [0.0049, 0.0213] & 0.0270 [0.0120, 0.1143] \\
7 & 0.0363 [0.0238, 0.0550] & 0.0698 [0.0280, 0.1525] \\
\hline
\end{tabular}
\end{table}

\begin{figure}[!tbp]
    \centering
    \includegraphics[width=\linewidth]{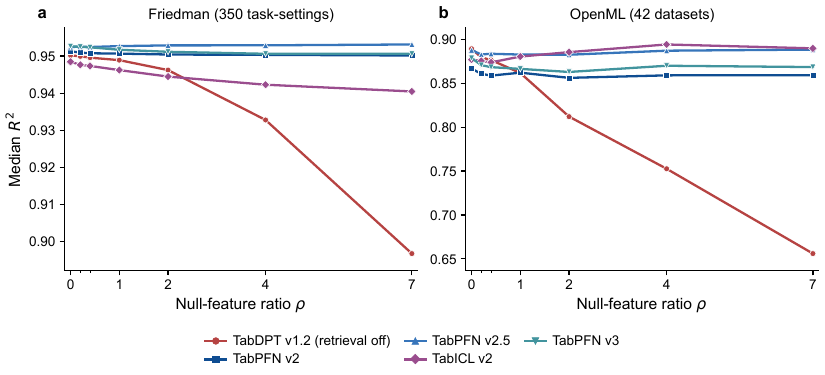}
    \caption{Raw query-set $R^2$ without subtracting the $\rho=0$ baseline.  Points show medians across tasks or datasets after averaging construction repeats.  The five models have comparable clean performance within each benchmark setting.}
    \label{fig:part-i1-raw-r2}
\end{figure}

\textbf{Friedman setting sweeps.}
Figure~\ref{fig:part-i1-dose} resolves the Friedman dose response by observation noise and by total sample size, and repeats the OpenML summary with its IQR.  TabDPT degrades in every setting, most strongly at small $N$.  The other systems stay close to their clean baselines, except that TabICL v2 degrades noticeably at the smallest sample size ($N=250$).

\begin{figure}[!tbp]
    \centering
    \includegraphics[width=\linewidth]{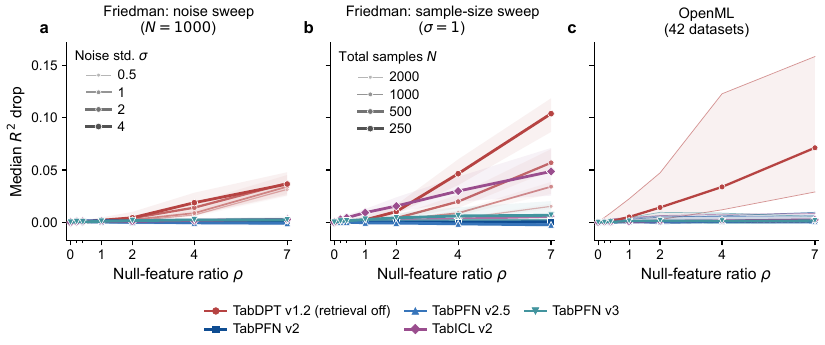}
    \caption{Friedman noise and sample-size sweeps for the $R^2$ drop.  Panels (a) and (b) vary observation noise and total sample size, respectively; shade and line width encode the setting, and ribbons span the IQR across 50 tasks.  Panel (c) reports the median and IQR across 42 OpenML datasets.}
    \label{fig:part-i1-dose}
\end{figure}

\textbf{OpenML width stratification.}
Figure~\ref{fig:part-i1-openml-dimension} separates the OpenML datasets by their pre-augmentation feature count.  The dose-dependent TabDPT degradation is visible in all three strata and is not driven only by the naturally widest tables.

\begin{figure}[!tbp]
    \centering
    \includegraphics[width=\linewidth]{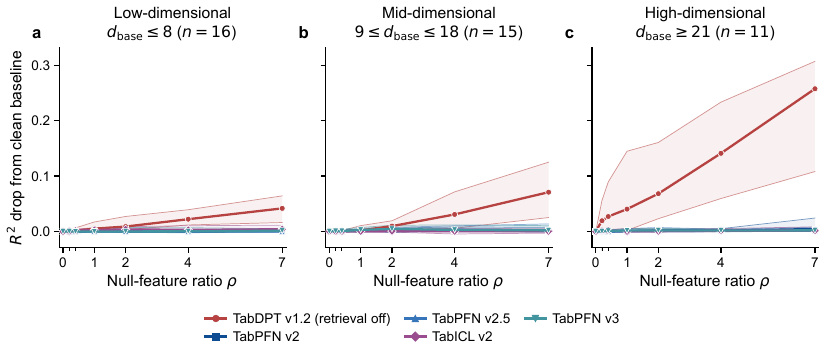}
    \caption{OpenML $R^2$ drop stratified by the number of retained features before augmentation, $d_{\mathrm{base}}$.  Points and ribbons show medians and IQRs across datasets in each stratum.}
    \label{fig:part-i1-openml-dimension}
\end{figure}

\subsection{Functional dependence on null features}
\label{app:part-i2-functional-suppression}

\textbf{Setup.}
This analysis uses the same seven Friedman configurations and 42 OpenML datasets as the dose response above, at all seven ratios and all five construction repeats.  The primary comparison is between TabDPT with retrieval disabled and TabPFN v2, with the checkpoints and inference settings in Table~\ref{tab:part-i1-inference}.

\textbf{Single-feature permutation shift.}
Let $\mathbf y_{\mathrm{query}}$ denote the held-out query targets, $\widehat{\mathbf y}$ the query predictions, and $\widehat{\mathbf y}^{\mathrm{perm}(j)}$ the predictions after permuting query feature $j$ while holding the context and all other query features fixed.
We define the normalized prediction shift as
\begin{equation}
    S_j^{\mathrm{perm}}
    =\frac{\operatorname{RMSE}(\widehat{\mathbf y}^{\mathrm{perm}(j)},\widehat{\mathbf y})}
    {\operatorname{SD}(\mathbf y_{\mathrm{query}})},
    \label{eq:part-i-permutation-shift}
\end{equation}
and call it the null-feature shift when $j$ is an added null coordinate.
Corresponding features use the same permutation across models and nested doses.
The five construction repeats already provide independent draws of the null features, so we do not add a second permutation loop within each repeat.
The known relevant features in the synthetic tasks provide positive controls.
For real data, we compare added null features with original features, whose relevance is unknown.

\textbf{One-way PDP variation.}
We additionally evaluate one-way partial-dependence curves at the empirical quantiles $5\%$, $27.5\%$, $50\%$, $72.5\%$, and $95\%$.
Each synthetic condition includes all five relevant features.
Real conditions include at most five original features, and both benchmarks include at most five null features.
Grids are fixed across models and doses.
Relevant-feature and original-feature grids come from the clean context; a real null feature uses the grid of the original feature from which it was copied.
We summarize feature $j$ by
\begin{equation}
    S_j^{\mathrm{PDP}}
    =\frac{\operatorname{SD}_{u\in\mathcal G_j}
    \left[Q_{\mathrm{eval}}^{-1}\sum_{q=1}^{Q_{\mathrm{eval}}}\widehat f(D,x_q^{j\leftarrow u})\right]}
    {\operatorname{SD}(y_{\mathrm{query}})},
\end{equation}
where $\mathcal G_j$ is its five-point grid, $Q_{\mathrm{eval}}$ is the number of query rows, and $x_q^{j\leftarrow u}$ replaces coordinate $j$ of $x_q$ with $u$ while keeping all other coordinates fixed. The context $D$ and fitted model remain fixed throughout.  We report this summary over two sets of null features.  The first is all null features present at the current dose, so its membership grows with $\rho$.  The second is restricted to the null features introduced at the smallest positive dose $\rho=0.2$; because doses are nested, these same columns are still present at every larger dose, and the restriction is the same for both models.  A change in the first summary can come either from the model's dependence on a given column or from the changing composition of the set being averaged, whereas the second follows one fixed set of columns across the dose grid.

\textbf{Aggregation.}
For each model, dose, and feature role, we average first across features within each task and repeat, and then across the five repeats.  The appendix figures (Figures~\ref{fig:part-i2-permutation}, \ref{fig:part-i2-pdp}, \ref{fig:part-i2-friedman-sweeps}, \ref{fig:part-i2-openml-dimension-permutation}, and~\ref{fig:part-i2-openml-dimension-pdp}) report medians and IQRs across tasks or datasets; the main-text figure (Figure~\ref{fig:part-i-functional}(b)) reports medians only.

\textbf{Relevant-feature controls.}
Figure~\ref{fig:part-i2-permutation} places the main-text result next to its positive controls.
Both models respond an order of magnitude more strongly to relevant Friedman features or original OpenML covariates than to added null features, so the difference in null-feature sensitivity is not a difference in overall responsiveness.
TabDPT's response to the relevant Friedman features also decreases with dose, whereas TabPFN v2 remains stable.

\begin{figure}[!tbp]
    \centering
    \includegraphics[width=\linewidth]{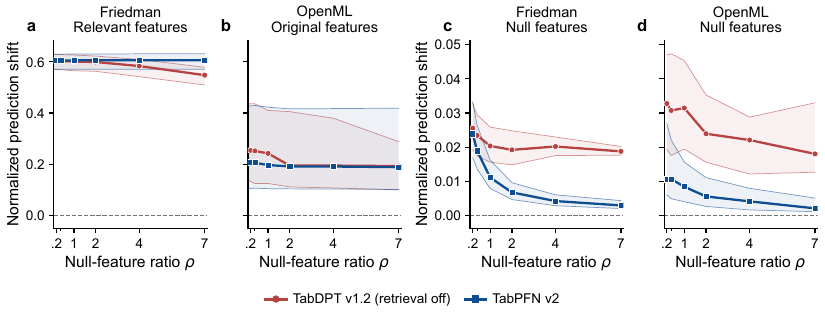}
    \caption{Normalized prediction shift from Eq.~\ref{eq:part-i-permutation-shift}.
    Panels (a,b) show relevant Friedman features and original OpenML features; panels (c,d) show added null features in the same order.
    The two feature-role pairs use separate vertical scales.
    Points show medians and ribbons span the IQR after averaging features and five construction repeats within each task.
    Friedman panels pool the seven configurations equally; OpenML panels summarize 42 datasets.
    Panels (c,d) show the null-feature shift plotted in Figure~\ref{fig:part-i-functional}(b).}
    \label{fig:part-i2-permutation}
\end{figure}

\begin{figure}[!tbp]
    \centering
    \includegraphics[width=\linewidth]{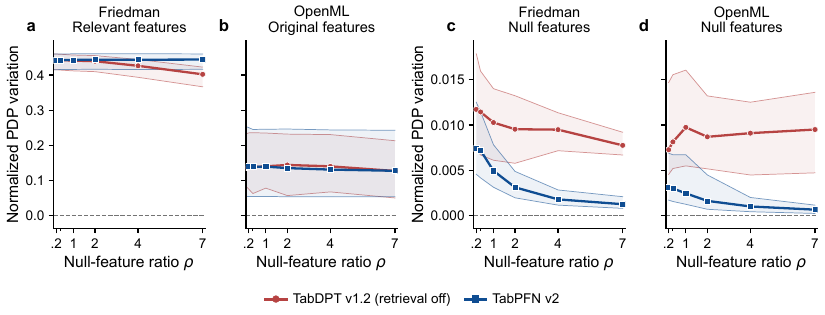}
    \caption{Normalized one-way PDP variation.
    Panels (a,b) show relevant Friedman features and original OpenML features; panels (c,d) show added null features in the same order, with a separate vertical scale for each feature-role pair.
    Points show medians across tasks or datasets and ribbons span the IQR after averaging eligible features and five construction repeats within each task.
    The ordering agrees with the permutation intervention in Figure~\ref{fig:part-i2-permutation}.}
    \label{fig:part-i2-pdp}
\end{figure}

Restricting the PDP summary to the null features introduced at $\rho=0.2$, and following those same columns as the dose grows, reproduces the ordering obtained over all null features: TabDPT keeps depending on them, while TabPFN v2 suppresses them further at larger doses.  The ordering therefore does not depend on which null features enter the average at each dose.

\textbf{Representative PDP curves.}
Figures~\ref{fig:part-i2-pdp-curves-friedman} and~\ref{fig:part-i2-pdp-curves-grid-stability} show column-resolved examples at $\rho=7$ for one Friedman task and Grid Stability, respectively.
Each curve is centered at its $50\%$ grid point.
The Friedman example labels the five relevant columns $x_j$ and the added null columns $z_k$.
For Grid Stability, an added column $z_k$ is labeled by the original column from which it was copied; matching colors identify the same source column.
Both figures show one deterministic construction repeat, whereas the aggregate results in Figure~\ref{fig:part-i2-pdp} use all five repeats.
Relevant-feature or original-feature panels and null-feature panels use separate vertical scales, each shared across the two models.

\begin{figure}[!tbp]
    \centering
    \includegraphics[width=\linewidth]{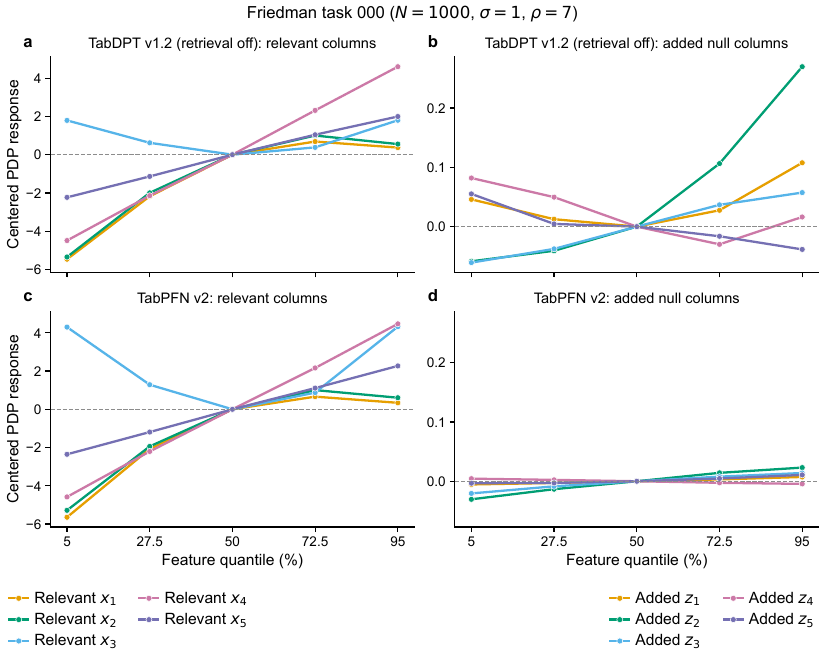}
    \caption{Column-resolved PDPs for a fixed Friedman task with $N=1000$ and $\sigma=1$.
    Both models respond to the relevant features, while TabDPT shows a larger response to the added null features.}
    \label{fig:part-i2-pdp-curves-friedman}
\end{figure}

\begin{figure}[!tbp]
    \centering
    \includegraphics[width=\linewidth]{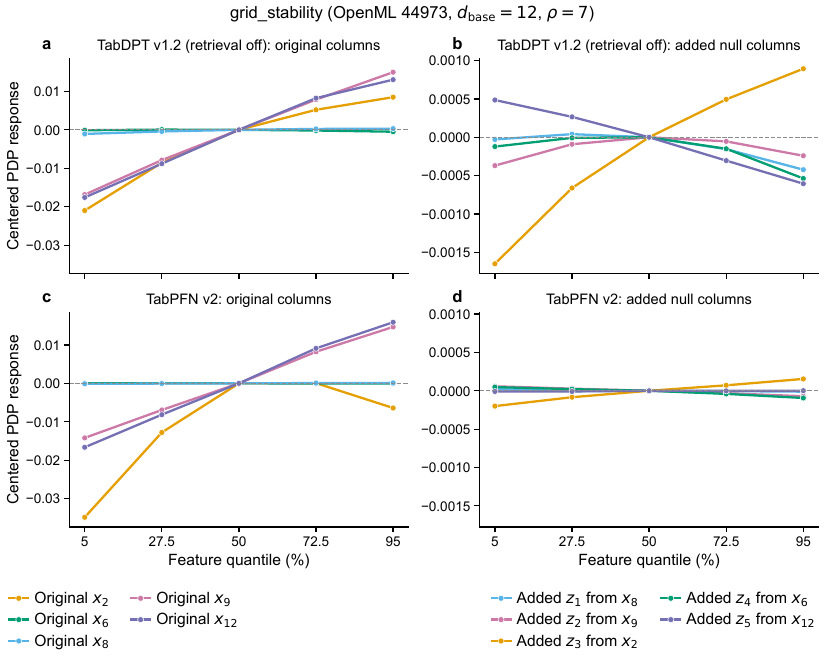}
    \caption{Column-resolved PDPs for Grid Stability, a medium-dimensional OpenML dataset with 12 original features.}
    \label{fig:part-i2-pdp-curves-grid-stability}
\end{figure}

\textbf{Friedman setting dependence.}
Figure~\ref{fig:part-i2-friedman-sweeps} resolves the permutation result by noise and sample size.
The null-feature gap persists in both sweeps from $\rho=1$ onward and is largest at smaller sample sizes; the response to relevant features remains substantially larger than the response to null features.

\begin{figure}[!tbp]
    \centering
    \includegraphics[width=\linewidth]{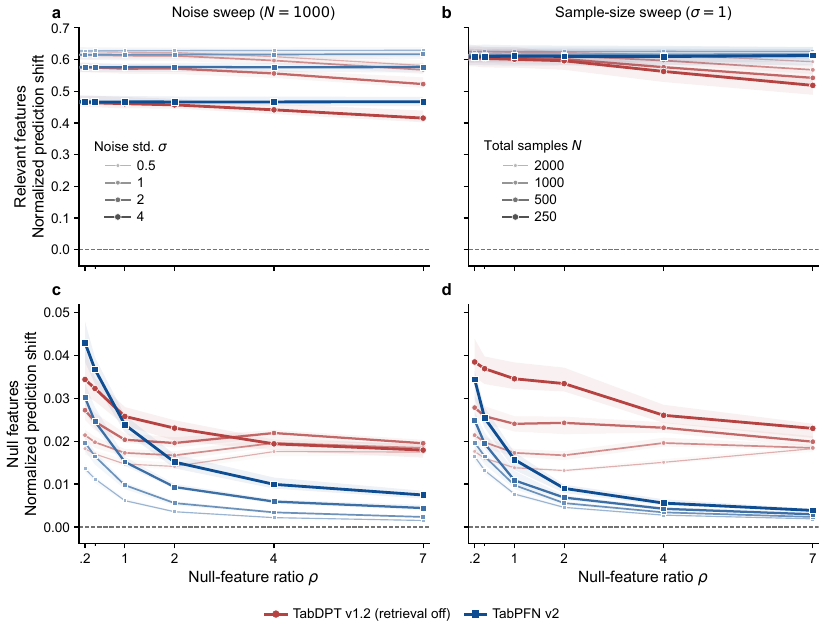}
    \caption{Friedman permutation sensitivity across noise and sample-size settings.
    Shade and line width encode the setting; ribbons show IQRs across 50 tasks.
    Columns vary noise (left) and total sample size (right), while rows show relevant (top) and null (bottom) features.}
    \label{fig:part-i2-friedman-sweeps}
\end{figure}

\textbf{OpenML width stratification.}
Stratifying OpenML datasets by their original width preserves the larger TabDPT sensitivity to null features under both interventions (Figures~\ref{fig:part-i2-openml-dimension-permutation} and~\ref{fig:part-i2-openml-dimension-pdp}).

\begin{figure}[!tbp]
    \centering
    \includegraphics[width=\linewidth]{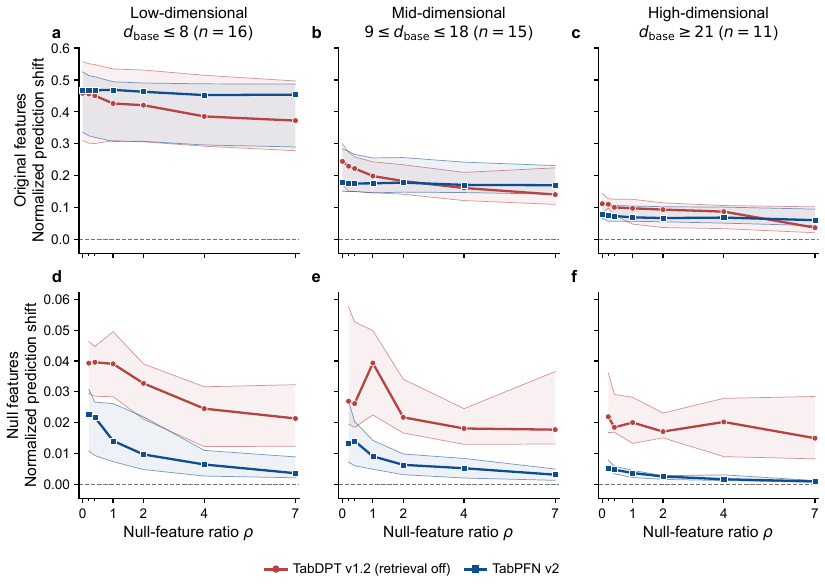}
    \caption{OpenML permutation sensitivity stratified by pre-augmentation feature count.  Rows show original and null features; points and ribbons show medians and IQRs across datasets.}
    \label{fig:part-i2-openml-dimension-permutation}
\end{figure}

\begin{figure}[!tbp]
    \centering
    \includegraphics[width=\linewidth]{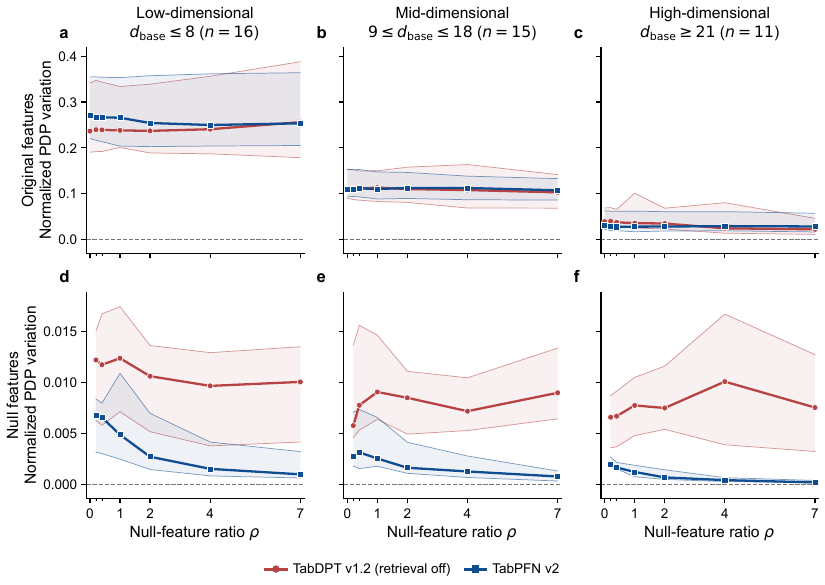}
    \caption{OpenML PDP variation stratified by pre-augmentation feature count, using the same aggregation as Figure~\ref{fig:part-i2-pdp}.}
    \label{fig:part-i2-openml-dimension-pdp}
\end{figure}

\subsection{Replication across TabDPT releases}
\label{app:part-i-tabdpt-versions}

\textbf{Motivation and protocol.}
The two analyses above evaluate the TabDPT release that was current when we ran them, v1.2.  To test whether the irrelevant-feature sensitivity is specific to that release, we repeat both experiments for the official v1.0, v1.1 and v1.3 checkpoints on exactly the same 350 Friedman task-settings, the same 42 OpenML datasets, the same five construction repeats and the same dose grid.  Each release runs with the default inference settings of its own package version (Table~\ref{tab:part-i-tabdpt-versions-inference}), with the single departure used throughout this section: the full training context, that is retrieval disabled.  Ensembling, preprocessing, outlier clipping, feature reduction and target handling therefore differ between releases exactly as their defaults differ, and the comparison is observational with respect to architecture in the same sense as the cross-system comparison above.  Table~\ref{tab:part-i-tabdpt-versions-inference} lists the artifacts and the resulting departures.

\begin{table}[!htbp]
\centering
\caption{TabDPT releases used in the replication.  The default context of \texttt{tabdpt 1.1.14} already exceeds every context in our tasks, and \texttt{tabdpt 1.2.0} and \texttt{1.3.0} use the full context by default, so only v1.0 departs from its package default.}
\label{tab:part-i-tabdpt-versions-inference}
\scriptsize
\resizebox{\linewidth}{!}{%
\begin{tabular}{l l l l l}
\hline
Release & Package & Official checkpoint & Default ensemble & Departure from default \\
\hline
TabDPT v1.0 & \texttt{tabdpt 0.1.0} & \texttt{tabdpt\_76M.ckpt} & single estimator & full context instead of 128-row retrieval \\
TabDPT v1.1 & \texttt{tabdpt 1.1.14} & \texttt{tabdpt1\_1.safetensors} & 8 & none in effect \\
TabDPT v1.2 & \texttt{tabdpt 1.2.0} & \texttt{tabdpt1\_2.safetensors} & 8 & none \\
TabDPT v1.3 & \texttt{tabdpt 1.3.0} & \texttt{tabdpt1\_3.safetensors} & 8 & none \\
\hline
\end{tabular}}
\end{table}

\textbf{Dose response.}
Figure~\ref{fig:part-i-tabdpt-versions-r2}(a,~b) shows that every release degrades as null columns accumulate, and that the degradation shrinks monotonically from v1.0 to v1.3 on both benchmarks.  At the largest dose the median $R^2$ drop on Friedman falls from $0.1466$ for v1.0 to $0.0089$ for v1.3, and on OpenML from $0.1205$ to $0.0619$; the TabPFN v2 reference stays at $0.0008$ and $0.0014$ (Table~\ref{tab:part-i-tabdpt-versions-rho7}).  Even the newest release therefore remains more than an order of magnitude more dose-sensitive than TabPFN v2 on OpenML, and it still has a larger $R^2$ drop than TabPFN v2 on $94.0\%$ of Friedman task-settings and $92.9\%$ of OpenML datasets.  Panels (c,~d) give the absolute medians behind these drops: the releases differ in clean-task accuracy as well, most visibly on OpenML, which is why the paired drop rather than the raw $R^2$ is the comparable quantity.

\begin{figure}[!tbp]
    \centering
    \includegraphics[width=\linewidth]{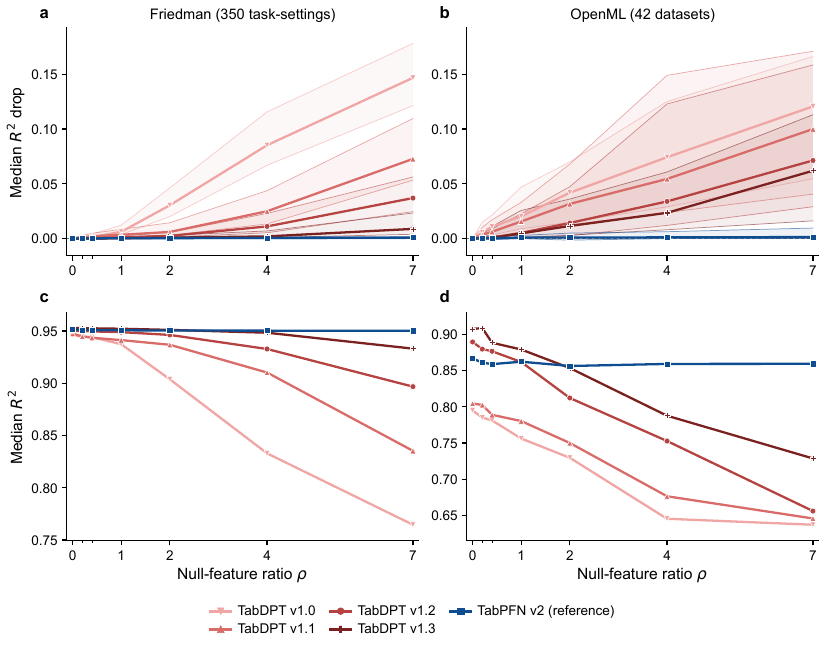}
    \caption{Null-feature dose response of four TabDPT releases with TabPFN v2 as reference.  (a,~b) Median $R^2$ drop with interquartile bands.  (c,~d) Median absolute $R^2$ at each dose.  Aggregation matches Figure~\ref{fig:part-i1-dose}: five construction repeats are averaged within each task before taking medians across 350 Friedman task-settings or 42 OpenML datasets.}
    \label{fig:part-i-tabdpt-versions-r2}
\end{figure}

\textbf{Functional sensitivity.}
Figure~\ref{fig:part-i2-tabdpt-versions-permutation} repeats the functional analysis for the same releases.
Panels (a,~b) are the positive controls: all releases respond to relevant Friedman features or original OpenML features about an order of magnitude more strongly than to the added nulls, so the differences in panels (c,~d) are not differences in overall responsiveness.
On the added null features the ordering matches the dose response, with the median null-feature shift at $\rho=7$ falling from $0.0344$ (v1.0) to $0.0115$ (v1.3) on Friedman and from $0.0276$ to $0.0156$ on OpenML, against $0.0029$ and $0.0021$ for TabPFN v2.
Newer releases suppress null coordinates better, and none of them matches the suppression achieved by TabPFN v2.

\begin{figure}[!tbp]
    \centering
    \includegraphics[width=\linewidth]{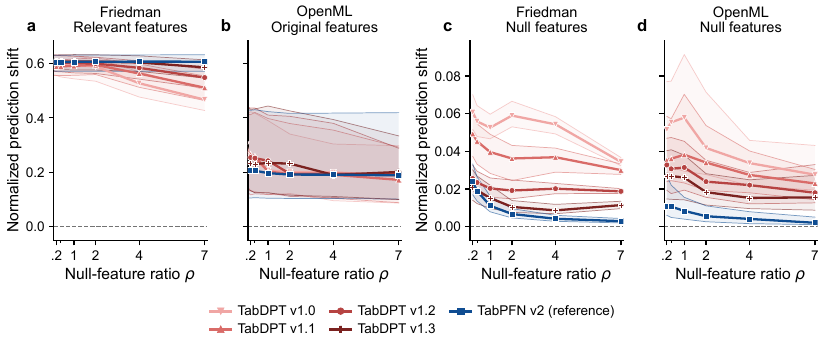}
    \caption{Normalized prediction shift under single-feature permutation for four TabDPT releases, with TabPFN v2 as reference.
    (a,~b) Relevant Friedman features and original OpenML features as positive controls.
    (c,~d) Added null features.
    Aggregation matches Figure~\ref{fig:part-i2-permutation}.}
    \label{fig:part-i2-tabdpt-versions-permutation}
\end{figure}

\begin{table}[!htbp]
\centering
\caption{TabDPT releases at the maximum dose $\rho=7$.  Entries are medians [25th, 75th percentiles] across 350 Friedman task-settings or 42 OpenML datasets after averaging five construction repeats within each task.  The last column is the percentage of paired tasks on which the release exceeds TabPFN v2 in $R^2$ drop, Friedman / OpenML.}
\label{tab:part-i-tabdpt-versions-rho7}
\scriptsize
\resizebox{\linewidth}{!}{%
\begin{tabular}{l c c c c c}
\hline
& \multicolumn{2}{c}{$R^2$ drop} & \multicolumn{2}{c}{Null-feature shift} & Exceeds TabPFN v2 \\
Release & Friedman & OpenML & Friedman & OpenML & (\% of tasks) \\
\hline
TabDPT v1.0 & 0.1466 [0.1213, 0.1780] & 0.1205 [0.0547, 0.1661] & 0.0344 [0.0324, 0.0367] & 0.0276 [0.0194, 0.0431] & 100.0 / 97.6 \\
TabDPT v1.1 & 0.0727 [0.0533, 0.1095] & 0.0999 [0.0406, 0.1709] & 0.0300 [0.0278, 0.0333] & 0.0229 [0.0143, 0.0304] & 100.0 / 97.6 \\
TabDPT v1.2 & 0.0370 [0.0246, 0.0562] & 0.0713 [0.0291, 0.1584] & 0.0188 [0.0177, 0.0202] & 0.0180 [0.0127, 0.0330] & 98.9 / 95.2 \\
TabDPT v1.3 & 0.0089 [0.0039, 0.0233] & 0.0619 [0.0162, 0.1130] & 0.0115 [0.0096, 0.0135] & 0.0156 [0.0088, 0.0272] & 94.0 / 92.9 \\
TabPFN v2 (reference) & 0.0008 [$-$0.00003, 0.0018] & 0.0014 [0.0001, 0.0096] & 0.0029 [0.0020, 0.0043] & 0.0021 [0.0011, 0.0051] & --- \\
\hline
\end{tabular}}
\end{table}

\textbf{Scope.}
These runs show that the phenomenon is a property of every released TabDPT version rather than of one checkpoint, and that the releases have been reducing it.  They do not attribute the trend to any particular architectural change: consecutive releases differ simultaneously in weights and training data, default ensembling, clipping thresholds, preprocessing and output parameterization (see the release notes of each package version).

\section{Derivations and proofs for the sparse and dense priors}
\label{app:theory-details}

We use the model and notation of Section~\ref{sec:controlled-priors}.
Throughout, $n\ge d\ge2$, $v^2,\sigma^2>0$, and
$k\in\{1,\ldots,d\}$ are fixed. Expectations are taken under $P_k$ and
the fixed-design regression model.

\subsection{Sufficiency and matched prior moments}
\label{app:orthogonal-sufficiency}

The design $X$ can be fixed or drawn independently of the coefficient prior
subject to $X^\top X=nI_d$. One random construction is $X=\sqrt{n}Q$,
where $Q$ contains the orthonormal columns from a thin QR factorization of an
$n\times d$ matrix with independent standard Gaussian entries. Given $X$,
draw $S$ uniformly from $\mathcal{S}_k$, draw its $k$ active coefficients
independently with variance $v^2/k$, and draw observation noise independently.

Let $P_X=XX^\top/n$. Orthogonality gives
\begin{equation}
    \|y-X\beta\|_2^2
    =\|(I_n-P_X)y\|_2^2+n\|c-\beta\|_2^2,
    \qquad c=\frac{X^\top y}{n}.
    \label{eq:app-likelihood-factorization}
\end{equation}
The first term is independent of $\beta$, so the posterior depends on $D$
only through $c$. Moreover,
\begin{equation}
    c=\frac{1}{n}X^\top y=\beta+\xi,
    \qquad \xi=\frac{X^\top\varepsilon}{n}
    \sim\mathcal{N}(0,\tau^2I_d),
    \qquad \tau^2=\frac{\sigma^2}{n}.
    \label{eq:context-statistic}
\end{equation}
Indeed,
$\operatorname{Cov}(X^\top\varepsilon/n\mid X)
=\sigma^2X^\top X/n^2=\tau^2I_d$. Its noise law is therefore the same for
every admissible $X$.

For each coordinate,
\begin{equation}
    \Pr(j\in S)=\frac{k}{d},
    \qquad
    \mathbb{E}[\beta_j]=0,
    \qquad
    \mathbb{E}[\beta_j^2]=\frac{k}{d}\frac{v^2}{k}=\frac{v^2}{d}.
\end{equation}
For distinct $j,r$, conditional independence and zero conditional means give
$\mathbb{E}[\beta_j\beta_r\mid S]=0$. Averaging over $S$ and summing the
coordinate variances proves the matched-moment identities stated in
Section~\ref{sec:controlled-priors}, including the dense endpoint.

\subsection{Support posterior and posterior-mean coefficient map}
\label{app:bayes-posterior}

Write $a_k=v^2/k$. Conditional on a candidate support $A\in\mathcal{S}_k$,
integrating the Gaussian coefficient prior against the likelihood gives
independent coordinates with
\begin{equation}
    c_j\mid S=A\sim
    \begin{cases}
        \mathcal{N}(0,a_k+\tau^2), & j\in A,\\
        \mathcal{N}(0,\tau^2), & j\notin A.
    \end{cases}
\end{equation}
Their joint density is
\begin{equation}
    p(c\mid S=A)
    =\frac{\exp\!\left(-\frac{\|c\|_2^2}{2\tau^2}
    +\theta_k\sum_{j\in A}c_j^2\right)}
    {(2\pi)^{d/2}(a_k+\tau^2)^{k/2}(\tau^2)^{(d-k)/2}},
    \qquad
    \theta_k=\frac{a_k}{2\tau^2(a_k+\tau^2)}.
    \label{eq:app-support-likelihood}
\end{equation}
Since all candidate supports have size $k$ and equal prior probability,
all factors except the support score cancel in Bayes' rule, giving
\begin{equation}
    \pi_k(A\mid c)
    =\frac{\exp\!\left(\theta_k\sum_{j\in A}c_j^2\right)}
    {\sum_{B\in\mathcal{S}_k}
    \exp\!\left(\theta_k\sum_{r\in B}c_r^2\right)},
    \qquad A\in\mathcal{S}_k.
    \label{eq:support-posterior}
\end{equation}
Marginalizing the support indicators gives
\begin{equation}
    q_{k,j}(c)=\Pr(j\in S\mid c)
    =\sum_{\substack{A\in\mathcal{S}_k\\j\in A}}\pi_k(A\mid c),
    \qquad
    \sum_{j=1}^d q_{k,j}(c)=\mathbb{E}[|S|\mid c]=k.
    \label{eq:posterior-inclusion}
\end{equation}

For an active coordinate, Gaussian conjugacy gives
\begin{equation}
    \beta_j\mid c,S=A
    \sim\mathcal{N}(\lambda_kc_j,\lambda_k\tau^2),
    \qquad
    \lambda_k=\frac{a_k}{a_k+\tau^2}.
\end{equation}
Inactive coordinates remain zero. Hence
\begin{equation}
    \mathbb{E}[\beta_j\mid c]
    =\sum_{A\in\mathcal{S}_k}\pi_k(A\mid c)
    \lambda_kc_j\mathbf{1}\{j\in A\}
    =\lambda_kq_{k,j}(c)c_j.
\end{equation}
Sufficiency and independence of the prediction noise then imply
\begin{equation}
    \mathbb{E}[y_q\mid D,x]
    =x^\top\mathbb{E}[\beta\mid D]
    =x^\top\eta_k^*(c),
\end{equation}
which is the unique squared-risk minimizer up to almost-sure equality.

\subsection{Data dependence of the Bayes feature metric}
\label{app:bayes-kernels}

We verify the claim in Section~\ref{sec:controlled-priors} that the Bayes
predictor is a row kernel with diagonal feature metric
$\operatorname{diag}(q_k(c))$, which is the identity for $k=d$ and varies
with the context for every $k<d$.

\begin{proof}
Substituting $c=n^{-1}\sum_i x_i y_i$ into the posterior mean gives
\begin{equation}
    f_k^*(D,x)
    =\lambda_k x^\top\operatorname{diag}(q_k(c))c
    =\frac{\lambda_k}{n}\sum_{i=1}^n
    y_i x^\top\operatorname{diag}(q_k(c))x_i.
\end{equation}
For $k=d$, the only support is $[d]$, so $q_d(c)=\mathbf{1}$ and the metric
is the identity.

For $k<d$, fix a coordinate $j$ and fix $c_{-j}$. Define the positive finite
quantities
\begin{equation}
    U_j=\sum_{\substack{A\subseteq[d]\setminus\{j\}\\|A|=k-1}}
    \exp\!\left(\theta_k\sum_{r\in A}c_r^2\right),
    \qquad
    V_j=\sum_{\substack{A\subseteq[d]\setminus\{j\}\\|A|=k}}
    \exp\!\left(\theta_k\sum_{r\in A}c_r^2\right).
\end{equation}
The empty-support term is one. Separating supports according to whether
they contain $j$ yields
\begin{equation}
    q_{k,j}(c)
    =\frac{e^{\theta_kc_j^2}U_j}
    {e^{\theta_kc_j^2}U_j+V_j}.
    \label{eq:app-gate-limit}
\end{equation}
Because $\theta_k,U_j,V_j>0$, this quantity is strictly increasing in
$c_j^2$ and converges to one as $|c_j|\to\infty$. The diagonal metric
therefore varies with $c$ for every $k<d$.
\end{proof}

\subsection{Response-affine barrier}
\label{app:fixed-kernel-gap}

\begin{proof}[Proof of Theorem~\ref{thm:fixed-kernel-gap}]
Fix $X$ and $x\ne0$, and write $\alpha=v^2/d$. The matched first two moments
and independent Gaussian noise give
\begin{equation}
    \mathbb E[y]=0,
    \qquad
    \operatorname{Cov}(y)=\alpha XX^\top+\sigma^2I_n,
    \qquad
    \operatorname{Cov}(y,y_q)=\alpha Xx.
    \label{eq:app-affine-moments}
\end{equation}
The response covariance is positive definite. Because
$X^\top X=nI_d$, the vector $Xx$ is its eigenvector with eigenvalue
$n\alpha+\sigma^2$. Hence the unique least-squares coefficients for predicting
$y_q$ affinely from $y$ are
\begin{equation}
    a^*=0,
    \qquad
    b^*=\operatorname{Cov}(y)^{-1}\operatorname{Cov}(y,y_q)
    =\frac{\alpha}{n\alpha+\sigma^2}Xx
    =\frac{\lambda_d}{n}Xx.
    \label{eq:optimal-fixed-kernel}
\end{equation}
Therefore $(b^*)^\top y=\lambda_dx^\top c$. Uniqueness also follows from
\begin{equation}
    \mathbb E[(a+b^\top y-y_q)^2]
    -\mathbb E[((b^*)^\top y-y_q)^2]
    =a^2+(b-b^*)^\top\operatorname{Cov}(y)(b-b^*).
\end{equation}
The calculation uses only the matched first two moments and consequently has
the same solution for every $P_k$.

Since $f_k^*(D,x)=\mathbb E[y_q\mid D,x]$, the conditional-mean identity gives,
for every $D$-measurable predictor $f$,
\begin{equation}
    \mathbb E[(f(D,x)-y_q)^2]
    =\mathbb E[(f(D,x)-f_k^*(D,x))^2]
    +\mathbb E[(f_k^*(D,x)-y_q)^2].
\end{equation}
The second term does not depend on $f$. Thus the same affine predictor minimizes
approximation error to $f_k^*$, and
\begin{equation}
    \inf_{f\in\mathcal F_{\mathrm{aff}}(x)}
    \mathbb E_D[(f(D,x)-f_k^*(D,x))^2]
    =\mathbb E_D\!\left[
       \{x^\top(\lambda_dc-\eta_k^*(c))\}^2
     \right].
    \label{eq:app-fixed-affine-gap}
\end{equation}

For strict positivity, take $k<d$ and choose $j$ with $x_j\ne0$. At
$c=te_j$, all coordinates of $\lambda_dc-\eta_k^*(c)$ except $j$ vanish.
Moreover, $q_{k,j}(0)=k/d$ and
$q_{k,j}(te_j)\to1$ as $|t|\to\infty$ by
Eq.~\ref{eq:app-gate-limit}, while
\begin{equation}
    \frac{k}{d}
    <\frac{\lambda_d}{\lambda_k}
    =\frac{v^2+k\tau^2}{v^2+d\tau^2}<1.
\end{equation}
For some finite nonzero $t$, therefore,
$x_jt\{\lambda_d-\lambda_kq_{k,j}(te_j)\}\ne0$. The difference is continuous,
so it remains nonzero on an open neighborhood. The marginal law of $c$ is a
mixture of nonsingular Gaussian densities and is strictly positive everywhere;
that neighborhood has positive probability. This proves the strict gap. For
$k=d$, $\eta_d^*(c)=\lambda_dc$ identically, so the gap is zero.
\end{proof}

The result holds separately at each known $n$. If $n$ varies, condition on it
and use $\lambda_d(n)=v^2/(v^2+d\sigma^2/n)$.

\subsection{Alternating-axis realization and exact-\texorpdfstring{$k$}{k} normalization}
\label{app:feature-construction}

\textbf{Architecture class.}
Fix $n$ observed rows and one prediction row, indexed by $i\in[n+1]$, and
augment the $d$ feature tokens in each row with a response/readout token
indexed by $j=0$. A width-$m$ cell-token state is therefore
$h=(h_{ij})_{i\in[n+1],\,j\in\{0,\ldots,d\}}$, with
$h_{ij}\in\mathbb R^m$. At initialization, feature token $(i,j)$, $j\geq1$,
contains $x_{ij}$ (with $x_{n+1,j}=x_j$), observed response token $(i,0)$
contains $y_i$, and $(n+1,0)$ is the prediction readout. Every token also
contains fixed indicators of its feature-versus-response type and its
observed-versus-prediction-row role.

For a token $u$ and an allowed attention group $G(u)$, one softmax-attention
sublayer has the form
\begin{equation}
 \widetilde h_u=h_u+W_O\sum_{v\in G(u)}\alpha_{uv}W_Vh_v,
 \qquad
 \alpha_{uv}=
 \frac{\exp\{(W_Qh_u)^\top(W_Kh_v)+M_{uv}\}}
      {\sum_{w\in G(u)}\exp\{(W_Qh_u)^\top(W_Kh_w)+M_{uw}\}},
 \label{eq:app-axis-attention}
\end{equation}
where $M_{uv}\in\{0,-\infty\}$ is a fixed mask depending only on token type
and observed-versus-prediction status. For feature-axis attention,
$G_F(i,j)=\{(i,\ell):0\leq\ell\leq d\}$; for observation-axis attention at a
feature token, $G_O(i,j)=\{(\ell,j):1\leq\ell\leq n+1\}$. Parameters are
shared over all groups of the same sublayer. Each attention sublayer may be
followed by a residual pointwise map
\begin{equation}
 h_u\longmapsto h_u+W_2\rho(W_1h_u+b_1)+b_2,
 \label{eq:app-pointwise-layer}
\end{equation}
shared over tokens, where $\rho$ is continuous and nonpolynomial. Layer
parameters may differ across depth, and a sublayer may be made inactive by
setting its output projection to zero. We call any finite composition whose
active attention sublayers alternate between the two axes an
\emph{alternating-axis network}; its scalar output is a linear readout of
$h_{n+1,0}$. Denote this class by $\mathcal A_{n,d}$.

\begin{proposition}[Alternating-axis neural approximation]
\label{prop:one-sparse-alternating-realization}
Fix $n,d$, $1\leq k\leq d$, and positive prior and noise parameters.
Let $\mathcal K$ be a compact set of observed datasets and prediction
points satisfying $X^\top X=nI_d$. For every $\epsilon>0$, there exists
$F_\epsilon\in\mathcal A_{n,d}$ such that
\begin{equation}
    \sup_{(D,x)\in\mathcal K}
    |F_\epsilon(D,x)-f_k^*(D,x)|<\epsilon.
    \label{eq:app-one-sparse-approximation}
\end{equation}
The construction uses three active attention sublayers in
feature--observation--feature order and $O(k)$ residual-state channels.
The hidden widths of the pointwise feed-forward maps may depend on
$n,d,k$, the parameters, $\mathcal K$, and $\epsilon$; no corresponding
total-parameter or optimization bound is asserted.
\end{proposition}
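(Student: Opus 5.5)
The plan is to make each of the three attention sublayers compute an exact, closed-form aggregation. All nonlinear work is pushed into the residual pointwise maps~\eqref{eq:app-pointwise-layer}, which are then approximated uniformly on compact sets. The starting point is the posterior mean~\eqref{eq:adaptive-bayes-kernel} together with the gate formula~\eqref{eq:app-gate-limit}. Write $w_j=e^{\theta_k c_j^2}\ge1$ and let $e_m(w)$ denote the elementary symmetric polynomials of $(w_1,\dots,w_d)$. Then $q_{k,j}(c)=w_j\,e_{k-1}(w_{-j})/e_k(w)$, where $w_{-j}$ drops coordinate $j$. The deletion identity $e_{k-1}(w_{-j})=\sum_{i=0}^{k-1}(-w_j)^i e_{k-1-i}(w)$ then gives
\begin{equation*}
 f_k^*(D,x)=\frac{\lambda_k}{e_k(w)}\sum_{i=0}^{k-1}(-1)^i e_{k-1-i}(w)\,s_i,
 \qquad s_i=\sum_{j=1}^d w_j^{i+1}c_jx_j .
\end{equation*}
By Newton's identities, $e_0,\dots,e_k$ are polynomials in the power sums $p_m=\sum_j w_j^m$ for $m=1,\dots,k$. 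So $f_k^*$ is a fixed continuous function $\Phi$ of the $2k$ pooled statistics $(p_1,\dots,p_k,s_0,\dots,s_{k-1})$, and each of these is a sum over $j$ of a pointwise function of $(c_j,x_j)$. This is the source of the $O(k)$ channel count.

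\emph{Step 1 (feature-axis routing).} In the first feature-axis sublayer, the type-dependent mask $M$ lets each feature token $(i,j)$ of an observed row attend only to the response token $(i,0)$. A one-hot value then copies $y_i$ into a channel of $h_{ij}$ exactly. A following pointwise map, gated by the role indicators, writes an approximation of the product $x_{ij}y_i$ into a new channel.

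\emph{Step 2 (observation-axis averaging).} The observation-axis sublayer uses $W_Q=0$, so all logits are equal. Its mask removes the prediction row, so every token in column $j$, including $(n+1,j)$, receives $\frac1n\sum_{i\le n}x_{ij}y_i=c_j$, up to the Step~1 error. At the prediction-row feature token, which still holds $x_j$ from initialization, a pointwise map forms the $2k$ channels $w_j^m$ and $w_j^{i+1}c_jx_j$.

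\emph{Step 3 (feature-axis pooling).} In the final feature-axis sublayer, the readout token $(n+1,0)$ attends uniformly to the $d$ feature tokens of its row, with its own slot masked. This yields $p_m/d$ and $s_i/d$. A last pointwise map at the readout approximates $\Phi$, and the linear readout outputs the result. Because $w_j\ge1$, we have $e_k(w)\ge\binom dk$, so $\Phi$ is continuous and the denominator stays bounded away from zero. Everything above is exact except the pointwise maps.

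The remaining, and I expect main, obstacle is the uniform error bookkeeping. Each pointwise map is shared across all tokens, so it must act as the identity on channels that are irrelevant to a token's role. It must also realize the target function only on the relevant compact set. To handle this, I would approximate role-gated targets, which are continuous in the indicator coordinates, with a universal-approximation theorem for nonpolynomial $\rho$ applied on a compact neighbourhood of the image of $\mathcal K$.

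The error propagation then runs backwards. The image of the compact set $\mathcal K$ under each exact intermediate map is compact. Uniform averaging is $1$-Lipschitz. Each subsequent target map is uniformly continuous on a slightly enlarged compact set. So one chooses tolerances in reverse order: the tolerance for $\Phi$ first, then the accuracy needed for $(w_j^m, w_j^{i+1}c_jx_j)$, then for $x_{ij}y_i$. This gives a total error below $\epsilon$. The hidden widths depend on these tolerances, which is consistent with the caveat that no parameter bound is asserted.
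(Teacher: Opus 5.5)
Your proposal is correct and follows essentially the same construction as the paper: response routing via the type mask, equal-logit observation-axis averaging to form $c_j$, uniform feature-axis pooling of the $2k$ channels $w_j^r$ and $w_j^r c_j x_j$, recovery of the $e_m(w)$ by Newton's identities together with the deletion identity for $e_{k-1}(w_{-j})$, and uniform approximation of the resulting rational map using $e_k(w)\ge\binom{d}{k}$. Your reverse-order tolerance bookkeeping and role-gated pointwise maps spell out in more detail what the paper summarizes as choosing successive approximation errors sufficiently small.
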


\begin{proof}
First, we give an exact computation with continuous pointwise maps, then
approximate these maps by feed-forward networks. Feature-axis
attention makes the response $y_i$ available at every feature cell $(i,j)$ of
an observed row while the residual path preserves $x_{ij}$. A pointwise
map forms the product $x_{ij}y_i$. Second, set the
observation-axis attention logits equal and mask them to the observed rows. At
fixed feature $j$, its output is the exact average
$c_j=n^{-1}\sum_i x_{ij}y_i$; the prediction-cell residual preserves $x_j$. A
pointwise map can now operate on $(c_j,x_j)$.

For general $k$, put $w_j=\exp(\theta_kc_j^2)$ and
$a_j=\lambda_kc_jx_j$, and form the $2k$ channels
$(w_j^r,a_jw_j^r)_{r=1}^k$. Third, feature-axis attention at the prediction
readout is uniform over the $d$ feature tokens, excluding the response
token. Scaling the averages by the known $d$ yields
\begin{equation}
    p_r=\sum_{j=1}^d w_j^r,\qquad
    t_r=\sum_{j=1}^d a_jw_j^r,\qquad 1\leq r\leq k.
    \label{eq:app-pooled-power-sums}
\end{equation}
Let $e_m(w)$ be the elementary symmetric polynomial of degree $m$,
with $e_0=1$. Newton's identities recover these polynomials from the
pooled power sums:
\begin{equation}
    e_m=\frac{1}{m}\sum_{r=1}^m(-1)^{r-1}e_{m-r}p_r,
    \qquad 1\leq m\leq k.
    \label{eq:app-newton-recursion}
\end{equation}
For completeness, this follows by equating coefficients in
$E'(z)=E(z)\sum_{r\geq1}(-1)^{r-1}p_rz^{r-1}$, where
$E(z)=\prod_j(1+w_jz)=\sum_m e_mz^m$, interpreted as a formal power series.
Similarly, expanding $E(z)/(1+w_jz)$ gives
$e_{k-1}(w_{-j})=\sum_{r=0}^{k-1}(-1)^rw_j^re_{k-1-r}(w)$.
Using the inclusion probabilities in
Eq.~\ref{eq:app-symmetric-inclusion}, the final pointwise map evaluates
\begin{equation}
    f_k^*(D,x)
    =\frac{\sum_{r=1}^k(-1)^{r-1}e_{k-r}(w)t_r}{e_k(w)}.
    \label{eq:app-pooled-predictor}
\end{equation}
Thus the prediction is a continuous function of the $2k$ pooled channels.
Since $w_j\geq1$, its denominator obeys $e_k(w)\geq\binom{d}{k}>0$.

All exact intermediate states range over compact sets. Products,
exponentials, and powers can therefore be approximated uniformly by the
pointwise networks. The final rational map is continuous on a compact
neighborhood of the attainable summaries where its denominator stays
positive, so it too admits uniform approximation. Choosing successive
approximation errors sufficiently small proves
Eq.~\ref{eq:app-one-sparse-approximation}. Residual channels retain the
inputs needed at each stage; token-type and row-role indicators allow
shared pointwise maps to implement the required role-specific operations.
The first routing operation can select the response token exactly with the
allowed type mask. The final pointwise output is stored in one channel for
the linear readout. Only $O(k)$ state channels are needed; the pointwise
hidden widths are unrestricted.

For $k=1$, a more direct construction identifies the final attention
weights with the posterior gate. After the observation-axis stage, form
$s_j=\theta_1c_j^2$ and $v_j=\lambda_1c_jx_j$. Third, a prediction readout state
applies feature-axis attention with logits $s_j$ and values $v_j$, producing
\begin{equation}
    \sum_{j=1}^d
    \frac{e^{\theta_1c_j^2}}{\sum_{r=1}^d e^{\theta_1c_r^2}}
    \lambda_1c_jx_j
    =f_1^*(D,x).
\end{equation}
Approximating the products and squares gives the same uniform-approximation
conclusion. This special construction is illustrated in
Figure~\ref{fig:arch-and-construction}c.
\end{proof}

\textbf{Scope of the construction.}
The three attention stages belong to the idealized class $\mathcal A_{n,d}$,
not necessarily to three blocks of the trained model. The construction
uses type masks and flexible pointwise maps and does not establish
realization by its particular normalization, widths, or block layout.
For general $k$, the final attention pools uniformly; the nonlinear
readout, rather than its attention weights, implements the inclusion-weighted
prediction. Consequently, the result gives a feature-indexed realization
but neither identifies attention weights with relevance for all $k$ nor
proves a separation from row-token networks. Newton's recursion takes
$O(k^2)$ arithmetic operations after pooling, but its alternating sums can
be numerically ill-conditioned. The proof is an approximation result on
compact domains, not a numerical-stability or learning-efficiency guarantee.

\textbf{General support sizes.}
For $w_j=\exp(\theta_kc_j^2)$, define the elementary symmetric polynomials
\begin{equation}
    e_m(w)=\sum_{\substack{A\subseteq[d]\\|A|=m}}\prod_{r\in A}w_r,
    \qquad e_0(w)=1.
\end{equation}
Set $e_m=0$ when $m<0$ or $m$ exceeds the number of available coordinates.
The support normalizer is $e_k(w)>0$. Summing over supports containing $j$
gives
\begin{equation}
    q_{k,j}(c)=\frac{w_j e_{k-1}(w_{-j})}{e_k(w)}.
    \label{eq:app-symmetric-inclusion}
\end{equation}

Define prefix and suffix tables
\begin{equation}
    F_{j,m}=e_m(w_1,\ldots,w_j),
    \qquad G_{j,m}=e_m(w_j,\ldots,w_d),
    \qquad 0\le m\le k.
\end{equation}
Initialize $F_{0,0}=G_{d+1,0}=1$ and
$F_{0,m}=G_{d+1,m}=0$ for $m>0$, with all negative-degree entries zero.
Partitioning subsets by inclusion of their boundary coordinate yields
\begin{equation}
    F_{j,m}=F_{j-1,m}+w_jF_{j-1,m-1},
    \qquad
    G_{j,m}=G_{j+1,m}+w_jG_{j+1,m-1}.
    \label{eq:app-subset-recursion}
\end{equation}
Compute $F$ in increasing $j$ and $G$ in decreasing $j$. Then
\begin{equation}
    e_k(w)=F_{d,k},
    \qquad
    e_{k-1}(w_{-j})
    =\sum_{m=0}^{k-1}F_{j-1,m}G_{j+1,k-1-m}.
    \label{eq:app-subset-exclusion}
\end{equation}
Each table takes $O(dk)$ arithmetic operations and storage. Evaluating the
$k$-term sum for every $j$ takes another $O(dk)$ operations. Together with
Eq.~\ref{eq:app-symmetric-inclusion}, this computes all inclusion
probabilities. Forming $c$ takes $O(nd)$ operations, and forming and summing
$\lambda_kq_{k,j}c_jx_j$ takes $O(d)$.

\textbf{Endpoints.}
For $k=1$, $e_1(w)=\sum_jw_j$ and $e_0(w_{-j})=1$, recovering the feature
softmax. For $k=d$, $e_d(w)=\prod_jw_j$, so all gates equal one and the
subset computation can be omitted.

\textbf{Log-domain evaluation.}
Store $\ell_j=\theta_kc_j^2$, $L^F_{j,m}=\log F_{j,m}$, and
$L^G_{j,m}=\log G_{j,m}$. Represent zero entries by $-\infty$ and entries
equal to one by zero. With
$\operatorname{LSE}(a,b)=\log(e^a+e^b)$ evaluated by the usual maximum
subtraction, the recurrences become
\begin{equation}
    \begin{aligned}
        L^F_{j,m}&=\operatorname{LSE}
        (L^F_{j-1,m},\ell_j+L^F_{j-1,m-1}),\\
        L^G_{j,m}&=\operatorname{LSE}
        (L^G_{j+1,m},\ell_j+L^G_{j+1,m-1}).
    \end{aligned}
\end{equation}
Define $\operatorname{LSE}$ of all $-\infty$ inputs to be $-\infty$.
The final log inclusion probability is
\begin{equation}
    \log q_{k,j}
    =\ell_j+\operatorname{LSE}_{0\le m<k}
    \big(L^F_{j-1,m}+L^G_{j+1,k-1-m}\big)-L^F_{d,k}.
\end{equation}
Only the normalized log probabilities need to be exponentiated. A common
shift of all $\ell_j$ also leaves the probabilities unchanged, since every
support has exactly $k$ entries. This implementation retains the $O(dk)$
arithmetic and storage bounds.

\subsection{Proof of Lemma~\ref{lem:coefficient-accounting} and excess prediction risk}
\label{app:coefficient-accounting}

\begin{proof}
Fix a context $D$ for which $f(D,\cdot)$ is square-integrable.
Since $\mathbb{E}_x[x]=0$ and $\mathbb{E}_x[xx^\top]=I_d$, the
constant function $1$ and query-coordinate functions $x\mapsto x_j$,
$j\in[d]$, form an orthonormal family in $L^2$.
Thus the affine projection in Lemma~\ref{lem:coefficient-accounting}
has the unique coefficients
\begin{equation}
    \zeta_f(D)=\mathbb{E}_x[f(D,x)],
    \qquad \eta_f(D)=\mathbb{E}_x[xf(D,x)],
    \label{eq:query-linear-projection}
\end{equation}
and its residual satisfies
\begin{equation}
    \mathbb{E}_x[r_f(D,x)]=0,
    \qquad \mathbb{E}_x[xr_f(D,x)]=0.
    \label{eq:app-projection-orthogonality}
\end{equation}
Using $f_k^*(D,x)=x^\top\eta_k^*(c)$, write
\[
    f(D,x)-f_k^*(D,x)
    =\zeta_f(D)+x^\top(\eta_f(D)-\eta_k^*(c))+r_f(D,x).
\]
The three summands are mutually orthogonal by query centering and
Eq.~\ref{eq:app-projection-orthogonality}. Squaring and averaging over
$x$, with $\mathbb{E}_x[xx^\top]=I_d$, gives
Eq.~\ref{eq:bayes-error-decomposition}.
\end{proof}

\textbf{Connection to excess prediction risk.}
Define the population squared risk under $P_k$ by
\begin{equation}
    R_k(f):=\mathbb{E}_{D,x,y_q}[(f(D,x)-y_q)^2],
    \label{eq:iv-risk-definitions}
\end{equation}
with $y_q$ as in Section~\ref{sec:controlled-priors}.
Since $f_k^*(D,x)=\mathbb{E}[y_q\mid D,x]$, conditional-expectation
orthogonality and Eq.~\ref{eq:bayes-error-decomposition} give
\begin{equation}
    R_k(f)-R_k(f_k^*)
    =\mathbb{E}_{D,x}[(f(D,x)-f_k^*(D,x))^2]
    =\mathbb{E}_D[C_f(D)+\zeta_f(D)^2+N_f(D)],
\end{equation}
where $C_f(D)=\|\eta_f(D)-\eta_k^*(c)\|_2^2$ and
$N_f(D)=\mathbb{E}_x[r_f(D,x)^2]$.

\begingroup
\section{Controlled experiments: setup and extended results}
\label{app:iv-details}

This appendix gives the experimental settings and complete results for
Section~\ref{sec:controlled-experiments}. We first describe training and
evaluation, then report the architecture comparison and its coefficient-space
decomposition. Population identities and proofs appear in
Appendix~\ref{app:coefficient-accounting}.

\subsection{Controlled architectures}
\label{app:network-architecture}

\begin{itemize}[leftmargin=*,topsep=0pt,parsep=0pt,itemsep=0pt]
    \item The \underbar{alternating-axis model} follows nanoTabPFN \citep{pfefferle2025nanotabpfn}, a compact reimplementation of TabPFN v2 \citep{hollmann2025tabpfn}.
    It embeds each feature cell separately, places the response in an additional token, and alternates feature- and observation-axis attention within each block.
    We adapt it to regression by giving its two-layer MLP decoder a scalar output and training with squared loss, and we read predictions from the query response tokens.
    \item The \underbar{row-token model} replaces the shared scalar cell encoder with a linear projection of the entire covariate vector, $\mathbb R^d\to\mathbb R^E$, adds a linear response embedding to each context row token, and removes feature-axis attention and its associated normalization.
    Query rows receive no response embedding, and predictions are read from their row tokens.
    We retain the observation-axis attention mask, post-norm feed-forward blocks, and scalar MLP decoder.
    This construction follows TabPFN v1's linear-encoder, post-norm configuration \citep{mueller2022tabpfn} and shares the linear row and response encoders and additive label injection of the original TabDPT architecture \citep{ma2025tabdpt}, although TabDPT uses different normalization.
\end{itemize}

\subsection{Experimental setup and evaluation protocol}
\label{app:iv-protocol}

\subsubsection{Tasks and model configurations}
\label{app:iv-configurations}

\textbf{Tasks.}
We use the priors in Section~\ref{sec:controlled-priors} with
$d\in\{10,20\}$, $k\in\{1,d/2,d\}$, $v^2=1$, and $\sigma=1.5$.
Each condition contains $T=32{,}000$ test tasks (data-generation seed 1), generated
in 1,000 batches of 32. Context length $n_t$ is shared within a batch
and ranges from 50 to 100, with $X_t^\top X_t=n_tI_d$.
Each task has $Q_{\mathrm{eval}}^{(t)}=150-n_t$ independent queries
$x_{tq}\sim\mathcal N(0,I_d)$.
All architectures and training seeds use
the same test contexts and queries. The Bayes target
$\eta_{k,n_t}^*(c_t)$ uses the task's actual context length.
Changing $d$ also changes $d/n_t$ and the per-coordinate prior variance;
the two dimensions test recurrence of the pattern, not an isolated dimension effect.

\textbf{Models and inference.}
We use the architectures defined in Section~\ref{sec:controlled-priors} and Appendix~\ref{app:network-architecture}.
Both have embedding width 96, feed-forward width 192, four attention heads,
GELU activations, post-normalization, and a two-layer scalar MLP decoder.
The row-token model has five blocks and 393,985/394,945 parameters at
$d=10/20$; the alternating-axis model has three blocks and 355,873 parameters
at either dimension. Counts include the encoders and decoder. This
approximately matches parameter scale; token counts and computation differ.
Features are standardized using context statistics and clipped to
$[-100,100]$. The alternating-axis model uses one token per feature plus
a response token, with unknown query responses initialized to the mean
context response. Observation-axis attention permits context self-attention
and query-to-context attention only. Inference preserves feature order and response units,
without feature augmentation, target scaling, or prediction ensembling.

\subsubsection{Hyperparameter selection and final training}
\label{app:iv-training}

\textbf{Shared search protocol.}
For each $(d,k)$, both architectures use schedule-free AdamW and the same
learning-rate grid
$\{5\times10^{-4},10^{-3},2\times10^{-3},4\times10^{-3}\}$.
Both architectures share one tuning dataset per $(d,k)$, with separate
training, validation, and hyperparameter-selection sets. Data-generation
seed 0 fixes these task sets. For each candidate learning rate, we train
three models with training seeds $\{0,1,2\}$, which vary model
initialization while keeping the task sets fixed. Each candidate's best
validation checkpoint is scored on the same 32,000 selection tasks using MSE pooled over noisy
query targets. We average this score over the three training runs and
select the learning rate with the lowest mean, separately for each
architecture and $(d,k)$.
Table~\ref{tab:iv-affine-training} reports the selected learning rates.

\textbf{Training and early stopping.}
The same training and stopping rules apply during hyperparameter search
and final training. All runs use batch size 32, zero weight decay, and
gradient clipping at norm 1. Maximum budgets are 10,000 steps for $d=10$
and 20,000 for $d=20$. Every 500 steps, we evaluate MSE against noisy
query labels on all 3,200 validation tasks. Training stops after five
consecutive checks without a new lowest validation MSE, or when the step
budget is exhausted. In either case, we restore the checkpoint with the
lowest validation MSE.

\textbf{Final training.}
With these hyperparameters fixed, we retrain each model from random
initialization on a newly generated training set from the same prior
(data-generation seed 1), using the same three training seeds and training
protocol. The training sets contain 320,000 tasks for $d=10$ and 640,000
for $d=20$. A separate 3,200-task validation set selects the final
checkpoint; its step is reported in Table~\ref{tab:iv-affine-training}.
The final 32,000-task test set is used only for evaluation.

\begin{table}[!htbp]
\centering
\small
\setlength{\tabcolsep}{3.5pt}
\caption{Selected learning rates for schedule-free AdamW and checkpoints from final training. Maximum step budgets are fixed by dimension; checkpoint steps are chosen by validation MSE and listed in training-seed order 0/1/2.}
\label{tab:iv-affine-training}
\begin{tabular}{@{}rrlrrl@{}}
\toprule
$d$ & $k$ & Model & Selected LR & Max. steps & Checkpoint step (0/1/2) \\
\midrule
10 & 1 & Row-token & 0.0010 & 10000 & 10000 / 10000 / 10000 \\
10 & 1 & Alternating-axis & 0.0010 & 10000 & 9500 / 9500 / 9500 \\
10 & 5 & Row-token & 0.0010 & 10000 & 9500 / 10000 / 10000 \\
10 & 5 & Alternating-axis & 0.0010 & 10000 & 9500 / 10000 / 10000 \\
10 & 10 & Row-token & 0.0010 & 10000 & 10000 / 10000 / 7000 \\
10 & 10 & Alternating-axis & 0.0010 & 10000 & 10000 / 10000 / 6000 \\
20 & 1 & Row-token & 0.0020 & 20000 & 19000 / 20000 / 20000 \\
20 & 1 & Alternating-axis & 0.0005 & 20000 & 20000 / 20000 / 20000 \\
20 & 10 & Row-token & 0.0010 & 20000 & 18500 / 12500 / 19000 \\
20 & 10 & Alternating-axis & 0.0010 & 20000 & 20000 / 10500 / 20000 \\
20 & 20 & Row-token & 0.0010 & 20000 & 12500 / 12500 / 12500 \\
20 & 20 & Alternating-axis & 0.0005 & 20000 & 20000 / 18500 / 19500 \\
\bottomrule
\end{tabular}
\end{table}

\subsubsection{Evaluation}
\label{app:iv-metrics}

All quantities below are defined for fixed $d,k$ and one training seed;
the seed index is suppressed. For test dataset $D_t$, we estimate the
normalized Bayes approximation error in Eq.~\ref{eq:iv-relative-error} by
\begin{equation}
    \widehat{\mathcal E}_{a,k}^{(t)}
    :=\frac{\displaystyle\frac{1}{Q_{\mathrm{eval}}^{(t)}}
    \sum_{q=1}^{Q_{\mathrm{eval}}^{(t)}}
    [\widehat f_{a,k}(D_t,x_{tq})-f_k^*(D_t,x_{tq})]^2}
    {s_k^{(t)}}.
    \label{eq:iv-error-estimate}
\end{equation}
The denominator $s_k^{(t)}$ is the dataset's Bayes prediction energy.
All evaluated datasets have $s_k^{(t)}>10^{-12}$ and are retained without a denominator offset. 

\subsection{Extended architecture--prior comparison}
\label{app:iv-performance}

\subsubsection{Dense-prior reference}
\label{app:iv-dense-reference}

For a dataset $D$ drawn under $P_k$, we define the dense-prior prediction
gap as
\begin{equation}
    \begin{aligned}
        \Delta_k^{\mathrm{dense}}(D)
        &:=\mathbb E_x\!\left[(f_d^*(D,x)-f_k^*(D,x))^2\right]\\
        &=\|\lambda_dc-\eta_k^*(c)\|_2^2,
        \qquad c=c(D).
    \end{aligned}
    \label{eq:iv-dense-reference-error}
\end{equation}
The equality follows from query isotropy. This gap measures the prediction
cost of applying uniform dense-prior shrinkage instead of the Bayes rule
under $P_k$. Its expectation is strictly positive for $k<d$ and zero for
$k=d$ (Theorem~\ref{thm:fixed-kernel-gap}).

Figure~\ref{fig:app-iv-reference-and-architecture-gaps}(a) shows that exploiting the sparse
prior reduces expected squared prediction error relative to the dense-prior
rule. For both dimensions, this gain is largest at $k=1$, decreases at
$k=d/2$, and vanishes at $k=d$, where the two predictors coincide.

\subsubsection{Learned predictors}

For each test dataset, the paired architecture gap is
\begin{equation}
    \widehat G_k^{(t)}
    :=\widehat{\mathcal E}_{\text{row-token},k}^{(t)}
    -\widehat{\mathcal E}_{\text{alternating-axis},k}^{(t)}.
    \label{eq:iv-architecture-gap}
\end{equation}
Positive values favor the alternating-axis model.
Figure~\ref{fig:app-iv-reference-and-architecture-gaps}(b) shows a large
mean advantage under sparse priors and a small positive mean advantage
at the dense endpoints for all three seeds.

\begin{figure}[!tbp]
    \centering
    \input{figures/layouts/part_iv_reference_and_architecture_gaps}
    \caption{\textbf{Dense-prior and architecture gaps across support sizes.}
    (a) Mean dense-prior prediction gap $\Delta_k^{\mathrm{dense}}(D)$
    (Eq.~\ref{eq:iv-dense-reference-error}) over 32,000 test datasets per
    condition. The dense endpoint is exactly zero.
    (b) Per-dataset architecture gaps $\widehat G_k^{(t)}$
    (Eq.~\ref{eq:iv-architecture-gap}), averaged equally over the same
    32,000 test datasets for each seed. Positive values favor the
    alternating-axis model. Large markers show means over three seeds,
    small points show individual seed averages, and error bars show seed
    SDs (ddof=0). The axis label omits the dataset index.
    Circles/dashed lines denote $d=10$; squares/solid lines denote $d=20$.
    Panel (a) uses unnormalized errors on a linear axis; panel (b) uses
    task-normalized errors on a log axis. Support sizes are categorical.}
    \label{fig:app-iv-reference-and-architecture-gaps}
\end{figure}

\textbf{Task-level architecture gaps.}
Figure~\ref{fig:app-iv-task-gap-distributions} shows the distribution of
per-dataset architecture gaps, both after averaging over training seeds
and for individual seeds. The alternating-axis model outperforms
the row-token model on nearly all sparse datasets. Under dense priors,
its average advantage is small, and the better-performing architecture
varies across datasets.

\begin{figure}[!tbp]
    \centering
    \input{figures/layouts/part_iv_task_gap_distributions}
    \caption{\textbf{Task-level distributions of the architecture gap.}
    Rows correspond to $d=10$ and $d=20$; columns to
    $k=1$, $k=d/2$, and $k=d$.
    Positive gaps favor the alternating-axis model.
    Gray shading shows the distribution of per-dataset gaps
    $\widehat G_k^{(t)}$ after averaging over three training seeds;
    colored curves show the distributions for individual seeds.
    All distributions use the same histogram bins within each panel;
    the dashed line marks zero.
    Annotations report the positive fraction, mean, and median of the
    seed-averaged gaps, computed over all 32,000 tasks per condition.
    Axis ranges focus on the distribution bodies; the fraction of
    seed-averaged gaps outside each view is noted below the panel.
    Densities are normalized by the full task count.}
    \label{fig:app-iv-task-gap-distributions}
\end{figure}

\subsection{Coefficient-space accounting of the architecture gap}
\label{app:iv-accounting}

\subsubsection{Affine projection and error estimation}
\label{app:iv-projection}

\textbf{Estimating the affine component.}
For each context $D$, we estimate the affine projection of $f$
(Lemma~\ref{lem:coefficient-accounting}) using
$Q_{\mathrm{proj}}=1{,}000$ independent Gaussian queries
$x_m^{\mathrm{proj}}\sim\mathcal N(0,I_d)$:
\begin{equation}
    (\widehat\zeta_f(D),\widehat\eta_f(D))
    :=\arg\min_{\zeta\in\mathbb R,\,\eta\in\mathbb R^d}
    \frac{1}{Q_{\mathrm{proj}}}\sum_{m=1}^{Q_{\mathrm{proj}}}
    [f(D,x_m^{\mathrm{proj}})-\zeta-(x_m^{\mathrm{proj}})^\top\eta]^2.
    \label{eq:iv-affine-blp}
\end{equation}
Projection queries are independent of the context and evaluation queries,
and shared across models and intervention conditions. Fit targets are
model predictions in original response units, with preprocessing and model
randomness fixed; query labels are not used.

We solve the unregularized least-squares problem, including an intercept,
in float64 with \texttt{numpy.linalg.lstsq} (\texttt{rcond=None}).

\textbf{Error components.}
Using the fitted coefficients and independent evaluation queries, define
\begin{equation}
    \begin{aligned}
        \widehat C_f^{(t)}
        &:=\|\widehat\eta_f(D_t)-\eta_{k,n_t}^*(c_t)\|_2^2,\\
        \widehat N_f^{(t)}
        &:=\frac{1}{Q_{\mathrm{eval}}^{(t)}}
        \sum_{q=1}^{Q_{\mathrm{eval}}^{(t)}}\widehat r_{f,tq}^2,
        \qquad
        \widehat r_{f,tq}:=f(D_t,x_{tq})-\widehat\zeta_f(D_t)
        -x_{tq}^\top\widehat\eta_f(D_t).
    \end{aligned}
    \label{eq:iv-estimated-components}
\end{equation}
These give coefficient error $\widehat C_f^{(t)}$, squared-intercept
error $\widehat\zeta_f(D_t)^2$, and fitted residual error
$\widehat N_f^{(t)}$. For $f=\widehat f_{a,k}$, define the normalized
components on dataset $D_t$ as
\begin{equation}
    \widehat C_{a,k}^{(t)}:=\frac{\widehat C_f^{(t)}}{s_k^{(t)}},\qquad
    \widehat Z_{a,k}^{(t)}:=\frac{\widehat\zeta_f(D_t)^2}{s_k^{(t)}},\qquad
    \widehat N_{a,k}^{(t)}:=\frac{\widehat N_f^{(t)}}{s_k^{(t)}}.
    \label{eq:iv-normalized-components}
\end{equation}
All three use the same denominator as $\widehat{\mathcal E}_{a,k}^{(t)}$.

\subsubsection{Full component results across dimensions}
\label{app:iv-component-results}

\textbf{Error components.}
Figures~\ref{fig:iv-affine-levels-d10}
and~\ref{fig:app-iv-affine-levels-d20} show the component errors at
$d=10$ and $d=20$, respectively. Under sparse priors, the alternating-axis
model has substantially lower coefficient error, whereas intercept and
residual errors are closer between architectures. At $d=20,k=1$, coefficient
error is $0.685$ for the row-token model and $0.0603$ for the alternating-axis
model. At the dense endpoint, the row-token model has slightly lower
coefficient error but higher intercept and residual errors in both dimensions.

\begin{figure}[!tbp]
    \centering
    \input{figures/layouts/part_iv2_absolute_errors_d20}
    \caption{\textbf{Task-normalized error components at $\boldsymbol{d=20}$.}
    Bars stack coefficient, squared-intercept, and residual errors for
    row-token and alternating-axis predictors at $k=1,10,20$.
    For each seed, components are averaged equally over the same 32,000
    test datasets. Bars show means over three seeds; points mark each
    seed's cumulative component means at the stack boundaries. Colors match
    Figure~\ref{fig:iv-affine-levels-d10}.}
    \label{fig:app-iv-affine-levels-d20}
\end{figure}

\textbf{Architecture gaps.}
To locate the prediction gap $\widehat G_k^{(t)}$
(Eq.~\ref{eq:iv-architecture-gap}), we compare the normalized components
on the same dataset:
\begin{equation}
    \begin{aligned}
        \widehat G_k^{C,(t)}&:=\widehat C_{\text{row-token},k}^{(t)}
        -\widehat C_{\text{alternating-axis},k}^{(t)},\\
        \widehat G_k^{\zeta,(t)}&:=\widehat Z_{\text{row-token},k}^{(t)}
        -\widehat Z_{\text{alternating-axis},k}^{(t)},\\
        \widehat G_k^{N,(t)}&:=\widehat N_{\text{row-token},k}^{(t)}
        -\widehat N_{\text{alternating-axis},k}^{(t)}.
    \end{aligned}
    \label{eq:iv-component-gap-definitions}
\end{equation}
Positive gaps favor the alternating-axis model within that component.
The superscript $\zeta$ denotes a difference in \emph{squared} intercepts.

Figure~\ref{fig:app-iv-affine-accounting} shows that coefficient error
dominates the sparse architecture gap. At $d=10$, the between-architecture coefficient-error difference
accounts for 98.4\% and 97.5\% of the mean prediction-error gap at $k=1$ and $k=5$.
At $d=20$, the corresponding shares are 99.7\% and 98.0\% at $k=1$ and
$k=10$. At both dense endpoints, the negative coefficient gap is offset
by positive intercept and residual gaps, leaving a small positive
prediction gap.

\begin{figure}[!tbp]
    \centering
    \input{figures/layouts/part_iv2_component_gaps}
    \caption{\textbf{The sparse architecture gap lies mainly in coefficients.}
    Rows correspond to $d=10,20$; columns to $k=1,d/2,d$.
    Bars show row-token minus alternating-axis gaps in
    task-normalized prediction, coefficient, squared-intercept, and residual
    error. For each seed, gaps are averaged equally over the same 32,000
    test datasets. Bars show means over three seeds; points show individual
    seed averages; error bars show seed SDs (ddof=0). Positive values favor
    the alternating-axis model. Hollow diamonds show component
    sums; insets magnify the two small non-coefficient gaps.
    Main panels share vertical ranges within each column. Labels
    $G$, $G^C$, $G^\zeta$, and $G^N$ suppress hats and the indices $k,t$
    (Eqs.~\ref{eq:iv-architecture-gap} and~\ref{eq:iv-component-gap-definitions}).}
    \label{fig:app-iv-affine-accounting}
\end{figure}

\subsubsection{Active and inactive coordinate contributions}
\label{app:iv-support}

\textbf{Coefficient errors by support group.}
For the realized support $S_t$, we split each model's coefficient error
into active and inactive contributions:
\begin{equation}
    \begin{aligned}
        \widehat C_{f,\mathrm{active}}^{(t)}
        &:=\sum_{j\in S_t}
        (\widehat\eta_{f,j}(D_t)-\eta_{k,n_t,j}^*(c_t))^2,\\
        \widehat C_{f,\mathrm{inactive}}^{(t)}
        &:=\sum_{j\notin S_t}
        (\widehat\eta_{f,j}(D_t)-\eta_{k,n_t,j}^*(c_t))^2.
    \end{aligned}
    \label{eq:iv-support-components}
\end{equation}
Both terms measure squared deviations from the Bayes coefficient vector
and sum to $\widehat C_f^{(t)}$. For architecture $a$ trained under $P_k$,
let $f=\widehat f_{a,k}$ and define the task-normalized group errors as
\begin{equation}
    \widehat C_{a,k,b}^{(t)}:=\frac{\widehat C_{f,b}^{(t)}}{s_k^{(t)}},
    \qquad b\in\{\mathrm{active},\mathrm{inactive}\}.
    \label{eq:iv-support-levels}
\end{equation}
Figure~\ref{fig:app-iv-support-levels} decomposes each model's coefficient
error using the same normalization as Figure~\ref{fig:iv-affine-levels-d10}.
Under sparse priors, the alternating-axis model has lower error in both
groups. At $k=d$, every coordinate is active and the inactive error is zero.

\begin{figure}[!tbp]
    \centering
    \input{figures/layouts/part_iv_support_levels}
    \caption{\textbf{Active and inactive coefficient errors by architecture.}
    Stacked bars show task-normalized group errors
    (Eq.~\ref{eq:iv-support-levels}); each bar sums to the model's
    coefficient error. Rows correspond to $d=10,20$ and columns to
    $k=1,d/2,d$. For each seed, group errors are averaged equally over
    the same 32,000 test datasets. Bars show means over three seeds;
    points mark each seed's mean active and total coefficient errors
    at the corresponding stack boundaries.
    At $k=d$, the inactive contribution
    is zero. Vertical ranges are shared across dimensions within each column.}
    \label{fig:app-iv-support-levels}
\end{figure}

\textbf{Coefficient errors per coordinate.}
For sparse priors, we divide $\widehat C_{a,k,\mathrm{active}}^{(t)}$ by $k$
and $\widehat C_{a,k,\mathrm{inactive}}^{(t)}$ by $d-k$ to obtain the mean
error per coordinate in each group.
Figure~\ref{fig:app-iv-support-per-coordinate} shows that active coordinates
have higher per-coordinate error for both architectures across the sparse
conditions. At $k=1$, total error is higher in the larger inactive group.
The alternating-axis model has lower per-coordinate
error in both groups.

\begin{figure}[!tbp]
    \centering
    \input{figures/layouts/part_iv_support_per_coordinate}
    \caption{\textbf{Coefficient errors per coordinate under sparse priors.}
    Each model's task-normalized active and inactive errors
    (Eq.~\ref{eq:iv-support-levels}) are divided by $k$ and $d-k$,
    respectively. Rows correspond to $d=10,20$ and columns to $k=1,d/2$.
    Colors match Figure~\ref{fig:app-iv-support-levels}.
    Per-coordinate errors are averaged equally over the same 32,000
    test datasets for each seed. Diamonds show means over three seeds,
    circles show individual seed averages, and error bars show seed SDs
    (ddof=0). All panels share a logarithmic vertical axis.}
    \label{fig:app-iv-support-per-coordinate}
\end{figure}

\subsubsection{Ordinary least squares as a coefficient baseline}
\label{app:iv-ols}

To distinguish recovering the context statistic from computing the Bayes
coefficient map, we add ordinary least squares (OLS) as a baseline.
Under the orthogonal design, its coefficients and prediction are
\begin{equation}
    c_t=(X_t^\top X_t)^{-1}X_t^\top y_t
       =\frac{X_t^\top y_t}{n_t},
    \qquad f_{\mathrm{OLS}}(D_t,x)=x^\top c_t.
    \label{eq:iv-ols-predictor}
\end{equation}
We compute $c_t$ directly from each of the same 32,000 test contexts.
Its affine projection has
coefficient $c_t$, zero intercept, and zero nonlinear residual. Thus its
task-normalized coefficient error is
\begin{equation}
    C_{\mathrm{OLS},k}^{(t)}
    :=\frac{\|c_t-\eta_{k,n_t}^*(c_t)\|_2^2}{s_k^{(t)}},
    \qquad s_k^{(t)}=\|\eta_{k,n_t}^*(c_t)\|_2^2.
    \label{eq:iv-ols-coefficient-error}
\end{equation}
This is also its population relative prediction error under
$x\sim\mathcal N(0,I_d)$. We use the same support partition and equal-task
averaging as in Eq.~\ref{eq:iv-support-levels}.

\begin{figure}[!tbp]
    \centering
    \includegraphics[width=\linewidth]{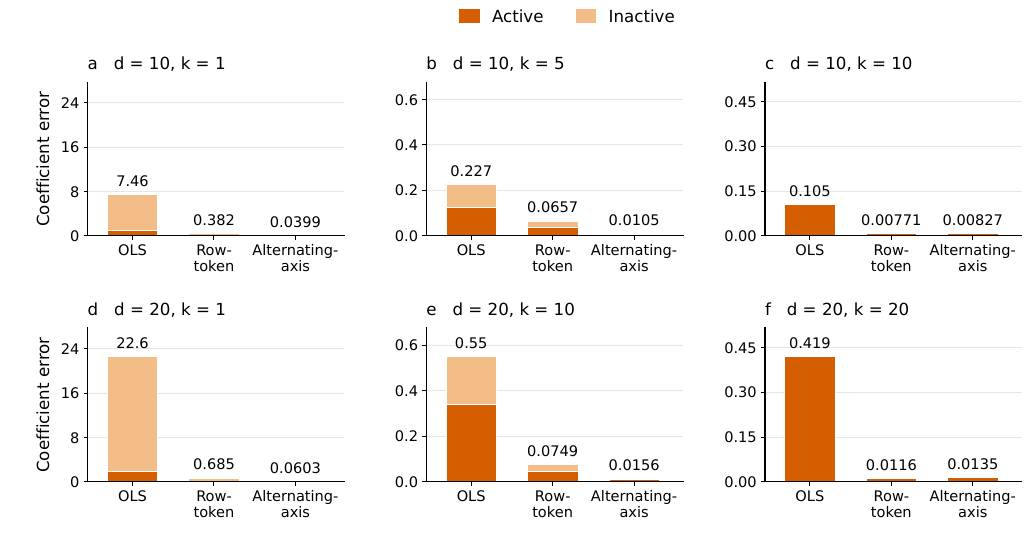}
    \caption{\textbf{OLS and learned-model coefficient errors relative to Bayes.}
    Rows correspond to $d=10,20$ and columns to $k=1,d/2,d$.
    Bars stack task-normalized active and inactive coefficient errors;
    annotations give their sum. Every bar uses the same 32,000 test
    datasets per condition. OLS is deterministic given the context;
    learned-model bars average three training seeds and reproduce
    Figure~\ref{fig:app-iv-support-levels}. At $k=d$, the inactive sum is zero.
    Linear vertical ranges are shared within each column.}
    \label{fig:app-iv-ols-comparison}
\end{figure}

OLS has higher mean coefficient error than either trained architecture
in all six conditions (Figure~\ref{fig:app-iv-ols-comparison}). At $k=1$,
inactive coordinates contribute 86.0\% and 91.9\% of its mean error at
$d=10$ and $d=20$, respectively. Indeed,
$c_t=\beta+X_t^\top\varepsilon/n_t$ retains estimation noise on every
coordinate, with conditional variance $\sigma^2/n_t$ per coordinate.
The Bayes map instead applies both posterior inclusion weights and
shrinkage, $\eta_{k,n_t}^*(c_t)=\lambda_{k,n_t}q_k(c_t)\odot c_t$.
Both predictions are linear in the query; the difference lies in how
their coefficients depend on the context.

\paragraph{Coefficient shrinkage.}
Figure~\ref{fig:app-iv-ols-coefficients} shows the relation between OLS and
Bayes coefficients. Although OLS recovers the sufficient statistic $c_t$,
it does not implement the prior-dependent posterior coefficient map. For
sparse priors,
$\eta_{k,n_t}^*(c_t)=\lambda_{k,n_t}q_k(c_t)\odot c_t$ combines coordinate
selection with shrinkage: coordinates with weak posterior support are mapped
toward zero, while coordinates with stronger evidence retain a larger fraction
of their OLS coefficients. This separation is most pronounced at $k=1$ and
contracts as the support size increases. At $k=d$, posterior inclusion is
identically one and the map reduces to uniform shrinkage,
$\eta_{d,n_t}^*(c_t)=\lambda_{d,n_t}c_t$, yielding the exact identity
\begin{equation}
    C_{\mathrm{OLS},d}^{(t)}
    =\left(\frac{1-\lambda_{d,n_t}}{\lambda_{d,n_t}}\right)^2
    =\left(\frac{d\sigma^2}{n_tv^2}\right)^2.
    \label{eq:iv-ols-dense-check}
\end{equation}
Its task averages are $0.104774$ and $0.419098$ at $d=10,20$.

\begin{figure}[!tbp]
    \centering
    \includegraphics[width=.94\linewidth]{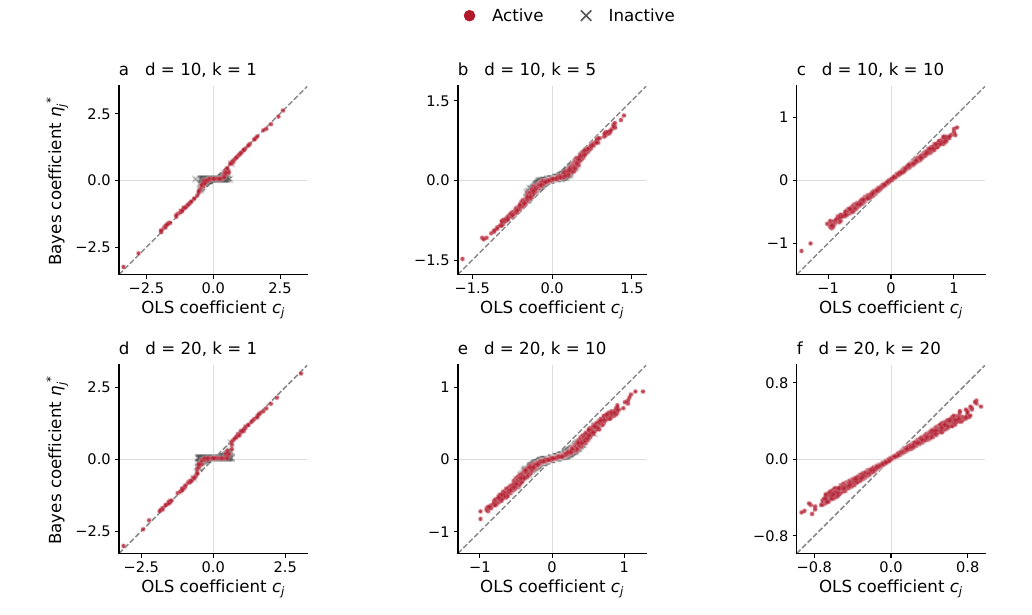}
    \caption{\textbf{OLS coefficients and posterior-mean coefficients.}
    Each panel plots all coordinates from 150 fixed, randomly sampled
    test tasks. Dark red circles denote active coordinates and dark gray crosses
    denote inactive coordinates in the realized support.
    Dashed lines indicate equal coefficients. At $k=d$, variation in
    context length produces different shrinkage slopes across tasks.
    Error summaries use the complete 32,000-task test set.}
    \label{fig:app-iv-ols-coefficients}
\end{figure}

\endgroup

\section{Feature-message interventions: methods and additional results}
\label{app:cross-scale}

This appendix supports Section~\ref{sec:cross-scale-gating}. We specify the
intervention and statistics, assess the coefficient description of prediction
changes, and compare context- and query-message effects. We then examine
response direction, attenuation strength, and support size. The primary
analysis uses $k=1$; Appendix~\ref{app:cross-scale-support-size} extends it to
$k=d/2$ and $k=d$.

\subsection{Intervention protocol and statistical aggregation}
\label{app:cross-scale-protocol}

\paragraph{Tasks and models.}
The results reported here use $k=1$ and $d\in\{10,20\}$ under the setup in
Appendix~\ref{app:iv-configurations}. At each dimension, all interventions use
the same 128 tasks for the alternating-axis model and frozen TabPFN~v2.
The task inputs, targets, and query sets are identical across models and row scopes.
These tasks are the first 128 original contexts in the saved one-sparse cohort;
selection precedes all intervention summaries.

The alternating-axis model uses the final seed-0 checkpoint trained under
$P_1$ at each dimension (Appendix~\ref{app:iv-training}).
Frozen TabPFN v2 uses \texttt{tabpfn-v2-regressor-09gpqh39.ckpt}
with package version 8.0.8, one estimator, original feature order, and no
augmentation. We retain its native encoding, normalization, and
bar-distribution mean, returning predictions in original response units.
Inference uses float32. Layers are numbered from one throughout.

\paragraph{Feature-message intervention.}
Let $I(g)$ contain the raw-feature indices in source token $g$.
For the alternating-axis model, $I(g)=\{g\}$; for TabPFN v2,
$|I(g)|=2$ at both evaluated dimensions. A source is active if its group
contains an active coordinate; this status is used only for analysis.
We intervene on each source separately at every layer, holding the context
and query sets fixed. At layer $\ell$, let $\alpha_{i,p,g}^{(\ell,h)}$
be the attention weight from source $g$ to receiver $p$ in row $i$ and
head $h$, and let $\mathbf v_{i,g}^{(\ell,h)}$ be its value vector.
For a selected row set $\mathcal R$, the projected source message and
modified attention output are
\begin{equation}
\begin{aligned}
 m_{i,p\leftarrow g}^{(\ell)}
 &=W_O^{(\ell)}\operatorname{concat}_h
 [\alpha_{i,p,g}^{(\ell,h)}\mathbf v_{i,g}^{(\ell,h)}],\\
 o_{ip}^{(\ell)}(\delta;g,\mathcal R)
 &=o_{ip}^{(\ell)}(0)-\delta\,\mathbf1\{i\in\mathcal R\}
 m_{i,p\leftarrow g}^{(\ell)}.
\end{aligned}
\label{eq:nano-source-message-attenuation}
\end{equation}
We set $\delta=0.1$, with dose comparisons
at $0.05$ and $0.2$. Context and query interventions use their respective row
sets $\mathcal R$; joint interventions use both sets. The same multiplier
applies to every receiving token and attention head in the selected rows.
Values include their projection bias; the output-projection bias is unchanged.
Attention weights at the modified operation are held fixed without
renormalization, and downstream computation is rerun. The alternating-axis
implementation subtracts the projected source message; the TabPFN adapter
scales the source value before aggregation.

\paragraph{Coefficient sensitivity.}
For each context $D$, we fit the baseline and intervened predictors on the
same queries using the affine model
\[
 \widehat\zeta(D)+\sum_r\widehat\eta_r(D)x_r.
\]
Appendix~\ref{app:iv-projection}, Eq.~\ref{eq:iv-affine-blp}, gives the estimator.
Here $\widehat\eta_r(D)$ is the fitted slope for query feature $x_r$.
Subscripts $0$ and $\delta,\ell g$ denote the baseline and intervened
predictors. Extending Eq.~\ref{eq:nano-route-matrix-estimate} to grouped sources gives
\[
 \widehat J_{rg}^{(\ell)}(D;\delta,\mathcal R)
 =\frac{\widehat\eta_{0,r}(D)
 -\widehat\eta_{\delta,\ell g,\mathcal R,r}(D)}{\delta}.
\]
We suppress the row-set argument $\mathcal R$ below.
Attenuating source $g$ therefore changes coefficient $r$ by
$-\delta\widehat J_{rg}^{(\ell)}$.

For isotropic Gaussian queries, the population slope is $\eta_f(D)=\mathbb E_x[xf(D,x)]$, hence
\[
 J_{rg}^{(\ell)}
 =\mathbb E_x\!\left[
 x_r\,\frac{f_0(D,x)-f_{\delta,\ell g}(D,x)}{\delta}
 \right].
\]
Thus $J$ measures the feature-aligned component of the prediction response; with the same query design, its least-squares estimate is equivalently obtained by regressing the prediction difference divided by $\delta$ on the query features, including an intercept.

To normalize for the magnitude of context statistics, let
$c=X^\top y/n$ and $u_g=\sum_{r\in I(g)}|c_r|$. We define the
\emph{coefficient sensitivity} as the total absolute own-feature
response divided by $u_g$:
\begin{equation}
 w_g^{(\ell)}
 =\frac{\sum_{r\in I(g)}|\widehat J_{rg}^{(\ell)}|}{u_g},
 \qquad u_g>0.
 \label{eq:grouped-coefficient-control}
\end{equation}
For a single-feature source, $w_j^{(\ell)}=|\widehat J_{jj}^{(\ell)}|/|c_j|$
when $c_j\ne0$. Thus $w_g^{(\ell)}\geq0$;
larger values indicate stronger own-feature coefficient effects per unit
context-statistic magnitude. All evaluated sources have $u_g>0$.

\paragraph{Prediction-error change.}
Using a shared set of held-out queries, we compare the baseline predictor $f_0$ and
the intervened predictor $f_{\delta,\ell g}$ against the Bayes prediction
under $P_k$:
\begin{equation}
 \begin{aligned}
 E(f;D)&=\frac{1}{Q_{\mathrm{eval}}}
 \sum_{m=1}^{Q_{\mathrm{eval}}}
 \left[f(D,x_m^{\mathrm{eval}})
       -(x_m^{\mathrm{eval}})^\top\eta_k^*(c)\right]^2,\\
 \Delta E_{\ell g}&=E(f_{\delta,\ell g};D)-E(f_0;D).
 \end{aligned}
 \label{eq:route-prediction-error}
\end{equation}
Positive $\Delta E_{\ell g}$ indicates that attenuation worsens Bayes
approximation; negative values indicate improvement.

\paragraph{Specificity.}
For a fixed context, layer, and dose, first compute each source's own-feature
response fraction,
\begin{equation}
 p_g^{(\ell)}(D;\delta)=
 \frac{\sum_{r\in I(g)}|\widehat J_{rg}^{(\ell)}|}
      {\sum_{r=1}^d|\widehat J_{rg}^{(\ell)}|}.
 \label{eq:route-specificity}
\end{equation}
For a source role $R\in\{\mathrm{active},\mathrm{inactive}\}$, specificity is
the arithmetic mean over its source set, so each source receives equal weight
regardless of its total response magnitude:
\begin{equation}
 P_R^{(\ell)}(D;\delta)=
 \frac{1}{|\mathcal G_R(D)|}
 \sum_{g\in\mathcal G_R(D)}p_g^{(\ell)}(D;\delta).
 \label{eq:route-specificity-role}
\end{equation}
\paragraph{Aggregation and uncertainty.}
The plotted points are arithmetic means of these context-level quantities,
with equal weights across contexts. For example, at $d=10$, $k=1$ in the
alternating-axis model, inactive specificity is the average of nine separate
source fractions within each context. Active and inactive specificities do
not sum to one. Ratios with source denominators at most $10^{-12}$ are
undefined and excluded before taking the source mean; no such ratios occur
in the plotted cohorts. A context with no sources in a role is omitted for
that role. In particular, no inactive curve exists at $k=d$.
Coefficient sensitivity $w$ and error change $\Delta E$ are averaged over
sources within each role and context, then summarized by the median across
contexts. Specificity uses the equal-source and equal-context means above.
Intervals are pointwise 95\% percentile confidence intervals from 2,000
generator-batch resamples at a fixed checkpoint. The one-sparse cohort
contains 122 original batches per dimension. Each draw retains all selected contexts
from a sampled batch and recomputes the context-level mean or median.
Draws are shared across models, row scopes, layers, doses, and source roles
within each dimension and prior.

\subsection{Validity of the coefficient-response interpretation}
\label{app:cross-scale-geometry}
\label{app:cross-scale-reconstruction}
\label{app:cross-scale-fidelity}

We assess how well fitted coefficient changes reconstruct the prediction
response, then use response matrices to locate these changes by feature.

\paragraph{Fidelity to the prediction change.}
Fix a context, layer $\ell$, and source $g$, and suppress $\ell,g$ in the
following definitions. On held-out queries, write
$\Delta f_m=f_{\delta,\ell g}(D,x_m^{\rm eval})
-f_0(D,x_m^{\rm eval})$ and
$\Delta\widehat\eta=\widehat\eta_{\delta,\ell g}-\widehat\eta_0$.
The centered prediction change is
\[
 \widetilde{\Delta f}_m:=\Delta f_m-\overline{\Delta f},
\]
and its linear reconstruction from the fitted coefficient change is
\[
 \widetilde{\Delta f}^{\,\mathrm{lin}}_m
 :=(x_m^{\rm eval}-\bar x^{\rm eval})^\top\Delta\widehat\eta.
\]
Bars denote held-out query means. We measure reconstruction fidelity by
\begin{equation}
 \widehat R^2_{\Delta,\ell g}=1-
 \frac{\sum_m\bigl(\widetilde{\Delta f}_m
       -\widetilde{\Delta f}^{\,\mathrm{lin}}_m\bigr)^2}
      {\sum_m\bigl(\widetilde{\Delta f}_m\bigr)^2}.
 \label{eq:full-intervention-fidelity}
\end{equation}
The numerator is the squared reconstruction error; the denominator is the
total squared centered prediction change. Values near one indicate that
coefficient changes faithfully capture the intervention's effect across
queries, after removing its mean shift. We exclude ratios with
denominators at most $10^{-12}$ and average valid scores within each source
role and context.

For separate-row interventions, final-layer active-source median fidelity
ranges from $0.81$ to $0.97$ under context attenuation and from $0.97$ to
$0.98$ under query attenuation across models and dimensions. Thus, fitted
coefficient changes capture most of the centered active-source response,
with higher fidelity for query interventions.

Figure~\ref{fig:message-fidelity} gives the full depth profiles under joint
attenuation. Final-layer active-source medians exceed $0.96$ in both models
and dimensions. Inactive-source medians are $0.94$ and $0.83$ in the
alternating-axis model at $d=10$ and $d=20$, and $0.71$ and $0.67$ in
TabPFN~v2. The coefficient description is therefore more complete for
active-source responses.

\begin{figure}[!tbp]
 \centering
 \input{figures/layouts/appendix_d_fidelity}
 \caption{\textbf{Fidelity of coefficient changes to prediction changes.}
 All layers, $k=1$, $\delta=0.1$, joint context-and-query attenuation.
 Panels compare active and inactive sources within (a) the alternating-axis
 model and (b) frozen TabPFN~v2, each using the same 128 tasks per dimension.
 Colors distinguish $d=10$ and $d=20$.
 Solid and dashed lines show active- and inactive-source medians of
 within-context role-averaged $\widehat R^2_\Delta$, respectively, with equal
 context weights. Bands use the pointwise 95\% bootstrap procedure in
 Appendix~\ref{app:cross-scale-protocol}.
 Higher values indicate
 better fidelity ($\uparrow$); the maximum is one, and values can be negative.}
 \label{fig:message-fidelity}
\end{figure}

\paragraph{Illustrative response matrices.}
Figure~\ref{fig:message-response-matrices-context-d10} complements the
query interventions in Figure~\ref{fig:message-response-matrices} with
context interventions on the same $d=10$ task.
Figure~\ref{fig:message-response-matrices-d20} shows both row scopes at $d=20$. At each dimension, the example
is selected by proximity to the median Bayes coefficient norm among the
same 128 tasks, with the smallest task ID breaking ties. Within each
dimension, the same context is shown for both row scopes at all three layers.
In both examples, the largest final-layer response occurs on the active
source's own coefficient under either intervention, with a larger magnitude
for query attenuation. At $d=20$, the final-layer own-coefficient responses
have opposite signs across the two row scopes, illustrating distinct
context and query effects within the same task.

\begin{figure}[!tbp]
 \centering
 \input{figures/layouts/appendix_d10_context_matrices}
 \caption{\textbf{Coefficient responses to context-message attenuation.}
 Alternating-axis model, $d=10$, $k=1$, $\delta=0.1$; panels show layers 1--3.
 Each matrix column attenuates one source on context rows.
 The task, matrix axes, per-layer color scales, and gold outlines match
 Figure~\ref{fig:message-response-matrices}, allowing direct comparison with query interventions.}
 \label{fig:message-response-matrices-context-d10}
\end{figure}

\begin{figure}[!tbp]
 \centering
 \input{figures/layouts/appendix_d_response_matrices}
 \caption{\textbf{Coefficient responses within a $\boldsymbol{d=20}$ task.}
 Alternating-axis model, $k=1$, $\delta=0.1$.
 Top: context interventions; bottom: query interventions. Columns show layers 1--3.
 The same task is used throughout; context and query panels share a symmetric
 color scale within each layer. Matrix axes and gold outlines follow
 Figure~\ref{fig:message-response-matrices}.}
 \label{fig:message-response-matrices-d20}
\end{figure}

\subsection{Context- and query-message effects across dimensions}
\label{app:cross-scale-row-scope}

\paragraph{Predictive consequences at $d=10$.}
Figure~\ref{fig:scope-prediction-effects-d10} complements the main-text
specificity and sensitivity panels with held-out Bayes-target error changes.
In TabPFN~v2, final-layer query attenuation increases error for active groups
and decreases it for inactive groups. In the alternating-axis model, the
active-query median is smaller and its 95\% interval spans zero.
These effects distinguish coefficient control from predictive usefulness.

\begin{figure}[!tbp]
\centering
\begin{subfigure}[t]{0.45\linewidth}
 \setcounter{subfigure}{2}
 \phantomsubcaption
 \label{fig:section4_scope_effects:c}
 \includegraphics[width=\linewidth]{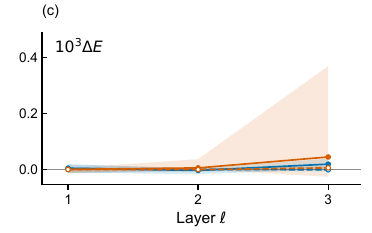}
\end{subfigure}\hfill
\begin{subfigure}[t]{0.45\linewidth}
 \setcounter{subfigure}{5}
 \phantomsubcaption
 \label{fig:section4_scope_effects:f}
 \includegraphics[width=\linewidth]{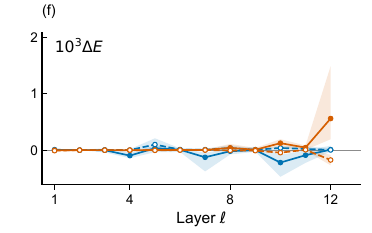}
\end{subfigure}
\caption{\textbf{Predictive consequences of context- and query-message attenuation.}
Panel (c): alternating-axis model; panel (f): frozen TabPFN~v2.
$d=10$, $k=1$, $\delta=0.1$; the same 128 tasks as in
Figure~\ref{fig:cross-scale-signed}.
Curves show median prediction-error changes after averaging sources within
each role and task; bands are pointwise 95\% generator-batch bootstrap intervals.
Panels (c) and (f) complete Figure~\ref{fig:cross-scale-signed}, whose panels (a, b, d, e) appear in the main text.}
\label{fig:scope-prediction-effects-d10}
\end{figure}

\paragraph{Replication at $d=20$.}

Figure~\ref{fig:cross-scale-scope-d20} repeats the separate-row interventions
at $d=20$. Active sources have higher specificity at every layer in both
models and row scopes. Final-layer active-source sensitivity also exceeds
inactive-source sensitivity in all four comparisons. In the alternating-axis
model, attenuating active sources in either row set increases final-layer
prediction error. In TabPFN~v2, the larger prediction-error contrast occurs
under query attenuation, as at $d=10$.

\begin{figure}[!tbp]
 \centering
 \input{figures/layouts/appendix_d_scope_effects}
 \caption{\textbf{Context and query message effects at $\boldsymbol{d=20}$.}
 $k=1$, $\delta=0.1$, 128 shared tasks. Top: alternating-axis model;
 bottom: frozen TabPFN~v2. Layout, source-role summaries, and confidence
 intervals follow Figure~\ref{fig:cross-scale-signed}.}
 \label{fig:cross-scale-scope-d20}
\end{figure}

\subsection{Response direction and predictive usefulness}
\label{app:cross-scale-direction}

Specificity and sensitivity measure the location and magnitude of a response.
We next examine its direction and relation to prediction error.

\paragraph{Signed sensitivity.}
For each learned-model source, we orient its own-feature responses by the
corresponding context statistics and define
\begin{equation}
 s_g^{(\ell)}=
 \frac{\sum_{r\in I(g)}\operatorname{sign}(c_r)\widehat J_{rg}^{(\ell)}}
      {\sum_{r\in I(g)}|c_r|}.
 \label{eq:signed-source-sensitivity}
\end{equation}
For the alternating-axis model, this reduces to
$s_j^{(\ell)}=\widehat J_{jj}^{(\ell)}/c_j$.
Positive values indicate that retaining the message supports the direction
of $c$; negative values indicate an opposing response. For native
TabPFN~v2 groups, the numerator sums the two direction-aligned coordinate
responses. The magnitude satisfies $|s_g|\leq w_g$, with equality for
single-feature sources. We also report the unnormalized own-feature
magnitude $a_g=\sum_{r\in I(g)}|\widehat J_{rg}|$.

Figure~\ref{fig:cross-scale-signed-distribution} shows the distribution of
signed sensitivity for active sources at the final feature-attention layer,
using the same 128 one-sparse tasks at $\delta=0.1$.
Each task has exactly one active source in both
models. Active query responses are negative in 56 tasks at
$d=10$ and seven at $d=20$ in the alternating-axis model, compared with one
and zero in TabPFN~v2. Among these negative-query cases, the alternating-axis
model has median $w_g=0.199$ and $a_g=0.063$ at $d=10$, versus $0.026$
and $0.002$ at $d=20$. Context interventions yield more frequent negative
responses than query interventions in every model--dimension comparison.
The distributions therefore distinguish response direction from the
unsigned selectivity measured in Figure~\ref{fig:cross-scale-signed}.

\begin{figure}[!tbp]
 \centering
 \input{figures/layouts/appendix_signed_row_scope}
 \caption{\textbf{Direction and magnitude of active-source control.}
 Final-layer empirical distributions of signed sensitivity $s_g$, with
 $k=1$, $\delta=0.1$, and the same 128 tasks per dimension and model.
 Top: alternating-axis model; bottom: TabPFN~v2. Left: $d=10$; right: $d=20$.
 Colors distinguish context and query attenuation, as in
 Figure~\ref{fig:cross-scale-signed}. All tasks are shown. The horizontal
 axis is linear for $|s_g|\leq0.05$ and logarithmic outside that interval;
 the dotted line marks zero.}
 \label{fig:cross-scale-signed-distribution}
\end{figure}

The following three references relate response direction to the intervened
quantity in a Bayesian computation.

\paragraph{Final-contribution reference.}
In the one-sparse construction of Figure~\ref{fig:arch-and-construction}(c),
the final message from feature $j$ contributes $\eta_{1,j}^*x_j$ to the
prediction. Attenuating this contribution gives
\[
 f_{\delta,j}(D,x)=f_1^*(D,x)-\delta\eta_{1,j}^*x_j,
 \qquad
 J^{\mathrm{ideal}}_{rj}=\mathbf1\{r=j\}\eta_{1,j}^*.
\]
Thus its own-feature response relative to $c_j$ is
$J^{\mathrm{ideal}}_{jj}/c_j=\lambda_1q_{1,j}(c)\geq0$ for $c_j\ne0$.
This final-contribution intervention supplies a reference for both the
localization and direction of the learned coefficient response.
For every nonzero response, specificity is one regardless of realized
support membership. Under independent isotropic queries, the increase in
Bayes approximation error from this exact baseline is
$\delta^2(\eta_{1,j}^*)^2\geq0$. Finite-noise posterior inclusion is positive
even for inactive features, so zero inactive response is not required.

\paragraph{Perturbing evidence before competition.}
Write $q_j=q_{1,j}(c)$, $\eta_j^*=\eta_{1,j}^*(c)$,
$\lambda=\lambda_1$, and
$\theta=\theta_1=v^2/[2\tau^2(v^2+\tau^2)]$, with $\tau^2=\sigma^2/n$.
Suppose an intervention replaces only $c_j$ by $(1-\delta)c_j$ before
evaluating $q_r=\exp(\theta c_r^2)/\sum_s\exp(\theta c_s^2)$ and the
posterior coefficients. Differentiating gives
\[
 \frac{\partial q_r}{\partial c_j}
 =2\theta c_jq_r(\mathbf1\{r=j\}-q_j).
\]
The limiting coefficient response is therefore
\begin{equation}
 \begin{aligned}
 J^{\mathrm{evidence}}_{rj}
 &:=\lim_{\delta\to0}
   \frac{\eta_r^*(c)-\eta_r^*(c-\delta c_j\mathbf e_j)}{\delta}
   =c_j\frac{\partial\eta_r^*}{\partial c_j}\\
 &=\lambda q_r c_j\mathbf1\{r=j\}
   +2\theta\eta_r^*c_j^2(\mathbf1\{r=j\}-q_j),
 \end{aligned}
 \label{eq:app-evidence-response}
\end{equation}
where $\mathbf e_j$ is the $j$th coordinate vector. In particular,
$J^{\mathrm{evidence}}_{jj}=\eta_j^*[1+2\theta c_j^2(1-q_j)]$,
while $J^{\mathrm{evidence}}_{rj}=-2\theta\eta_r^*q_jc_j^2$ for $r\ne j$.
Weakening one evidence coordinate thus increases competitors' coefficient
magnitudes to first order. Own-feature signed sensitivity remains
nonnegative. This reference describes evidence rescaling before posterior competition;
its off-diagonal terms distinguish it from final-contribution attenuation.

\paragraph{An alternative signed realization.}
The same Bayesian coefficient can be decomposed as
\[
 \eta_j^*=\lambda c_j-\lambda(1-q_j)c_j.
\]
Consider two additive prediction contributions implementing these terms.
Attenuating only the second, with $q$ and the first contribution held
fixed, gives $J_{jj}^{\mathrm{correction}}=-\lambda(1-q_j)c_j$ and zero
off-diagonal responses. Its signed sensitivity is nonpositive although
the baseline prediction is exactly Bayesian. The same Bayesian predictor can therefore yield either response sign,
depending on the contribution being attenuated. These additive realizations
provide functional references for interpreting the measured signs.

\paragraph{Relation to predictive effects.}
For $e=\widehat\eta_0-\eta_1^*$ and
$\widehat J=\widehat J_{\cdot g}^{(\ell)}$, the coefficient-error change is
\begin{equation}
 \|\widehat\eta_{\delta,\ell g}-\eta_1^*\|_2^2-\|e\|_2^2
 =-2\delta e^\top\widehat J+\delta^2\|\widehat J\|_2^2.
 \label{eq:message-coefficient-error-change}
\end{equation}
Equation~\ref{eq:message-coefficient-error-change} follows by substituting
$\widehat\eta_{\delta,\ell g}=\widehat\eta_0-\delta\widehat J_{\cdot g}^{(\ell)}$
and expanding the square. The identity holds at the measured dose, since
$\widehat J$ is the corresponding finite difference.
Predictive usefulness therefore depends on the response's alignment with
the baseline error as well as its magnitude. The reported $\Delta E$
evaluates the full prediction change against Bayes on held-out queries
(Eq.~\ref{eq:route-prediction-error}).
For population affine projections, Lemma~\ref{lem:coefficient-accounting}
gives the corresponding full error change as
\[
 \Delta E_{\mathrm{pop}}
 =\Delta C+(\zeta_\delta^2-\zeta_0^2)
   +\mathbb E_x[r_\delta(D,x)^2-r_0(D,x)^2],
\]
where $\Delta C$ is the population coefficient-error change. The full error change combines coefficient, intercept, and residual
components. Empirical estimates additionally include projection error and
finite-query cross terms.

\subsection{Robustness to attenuation strength}
\label{app:cross-scale-signed}
\label{app:cross-scale-nano-results}
\label{app:cross-scale-v2-results}

We vary $\delta\in\{0.05,0.1,0.2\}$ under joint context-and-query attenuation,
using the same 128 tasks per dimension. This analysis tests the dose dependence
of combined message effects; Appendix~\ref{app:cross-scale-row-scope} compares
context and query effects separately at $\delta=0.1$.

\paragraph{Alternating-axis model.}
Final-layer active-source sensitivity exceeds inactive-source sensitivity at
all three doses (Figures~\ref{fig:message-dose-d10} and~\ref{fig:message-dose-d20}).
The median active-source error change increases from $-4.6\times10^{-6}$ to
$1.7\times10^{-4}$ at $d=10$, and from $1.4\times10^{-5}$ to $3.6\times10^{-3}$
at $d=20$, as $\delta$ increases from $0.05$ to $0.2$.
Inactive-source changes remain smaller at the largest dose.

\paragraph{Frozen TabPFN v2.}
Final-layer active groups also have higher sensitivity at all three doses.
Their median error change is positive and increases with $\delta$, whereas
attenuating inactive groups decreases error. Thus, the active-source
sensitivity advantage persists across doses in both models, while the sign
of the prediction effect depends on the model and source role.

\begin{figure}[!tbp]
\centering
\begin{minipage}{.95\linewidth}
\centering
\input{figures/layouts/appendix_f_dose_d10}
\end{minipage}
\caption{\textbf{Attenuation-strength comparison at $d=10$.}
$k=1$, 128 tasks, joint context-and-query attenuation.
Panels (a--d): alternating-axis model; (e--h): TabPFN~v2.
For each model, rows show sensitivity and error change; columns show active
and inactive sources. Curves show task medians of role averages; bands are
95\% batch-bootstrap intervals (Appendix~\ref{app:cross-scale-protocol}).
Dose comparisons share bootstrap draws. Sensitivity axes match across
dimensions; error axes are panel-specific.}
\label{fig:message-dose-d10}
\label{fig:message-nano-sensitivity-d10}
\label{fig:message-v2-sensitivity-d10}
\label{fig:message-nano-sensitivity}
\label{fig:message-v2-sensitivity}
\end{figure}

\begin{figure}[!tbp]
\centering
\begin{minipage}{.95\linewidth}
\centering
\input{figures/layouts/appendix_f_dose_d20}
\end{minipage}
\caption{\textbf{Attenuation-strength comparison at $d=20$.}
$k=1$, 128 tasks, joint context-and-query attenuation.
Panels (a--d): alternating-axis model; (e--h): TabPFN~v2.
For each model, rows show sensitivity and error change; columns show active
and inactive sources. Curves show task medians of role averages; bands are
95\% batch-bootstrap intervals (Appendix~\ref{app:cross-scale-protocol}).
Dose comparisons share bootstrap draws. Sensitivity axes match across
dimensions; error axes are panel-specific.}
\label{fig:message-dose-d20}
\label{fig:message-nano-sensitivity-d20}
\label{fig:message-v2-sensitivity-d20}

\end{figure}

\FloatBarrier

\subsection{Extension across support sizes and scope of conclusions}
\label{app:cross-scale-support-size}

Section~\ref{sec:cross-scale-gating} focuses on the one-sparse prior.
We repeat the same intervention at $k=d/2$ to test whether the final-layer active--inactive distinction persists beyond extreme sparsity, and at $k=d$ to test whether feature messages remain functionally important when every feature is active.

\paragraph{Protocol.}
We use the intervention and measurements in Appendix~\ref{app:cross-scale-protocol}, with $\delta=0.1$, $d\in\{10,20\}$, and joint attenuation of context and query rows.
The alternating-axis model uses the final seed-0 checkpoint trained under each matched prior $P_k$; TabPFN~v2 remains frozen across priors.
At each dimension and prior, both models use the same 128 tasks. These are the first 128 original contexts in the corresponding saved active--active pair cohort; only the original context enters the analysis.
Bayes approximation error uses the corresponding $\eta_k^*(c)$ in Eq.~\ref{eq:route-prediction-error}.
Aggregation and batch-bootstrap intervals follow Appendix~\ref{app:cross-scale-protocol}.

At $k=d/2$, TabPFN~v2 has an inactive native feature group in 111 of 128 contexts at $d=10$ and all 128 at $d=20$; inactive-group summaries use these contexts, while active-group summaries
use all 128. The two role summaries therefore cover different task sets at $d=10$.
At $k=d$, every source is active.

\paragraph{Intermediate support: $k=d/2$.}
Final-layer active sources have higher mean specificity and larger median
prediction-error changes than inactive sources in both models and dimensions
(Figure~\ref{fig:support-size-comparison}, rows 1 and 3).
TabPFN~v2 also retains a larger active-source median sensitivity at both
dimensions, as does the alternating-axis model at $d=20$.
In the alternating-axis model at $d=10$, active and inactive median
sensitivities are close ($0.194$ and $0.200$), while specificity is
$0.499$ versus $0.251$. Response localization and prediction effects retain an active--inactive
contrast, while the sensitivity advantage varies across conditions.

\paragraph{Dense support: $k=d$.}
At the dense endpoint every source is active. Source attenuation continues to change the corresponding coefficients and produces positive final-layer median $\Delta E$ in both models and dimensions (Figure~\ref{fig:support-size-comparison}, rows 2 and 4).
Feature-message control therefore extends to dense prediction, where support
selection is unnecessary.

\paragraph{Fidelity of coefficient changes.}
Across the two additional priors and dimensions, final-layer active-source
medians of $\widehat R^2_\Delta$ range from $0.93$ to $0.98$ in the
alternating-axis model and from $0.80$ to $0.85$ in TabPFN~v2
(Eq.~\ref{eq:full-intervention-fidelity}). Coefficient changes therefore
explain most of the centered prediction change, supporting the same
coefficient-based interpretation across support sizes.

\paragraph{Scope of the evidence.}
Across these experiments, feature messages exert feature-aligned control over
prediction coefficients. Under sparse priors, this control varies with source
relevance; under the dense prior, it remains functionally important.
These results locate coefficient control within the computation. Identifying
how relevance is inferred requires tracing the formation of the intervened
messages.

\begin{figure}[!tbp]
 \centering
 \begin{minipage}{.79\linewidth}
 \centering
 {\small\textbf{Alternating-axis model}}\par
 \input{figures/layouts/appendix_e_support_nano}
 \par\smallskip
 {\small\textbf{Frozen TabPFN v2}}\par
 \input{figures/layouts/appendix_e_support_v2}
 \end{minipage}
 \caption{\textbf{Coefficient control across support sizes.}
 Panels (a--f): alternating-axis model; (g--l): frozen TabPFN~v2.
 Within each model, rows show $k=d/2$ and $k=d$; columns show specificity,
 sensitivity, and prediction-error change. Joint attenuation, $\delta=0.1$,
 128 shared tasks per dimension and prior. Colors distinguish dimensions;
 solid and dashed curves denote active and inactive sources. All sources
 are active at $k=d$. Bands are pointwise 95\% batch-bootstrap intervals.
 TabPFN~v2 sources are native two-feature groups.}
 \label{fig:support-size-comparison}
 \label{fig:support-size-nano}
 \label{fig:support-size-v2}
\end{figure}

\section{Notation}
\label{app:notation}

Population BLP coefficients and error components carry no accents; hats denote
fitted predictors and empirical estimates. Stars identify Bayes-optimal
quantities or population optima, and bars denote sample means. A subscript
identifies the predictor, prior, or coordinate; a parenthesized superscript
$(t)$ identifies a test task. Dependence on fixed parameters is suppressed
only within a stated setting.
Controlled-experiment definitions: dense-prior prediction gap
(Eq.~\ref{eq:iv-dense-reference-error}), Bayes approximation error
(Eq.~\ref{eq:iv-relative-error}; finite-query estimate in
Eq.~\ref{eq:iv-error-estimate}), architecture gap
(Eq.~\ref{eq:iv-architecture-gap}), population error components
(Lemma~\ref{lem:coefficient-accounting}; Appendix~\ref{app:coefficient-accounting}), and their empirical estimates
(Eqs.~\ref{eq:iv-estimated-components}--\ref{eq:iv-component-gap-definitions}).

\begin{table}[!ht]
    \centering
    \small
    \renewcommand{\arraystretch}{1.08}
    \caption{Notation for the controlled model, affine projection, and interventions.}
    \label{tab:notation}
    \begin{tabular}{@{}p{0.32\linewidth}p{0.64\linewidth}@{}}
        \toprule
        Symbol & Meaning \\
        \midrule
        $D=(X,y)$; $n,d$ & Context, context size, and feature dimension;
        $X\in\mathbb R^{n\times d}$, $y\in\mathbb R^n$. \\
        $x\in\mathbb R^d$; $x_i^\top$ & Independent query vector;
        row $i$ of the context design. Within a query, $x_j$ denotes coordinate $j$. \\
        $\beta$; $S$; $k$; $P_k$ & Data-generating regression coefficients,
        their active support, its size, and the corresponding coefficient prior. \\
        $v^2$; $\sigma^2$; $\tau^2$ & Total prior signal energy,
        response-noise variance, and $\sigma^2/n$. \\
        $c=X^\top y/n$ & Sufficient statistic for the coefficient posterior. \\
        $\pi_k(A\mid c)$; $q_{k,j}(c)$ & Posterior probability of support
        $A$; posterior inclusion probability of coordinate $j$. \\
        $\lambda_k$; $\theta_k$ & Conditional Gaussian shrinkage factor;
        scale multiplying squared evidence in the support posterior. \\
        $\eta_k^*(c)$; $f_k^*(D,x)$ & Bayes coefficient map and prediction
        $x^\top\eta_k^*(c)$; an additional index $n$ exposes context-length dependence. \\
        $\zeta_f(D)$; $\eta_f(D)$ & Population affine-BLP intercept and
        slope of predictor $f$ under the query distribution. \\
        $\widehat\zeta_f(D)$; $\widehat\eta_f(D)$ & Finite-query
        least-squares estimates of the BLP intercept and slope. \\
        $r_f(D,x)$ & Residual after subtracting the population affine BLP;
        $\widehat r_f$ uses the fitted BLP. \\
        $C_f$; $\zeta_f^2$; $N_f$ & Coefficient error, squared intercept,
        and nonlinear residual error, respectively. \\
        $\Delta_k^{\mathrm{dense}}(D)$ & Mean squared prediction difference
        between the dense-prior rule and Bayes for dataset $D$. \\
        $\mathcal E_{a,k}$; $\widehat{\mathcal E}_{a,k}$
        & Task-level normalized Bayes approximation error and its finite-query
        estimate; both refer to one dataset and one training seed. \\
        $\widehat C_{a,k}$; $\widehat Z_{a,k}$; $\widehat N_{a,k}$
        & Normalized coefficient, squared-intercept, and residual errors
        for one dataset and one training seed. \\
        $R_k(f)$ & Population risk against noisy query responses
        (Appendix~\ref{app:coefficient-accounting}). \\
        $a$; $\widehat f_{a,k}$ & Architecture or model label; trained
        controlled predictor under $P_k$; $a$ is row-token or alternating-axis. \\
        $T$; $t$ & Test-task count and task index. \\
        $Q_{\mathrm{proj}}$; $Q_{\mathrm{eval}}$ & Numbers of
        projection and evaluation queries; the task index is omitted for
        a fixed context. \\
        $\widehat G_k$; $\widehat G_k^{U}$
        & Per-dataset row-token minus alternating-axis gaps in normalized
        prediction error and component $U\in\{C,\zeta,N\}$, for one training
        seed. The $\zeta$ label denotes squared intercepts. \\
        $\gamma_{\ell g}$; $\delta$; $\widehat J_{rg}^{(\ell)}$ & External
        message multiplier, attenuation fraction, and fitted coefficient
        response per unit attenuation $(\widehat\eta_{0,r}-\widehat\eta_{\delta,\ell g,r})/\delta$. \\
        $I(g)$; $u_g$; $w_g^{(\ell)}$ & Source token's raw-feature indices,
        $\sum_{r\in I(g)}|c_r|$, and coefficient sensitivity. \\
        $\widehat R^2_{\mathrm{aff}}$; $\widehat R^2_\Delta$ & Held-out
        affine-fit fidelity and fidelity to the centered intervention effect. \\
        \bottomrule
    \end{tabular}
\end{table}

\begingroup
\small
In the production experiments of Section~\ref{sec:part-i-production-models},
$M$ denotes the released model, $N$ sample size, and $\rho$ the null-feature
ratio.
$S_j^{\mathrm{perm}}$ and $S_j^{\mathrm{PDP}}$ are sensitivity scores, distinct
from support $S$. $\mathcal N,\mathbb E,I_d$ denote the Gaussian distribution,
expectation, and identity matrix. Held-out $R^2$ values may be negative.
In Section~\ref{sec:cross-scale-gating}, $\ell$ indexes the attenuated layer,
$g$ the source token, and $r$ an affected coefficient;
$m_{i,p\leftarrow g}^{(\ell)}$ is the projected message to receiver $p$ at row $i$,
and $\mathcal R$ specifies the intervened context rows, query rows, or both.
$\widehat J_{rg}^{(\ell)}$ is the coefficient response per unit attenuation.
$w_g^{(\ell)}=\sum_{r\in I(g)}|\widehat J_{rg}^{(\ell)}|/u_g\geq0$
is coefficient sensitivity. Table subscripts
$\mathrm{act}$ and $\mathrm{inact}$ denote averages over active and inactive
sources within a context, respectively.
$P^{(\ell)}$ is the equal-source mean of own-feature coefficient-response fractions; $P_R^{(\ell)}$ restricts that mean to source role $R$. Both are distinct from the prior $P_k$.
Following the main text, $E(f;D)$ and $\Delta E_{\ell g}$ denote held-out
Bayes approximation error and its change under attenuation without hats;
$\Delta\widehat C_{\ell g}$ is the corresponding change in fitted coefficient
error. These intervention errors use original response units and do not
use the task-energy normalization of the architecture comparison.
\endgroup

\end{document}